\def\eqref#1{equation~\ref{#1}}
\def\1{\bm{1}}
\DeclareMathAlphabet{\mathsfit}{\encodingdefault}{\sfdefault}{m}{sl}
\SetMathAlphabet{\mathsfit}{bold}{\encodingdefault}{\sfdefault}{bx}{n}
\definecolor{lightblue}{RGB}{100,149,237} % 这里用的是 cornflowerblue，你可以调整RGB值
\definecolor{darkblue}{RGB}{60,100,140}
\definecolor{myverylightgray}{gray}{0.97}
\newtheorem{theorem}{Theorem}[section]
\newtheorem{lemma}[theorem]{Lemma}
\newtheorem{proposition}[theorem]{Proposition}
\newtheorem{remark}[theorem]{Remark}
\newtheorem{assumption}[theorem]{Assumption}
\newtheorem{definition}[theorem]{Definition}
\definecolor{rebuttalred}{RGB}{139, 0, 0}
\newenvironment{theoremgray}[1][]{%
  \begin{tcolorbox}[
    colback=myverylightgray,  % 背景色
    boxrule=0pt,              % 边框宽度=0
    frame empty,              % 完全去掉边框
    left=0pt,right=0pt,top=2pt,bottom=2pt % 内边距可调
  ]
  \begin{theorem}[#1]}%
  {\end{theorem}\end{tcolorbox}}
  {\end{lemma}\end{tcolorbox}}
\newenvironment{propositiongray}[1][]{%
  \begin{tcolorbox}[
    colback=myverylightgray,  % 背景色
    boxrule=0pt,              % 边框宽度=0
    frame empty,              % 完全去掉边框
    left=0pt,right=0pt,top=2pt,bottom=2pt % 内边距可调
  ]
  \begin{proposition}[#1]}%
  {\end{proposition}\end{tcolorbox}}
\newenvironment{remarkgray}[1][]{%
\begin{tcolorbox}[
    colback=myverylightgray,  % 背景色
    boxrule=0pt,              % 边框宽度=0
    frame empty,              % 完全去掉边框
    left=0pt,right=0pt,top=2pt,bottom=2pt % 内边距可调
  ]
  \begin{remark}[#1]}%
  {\end{remark}\end{tcolorbox}}
\title{On the Design of One-step Diffusion via \\ Shortcutting Flow Paths }
\author{Haitao Lin\thanks{Equal contribution.}%
\footnotemark[1]~~\textsuperscript{1},\;
Peiyan Hu\footnotemark[1]~~\textsuperscript{1,2} \& Minsi Ren\textsuperscript{1}  \\
\texttt{\{linhaitao, hupeiyan, renminsi\}@westlake.edu.cn} \\
\FixAnd 
Zhifeng Gao\textsuperscript{3},  \\ 
\texttt{gaozf@dp.tech} \\
\AND
Zhi-Ming Ma\textsuperscript{2}, \\
\texttt{mazm@amt.ac.cn} \\
\And
Guolin Ke\thanks{Corresponding authors.}~~\textsuperscript{3},  \\
\texttt{kegl@dp.tech}\\
\And Tailin Wu\footnotemark[2]~~\textsuperscript{1} \& Stan Z. Li\footnotemark[2]~~\textsuperscript{1} \\
\texttt{\{wutailin, stan.zq.li\}@westlake.edu.cn}
\AND
\textit{\normalfont{\textsuperscript{1}Department of Artificial Intelligence, School of Engineering, Westlake University;}}\\
{\textsuperscript{2}Academy of Mathematics and Systems Science, Chinese Academy of Sciences;}\\
{\textsuperscript{3}DP Technology, Beijing.}
}
\begin{document}
\doparttoc
\faketableofcontents
\maketitle\vspace{-2em}
\begin{abstract}
Recent advances in few-step diffusion models have demonstrated their efficiency and effectiveness by shortcutting the probabilistic paths of diffusion models, especially in training one-step diffusion models from scratch (\emph{a.k.a.} shortcut models). However, their theoretical derivation and practical implementation are often closely coupled, which obscures the design space.
To address this, we propose a common design framework for representative shortcut models. This framework provides theoretical justification for their validity and disentangles concrete component-level choices, thereby enabling systematic identification of improvements. With our proposed improvements, the resulting one-step model achieves a new state-of-the-art FID50k of 2.85 on ImageNet-256×256 under the classifier-free guidance setting with one step generation, and further reaches FID50k of 2.53 with 2× training steps. Remarkably, the model requires no pre-training, distillation, or curriculum learning.
We believe our work lowers the barrier to component-level innovation in shortcut models and facilitates principled exploration of their design space.
\end{abstract}
\vspace{-0.5em}
\section{Introduction}
\vspace{-0.5em}
Diffusion-based models have become the dominant paradigm in deep generative modeling~\citep{sohldickstein2015deepunsupervisedlearningusing, hodiffusion, sde}, progressively transforming samples from a prior distribution toward the data distribution. However, dozens or even hundreds of neural function evaluations~(NFEs) are typically required, resulting in slow inference and limited real-time use~\citep{song2020generativemodelingestimatinggradients,salimans2022progressivedistillationfastsampling,dpmplus,dpmv3}. Consistency models~\citep{cm,icm} are pioneering works that attempt to achieve one-step generation~\citep{Luo2023LatentCM,wang2023diffusiongantraininggansdiffusion,yin2024improved,yin2024onestepdiffusiondistributionmatching,salimans2024multistepdistillationdiffusionmodels,geng2023onestep,geng2025consistency}, but a costly two-stage training process is required, \emph{i.e.}, first training a reliable diffusion model and then distilling velocity or score from it. Despite the costly two-stage training, they offer fast generation, which motivates further research into improving training efficiency.

Recently, one-step diffusion models trained from scratch have emerged, such as Consistency Training~(CT)~\citep{cm} as the training-from-scratch variant of consistency models, Inductive Moment Matching~(IMM)~\citep{imm}, and Shortcut Diffusion~(SCD)~\citep{scd}. These models aim to learn direct shortcut mappings between intermediate states along the probability flow  trajectories of the probability flow, thus enabling one-step generation; we refer to such models as \emph{shortcut models}. Building on this principle, continuous-time shortcut models such as sCT~\citep{scm} and MeanFlow~\citep{meanflow}  have been introduced, achieving state-of-the-art performance in one-step generation for image synthesis. Their efficiency and effectiveness in both training and generation have stimulated further exploration in improving their sampling fidelity.

Although these models share the same objective, the barrier to understanding the working mechanisms remains non-trivial. Specifically, the literature on them is dense on theory, derivations of method formulations and the corresponding learning objectives, as well as technical details like time samplers and curriculum, and training tricks, \emph{etc.}, leading to a less intuitive design paradigm. As a result, it may inadvertently obscure the underlying design space, making each carefully crafted module appear indispensable, so that altering a single component seems to threaten the integrity of the entire system. 

Therefore, we first contribute to \emph{proposing a common design framework for these shortcut models from a practical standpoint}.
We summarize that both discrete- and continuous-time variants share the principle of approximating two-step flow map targets with one-step parameterized predictions. We also provide a general theoretical justification for the validity of this design paradigm. This framework allows us to disentangle the concrete modules within these models, offering clearer insights into how the components interact and what flexibility remains in shaping the overall method design.

Secondly, our contribution lies in \emph{elucidating the design space of shortcut models}. We decompose each model into distinct modules aligned with their learning objectives, and then conduct an in-depth empirical investigation and theoretical analysis of different module combinations. In summary, we demonstrate the advantages of linear paths in settings of shortcut model trained from scratch, discuss the scenarios where continuous-time variants exhibit superior sampling fidelity over discrete-time ones, and figure out the impacts of time samplers on training convergence.

Further, the third set of contributions centers on \emph{improvements to the training of continuous-time shortcut models}. Building on the previous analysis, we introduce three technical refinements for enhancing training stability: (i)~the use of plug-in velocity and its correction under classifier-free-guidance training, (ii)~a gradual time sampler, and (iii)~several established training techniques such as variational adaptive loss weighting. Our experiments demonstrate that these techniques consistently improve performance. Finally, we conduct a scaling-up evaluation on ImageNet-256$\times$256. By incorporating the proposed improvements into our modeling framework, we achieve an FID50k of 2.85 under one-step generation, setting a new state of the art among shortcut models trained from scratch. We believe that our work facilitates component-level innovation and thereby enables more systematic and targeted exploration of the design space of shortcut models.

\section{Expressing One-step Diffusion through Shortcut Models}
\label{sec:flowmap}

\subsection{Shortcutting flows with flow map solvers}
\paragraph{Diffusion models.}Let $p_{\text{data}}(\bm{x})$ be the data distribution, and 
$p_\text{prior} = \mathcal{N}(\bm{0}, \sigma^2\bm{\mathrm{I}})$ be a Gaussian distribution 
with zero mean and variance $\sigma^2$. In the following, we write $\sigma =1$ by default for notational simplicity.
According to stochastic interpolants~\citep{stochasticinterp}, diffusion models establish a probabilistic path between 
$p_0 = p_{\text{data}}$ and $p_1 = p_{\text{prior}}$ such that 
$
\bm{x}_t = \alpha_t \bm{x}_0 + \sigma_t \bm{\varepsilon},
$
where $\bm{x}_0 \sim p_0$, $\bm{\varepsilon} \sim p_1$, and $\alpha_t, \sigma_t \geq 0$; with boundary conditions $\alpha_0 = \sigma_1 = 1$ and $\alpha_1 = \sigma_0 = 0$. Both the forward noising and inverse denoising processes are governed by the probability flow ODE~(PF-ODE) as $\dot{\bm{ x}}_t = \bm{v}_t(\bm{x}_t)$, where $\bm{v}_t(\bm{x})$ is the marginal \emph{velocity}
% which solves the corresponding transport equation 
 $\bm{v}_t(\bm{x}) = \dot{\alpha}_t \mathbb{E}(\bm{ x}_0 | \bm{x}_t = \bm{x}) + \dot{\sigma}_t \mathbb{E}(\bm{\varepsilon} | \bm{x}_t = \bm{x})$.

\paragraph{Flow paths.} Probabilistic paths satisfying the above are defined as {flow paths}.  For example, with the reformulation by \citet{scm}, the EDM preconditioner path~\citep{edm} can be transformed to a cosine path~\citep{sit} with $\sigma = \sigma_{\text{data}}$,
$\alpha_t = \cos(\frac{\pi}{2}t)$, and $\sigma_t = \sin(\frac{\pi}{2}t)$; in Rectified Flow~\citep{recflow}, $\sigma=1$, $\alpha_t=1-t$ and $\sigma_t=t$, leading to the linear path~\citep{fm,ifm}. Since $\bm{v}_t(\bm{x})$ is inaccessible, the conditional path is established for tractable training, where the corresponding conditional velocity is $\bm{v}_{t|0} = \bm{v}_t(\bm{x}_t|\bm{x}_0) = \dot\alpha_t\bm{x}_0 + \dot\sigma_t\bm{\varepsilon}$ that neural networks $F^{\theta}(\bm{x}_t,t)$ are trained to approximate. In sampling, one can first sample $\bm{x}_1 = \bm{\varepsilon} \sim p_{\text{prior}}$, and then simulate a trajectory of the flow through the PF-ODE as $\dot {\bm{x}}_t = F^{\theta}(\bm{x}_t,t)$.

\paragraph{Flow maps.} In order to shortcut established flow paths from time $t$ to $r$ ($0\leq r\leq t\leq 1$), we introduce the flow map notation~\citep{flowmap, tutorial, boffi2025how} to express the design frame for simplicity. A flow map $X_{t,r}$  is defined as the unique map such that
    $X_{t,r}(\bm{x}_t) = \bm{x}_r$,  \text{for all } $(t,r) \in [0,1]^2$,
    where $\bm{x}_r$ is the solution of PF-ODE, which corresponds to \emph{position} in physics. According to the PF-ODE, one can easily derive the flow map solution through
\begin{equation}
\bm{x}_r = X_{t,r}(\bm{x}_t) = \bm{x}_t + \int_t^r\bm{v}_\tau (\bm{x}_\tau) d\tau,     \label{eq:velocityint}
\end{equation}
where $\int_t^r\bm{v}_\tau (\bm{x}_\tau) d\tau$ corresponds to \emph{displacement} in physics.

\paragraph{Flow map solvers.} With the definition of \emph{average velocity} over time~\citep{meanflow} as $\bm{u}_{t,r}$,  we can rewrite Eq.~\ref{eq:velocityint} to express the flow map solution  $\bm{x}_r = {X}_{t,r}(\bm{x}_t)$ through
\begin{equation}
\begin{aligned}
X_{t,r}(\bm{x}_t) &= \bm{x}_t + (r-t)\cdot\bm{u}_{t,r}(\bm{x}_t) \\
   & \text{where }\quad\bm{u}_{t,r}(\bm{x}_t) = \frac{1}{r - t} \int_t^r \bm{v}_\tau(\bm{x}_\tau) \, d\tau , \label{eq:meanvelo}
\end{aligned}
\end{equation} 
 or to infer ${X}_{t,r}(\bm{x}_t)$ with the \emph{instantaneous velocity} $\bm{v}_t$, through DDIM-solver~\citep{ddim} as first-order approximation of DPM-solver~\citep{dpm}, which reads
\begin{equation}
\begin{aligned}
 {X}_{t,r}(\bm{x}_t) \approx \mathrm{DDIM}(\bm{x}_t,&\bm{v}_t,t,r)=\bar\alpha_{t,r} \bm{x}_t + \bar \beta_{t,r} \bm{v}_t, \label{eq:ddimv} 
\end{aligned}
\end{equation}
where $\bar{\alpha}_{t,r} = \cos(\frac{\pi}{2}(r-t))$ and $\bar{\beta}_{t,r} =\frac{2}{\pi} \sin(\frac{\pi}{2}(r-t))$ in cosine paths; and $\bar{\alpha}_{t,r} = 1$ and $\bar{\beta}_{t,r} = r-t$ in linear paths. The general formulation and detailed derivation are given in Appendix~\ref{app:ddim}. 

With the solvers, if a model learns the solution to the flow maps from any $t$ to $r$, it can bypass the costly iterative procedure and achieve one-step generation by predicting $X^\theta_{1,0}(\bm{x}_1)$.

\subsection{Learning to shortcut flow paths}
\label{sec:designframe}
\begin{figure}
    \centering
    \includegraphics[width=0.97\textwidth, trim=142 220 150 40, clip]{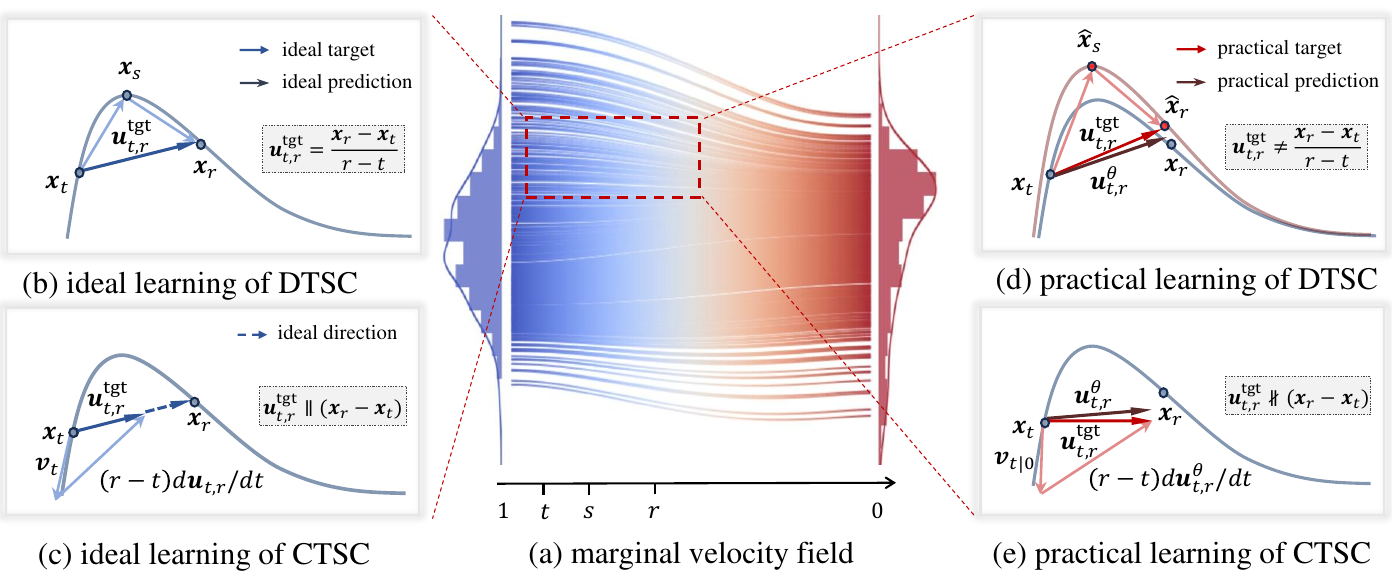}
    \caption{The physical picture of ideal and practical learning of discrete- and continuous-time shortcut models~(DTSC\&CTSC) 
     where $\bm{u}_{t,r}^{\mathrm{tgt}}$ denotes the target obtained by the two-step flow maps, and $\bm{u}_{t,r}^{\theta}$ is the models' prediction for one-step flow maps.  (a) shows the marginal velocity field from $\mathcal{N}(0, 1)$ to a Gaussian Mixture. (b) and (c) illustrate the ideal learning of DTSC and CTSC, where $\bm{x}_r$ is sampled from the same trajectory of PF-ODE, and thus $\bm{u}_{t,r}^{\mathrm{tgt}}$ serves as the correct supervisory signal for training. (d) and (e) depict the practical learning of DTSC and CTSC, where the targets deviate from the trajectory, thus leading to models' prediction drifts away correspondingly.}~\label{fig:phyintuition} \vspace{-1em}
\end{figure}

\paragraph{Overall design frame.} We claim that the previous methods shortcut the flow paths of a diffusion model by regularizing a one-step flow map prediction against the two-step flow map target.  In practice, they first sample time points $r,s,t \sim p(\tau) $ with $r \leq s \leq t$, and then  use the consistency property~\citep{tutorial,flowmap} as detailed in Appendix~\ref{app:preliminary} to design a shortcut model trained from scratch, which reads 
\begin{equation}
    X_{s,r}(X_{t,s}(\bm{x}_t)) = X_{t,r}(\bm{x}_t). \label{eq:consistprop}
\end{equation}
 Specifically, these methods aim to construct a two-step flow map target from $t$ to $s$, then to $r$, \emph{i.e.},
$
X_{s,r} \circ X_{t,s}(\bm{x}_t),
$
and then make the parameterized flow map $X^\theta_{t,r}(\bm{x}_t)$ to approximate this target in a single step. It allows the model to achieve one-step generation by predicting $\bm{x}_0^\theta =  X^\theta_{1,0}(\bm{x}_1)$ with $\bm{x}_1 = \bm{\varepsilon} \sim p_1$. As a result, their learning objectives $\mathcal{L}$ can be expressed as
\begin{equation}
    \arg\min_\theta 
\mathbb{E}_{r,s,t\sim p(\tau), \,\bm{x}_t\sim p_t}
\big[\underbrace{
    w(r,s,t) \cdot d(    \overbrace{X^\theta_{t,r}(\bm{x}_t)}^{\text{one-step prediction}},\quad \overbrace{\mathrm{sg}(\hat X_{s,r}\circ \hat X_{t,s}(\bm{x}_t))}^{\text{two-step target}}\quad 
)}_{l(\bm{x}_t, r,s,t;\theta)}
\big],
\label{eq:flowmapconsist}
\end{equation}  
where $w$ is the weight term, $\hat X$ and  $X^\theta$ are flow maps obtained with the conditional velocity or the neural network $F^{\theta}$, $d(\cdot,\cdot)$ is a loss metric function, such as the squared $l_2$-distance, and $\mathrm{sg}(\cdot)$ is the stop gradient operator in backpropagation. We call $\hat X_{s,r}\circ \hat X_{t,s}(\bm{x}_t)$ two-step flow map targets and $X^\theta_{t,r}(\bm{x}_t)$ one-step flow map predictions, and write the inner loss term of expectation as $l(\bm{x}_t, r,s,t; \theta)$.

\paragraph{Time sampler.} To construct the training objective,  time points $\{r,s,t\}$ are sampled with $r\leq s\leq t$.  
We refer to this as the discrete-time shortcut model (DTSC) when $\{r,s,t\}$ are discrete time points. For example, in CTs, $r$ is fixed at $0$, and $t$ and $s$ are sampled from a non-uniform discretization curriculum that gradually changes from sparse to dense such that $t$ is always chosen to be one time step ahead of $s$; SCD divides the time interval
into equal segments based on different powers of $2$, 
and samples uniformly between adjacent grid points with spacing $h$, as denoted by $(t,h) \sim \mathrm{Uniform}\log_{2}(t,h)$ ; IMM samples time with $r$ and $t$ uniformly from $[0,1]$, and $\{s,t\}$ separated by a fixed gap.  
As the gap between two time points becomes infinitesimal, the discrete-time shortcut model converges to a continuous-time form~(CTSC). For example, sCTs and MeanFlows recover this by setting $s \rightarrow t$.  

\paragraph{Network parameterization and flow map solution.} We denote by $F^{\theta}$ the neural network with parameters $\theta$, whose architecture is instantiated as  U-Net~\citep{sde} in the pixel space, or as DiT~\citep{dit} / SiT~\citep{sit} in the latent space.  To obtain the flow map, Eq.~\ref{eq:velocityint}, Eq.~\ref{eq:meanvelo}, and Eq.~\ref{eq:ddimv} can all serve as solutions.  
Since the integral term $\int_{t}^{r}\bm{v}_\tau (\bm{x}_\tau)d\tau$ in Eq.~\ref{eq:velocityint} is intractable in general, the DDIM solver with instantaneous velocity is adopted practically when estimating the flow map with $\bm{v}_t^{\theta}$ parameterized by $F^{\theta}$ or the conditional velocity $\bm{v}_{t|0}$ through Eq.~\ref{eq:ddimv}. 
Alternatively, if $F^{\theta}$ parameterizes average velocity $\bm{u}^{\theta}_{t,r}$, the flow map can be obtained directly through Eq.~\ref{eq:meanvelo}.

\subsection{Examples: discrete- and continuous-time shortcut models}
\label{sec:examples}
\begin{table}[]\vspace{-1em}
\caption{Specific design choices employed by different shortcut models. `sg EMA decay' means that the parameters $\theta$ in the stop-gradient targets are updated in a delayed manner with EMA.   \vspace{-0.5em}}~\label{tab:methodconclude}
\resizebox{\linewidth}{!}{
\begin{tabular}{lllllll}
\toprule
                                                                              &          & CT                                                                                                                                                                                                                                                                                                                                                                                                 & SCD                                                                                                          & IMM                      & sCT\scriptsize (note: $\triangle$)                                                                                                                                       & MeanFlow                                                                                                                                                                                                 \\\midrule 
\multicolumn{2}{l}{\textbf{Diffusion basis}$^\dag$}                    &                                                                                                                                                                                                                                                                                                                                                                                                                                      &                                                                                                                &                            &                                              &                                                                                                                                                                                                           \\ [-0.1ex]

\multicolumn{2}{l}{Flow path}                  & Cosine                                                                                                                                                                                                                                                                                                                                                                                             & Linear                                                                                                         & Linear                     & Cosine                                                                                                                                   & Linear                                                                                                                                                                                                    \\[3.0ex]
\multicolumn{1}{c}{\multirow{2}{*}{\begin{tabular}[c]{@{}c@{}}Network\\ $F^\theta$\end{tabular}}} & Architecture    & U-Net                                                                                                                                                                                                                                                                                                                                                                                         & DiT                                                                                                            & DiT                        & U-Net                                                                                                                              & DiT                                                                                                                                                                                                      \\ [1.2ex]
                                                                              & Output   & $\bm{v}^{\theta}$                                                                                                                                                                                                                                                                                                                                            & $\bm{u}^{\theta}$                                            & $\bm{v}^{\theta}$        & $\bm{v}^{\theta}$                                                                                                                      & $\bm{u}^{\theta}$                                                                                                                                                                                         \\
\midrule
\multicolumn{2}{l}{\textbf{Flow map construction}} & 
&                                                                                                                &                            &                                                                                                                                        &                                                                                                                                                                                                        \\ [1.2ex]
\multicolumn{2}{l}{Time sampler}               & \small \begin{tabular}[c]{@{}l@{}} \scriptsize (note: $*$)  \\ $t = \frac{\pi}{2}\arctan([\sigma_{\text{max} }^{1/\rho}+ $\\\quad \quad$ \frac{\tau}{K}(\sigma_{\text{min} }^{1/\rho} - \sigma_{\text{max} }^{1/\rho})]^{\rho})$\\ $s =  \frac{\pi}{2}\arctan([\sigma_{\text{max} }^{1/\rho}+$ \\ \quad \quad $ \frac{\tau+1}{K}(\sigma_{\text{min} }^{1/\rho} - \sigma_{\text{max} }^{1/\rho})]^{\rho})$\\ $r=0 $, where \\$\tau \sim \mathcal{U}\{0,\ldots, K-1\}$ \\ \\ \\ \\ \end{tabular} & \small \begin{tabular}[c]{@{}l@{}}   \\ $t=\tau$ \\ $s = \tau - h$\\ $r = \tau -2h$ \\ and with $p_{\text{teq}}$, \\ $r=s=t$,\\  where  $\tau,h \sim$\\$ \text{Uniform}\log_{2}(\tau,h)$ \\ \\ \\ \\ \end{tabular} & \small  \begin{tabular}[c]{@{}l@{}}\scriptsize (note: $\star$) \\ $t \sim$ $\mathcal{U}[0,1]$ \\ $n_s = \frac{1}{1-t} - \frac{1}{2^\gamma} $\\$s =\frac{n_s}{n_s + 1} $ \\ $r \sim \mathcal{U}[0,t]$\\ \\ \\ \\ \\ \\ \\ \end{tabular}& \small  \begin{tabular}[c]{@{}l@{}} $t = \frac{2}{\pi}\mathrm{arctan}(\exp(\tau))$ \\ $s=t - dt$ \\ $r=0$,  where \\ $ \tau \sim \mathcal{N}(P_\text{mean}, P^2_{\text{std}})$ \\ \\  \\ \\ \\ \\ \scriptsize (note: \ddag) \end{tabular}  & \small  \begin{tabular}[c]{@{}l@{}} $r,t =\{\text{sigmoid}(\tau_1),$\\$ \quad \quad \quad \text{sigmoid}(\tau_2)\}$ \\ \emph{s.t.}   $r\leq t$, \\  $s=t - dt$,  and with\\ $p_{\text{teq}}, r=s=t$, where \\ $\tau_1, \tau_2 \sim \mathcal{N}(P_\text{mean}, P^2_{\text{std}})$ \\ \\  \\\\\scriptsize (note: \ddag)    \end{tabular}  \\ [6.0ex]

\multicolumn{1}{c}{\multirow{2}{*}{\begin{tabular}[c]{@{}c@{}}Two-step\\ target\end{tabular}}}    & 1st-step~($\bm{\hat{x}}_s$)  &\small  $ \mathrm{DDIM}(\bm{x}_t,\bm{v}_{t|0},t,s)$                                                                                                                                                                                                                                                                                                                                                                                                   & \small   $ \bm{x}_t - h \bm{u}_{t, s}^{\theta}(\bm{x}_t)$                                                                                                             &\small   $ \mathrm{DDIM}(\bm{x}_t,\bm{v}_{t|0},t,s)$                               &\small      $ \mathrm{DDIM}(\bm{x}_t,\bm{v}_{t|0},t,s)$                                                                                                                                     &\small                                             $ \mathrm{DDIM}(\bm{x}_t,\bm{v}_{t|0},t,s) $                                                                                                                                                              \\[1.1ex]
                                                                              & 2nd-step~($\bm{\hat{x}}_r$) &\small    $ \mathrm{DDIM}(\bm{\hat x}_s,\bm{v}^\theta_s,s,r)$                                                                                                                                                                         &\small    $  \bm{\hat x}_s - h \bm{u}_{s, r}^{\theta}(\bm{\hat x}_s)$                                                                                                               &\small $ \mathrm{DDIM}(\bm{\hat x}_s,\bm{v}^\theta_s,s,r)$                               &\small    $ \mathrm{DDIM}(\bm{\hat x}_s,\bm{v}^\theta_s,s,r)$                                                                                                                                       &\small  $\bm{\hat x}_s + (r-s)\bm{u}^{\theta}_{s, r}(\bm{x}_s)$                                                                                                                                                                                                           \\[1.2ex]
\multicolumn{2}{c}{\begin{tabular}[c]{@{}c@{}}One-step prediction ($\bm{x}_r^{\theta}$)\end{tabular}}       &\small       $\mathrm{DDIM}(\bm{x}_t, \bm{v}_t^{\theta}, t,r)$                                                                                                                                                                                                                                                                                                                                                                                                &\small  $\bm{x}_t - 2h\bm{u}_{t, r}^\theta(\bm{x}_t)$                                                                                                               &\small       $\mathrm{DDIM}(\bm{x}_t, \bm{v}_t^{\theta}, t,r)$                         &\small      $\mathrm{DDIM}(\bm{x}_t, \bm{v}^\theta_t, t,r)$                                                                                                                                    &\small      $\bm{x}_t + (r-t)\bm{u}_{t, r}^\theta(\bm{x}_t)$                                                                                                                                                                                                       \\[1.2ex]
\midrule
\multicolumn{3}{l}{\textbf{Training}}                                                                                                                                                                                                                                                                                                                                                                                                                                                                  &                                                                                                                &                            &                                                                                                                                          &                                                                                                                                                                                                           \\[1.2ex]
\multicolumn{2}{l}{Loss metric $d$}                                                          &       LPIPS                                                                                                                                                                                                                                                                                                                                                                                            &                                                Squared $l_2$-distance                                                                &    Grouped kernel                        &     Squared $l_2$-distance                                                                                                                                     &    Squared $l_2$-distance                                                                                                                                                                                                       \\[1.2ex]
\multicolumn{2}{l}{sg EMA decay}                                                         &   \ding{51}                                                                                                                                                                                                                                                                                                                                                                                                  &                                        \ding{55}                                                                    &                        \ding{55}          &     \ding{55}                                                                                                                                          &      \ding{55}   \\
\bottomrule
\end{tabular}
} \vspace{0.3em}
{\scriptsize
\begin{spacing}{1.2}
\dag Demonstration of the configuration on ImageNet. *In CT, $\rho, \sigma_\text{max}, \sigma_{\text{min}}$ are adopted from EDM, usually set as $7, 0.001$ and $80$. $K$ gradually increases from $K_{\text{min}}$ (usually set as 2) to $K_{\text{max}}$ (usually about 200); In CT's original paper, network output's are the score function and the reformulation is given in Appendix~\ref{app:flowpath}. $\star$$\gamma$ is usually set as 12.
  \ddag  In sCT and MeanFlow, since $s = t - dt$, which involves differentiation \emph{w.r.t.} $t$, terms in loss metrics are normalized by $d t$. The expression is an intuitive analogy, while the derivation is given in Appendix~\ref{app:flowmapconstruct}.  $\triangle$Although sCT is originally initialized from a teacher diffusion model, we suppose that it can attain comparable performance when trained from scratch, similar to the behavior observed in CT and MeanFlow. 
  \end{spacing}
  } \vspace{-1em}
\end{table}

\paragraph{Discrete-time shortcut models.} CTs, SCDs and IMMs are representative DTSCs. 
If parameterizing velocity with neural networks as  
$F^{\theta}(\bm{x}_t, t) = \bm{v}^{\theta}_t(\bm{x}_t)$,
we can then adopt the DDIM as flow map solvers. Specifically, we first use the parameterized velocity $\bm{v}^{\theta}_t(\bm{x}_t)$ to solve the flow map $\bm{x}_r^\theta = X^\theta_{t,r}(\bm{x}_t)$, which serves as the one-step prediction.
For the two-step target, we alternate between the conditional velocity $\bm{v}_{t|0}$ to obtain $\hat{\bm{x}}_s = \hat{X}_{t,s}(\bm{x}_t)$, and the parameterized velocity $\bm{v}^{\theta}_s(\hat{\bm{x}}_s)$ to obtain $\hat{\bm{x}}_r = \hat{X}_{s,r}(\hat{\bm{x}}_s)$.
In this way, we can derive $l(\bm{x}_t, r,s,t; \theta)$ in Eq.~\ref{eq:flowmapconsist} as 
\begin{equation}
    l_{\mathrm{ct}}(\bm{x}_t, r,s,t; \theta) = \mathrm{LPIPS}\Big(\mathrm{DDIM}(\bm{x}_t, \bm{v}_t^{\theta}(\bm{x}_t), t, r), \;\mathrm{sg}\big(\mathrm{DDIM}(\hat{\bm{x}}_s, \bm{v}_s^{\theta}(\hat{\bm{x}}_s), s, r)\big)\Big), 
    \label{eq:ctloss}
\end{equation}
where the loss metric is LPIPS~\citep{lpips} applied directly in pixel space with $w=1$, and $l_{\mathrm{ct}}(\bm{x}_t, r,s,t; \theta)$ coincides with its formulation in CTs with details shown in Appendix~\ref{app:ct}.
Alternatively, if we parameterize the average velocity as $\bm{u}^\theta_{t,r}(\bm{x}_t) =F^\theta(\bm{x}_t, t,r) $, and estimate both the one-step prediction and the two steps in the target with neural networks $F^\theta$, we thus obtain the  $l(\bm{x}_t, r,s,t; \theta)$ in SCD through Eq.~\ref{eq:meanvelo}. Due to equi-spacing time points as $t-s=s-r=h$, it reads
\begin{equation}
    l_{\mathrm{scd}}(\bm{x}_t, r,s,t; \theta) = \left\| 
\bm{u}^{\theta}_{t,r}(\bm{x}_t)
- \frac{1}{2}\mathrm{sg}\!\left( 
\bm{u}^{\theta}_{t,s}(\bm{x}_t) 
+ \bm{u}^{\theta}_{s,r}(\hat{\bm x}_s) 
\right)
\right\|_2 ^2,
\label{eq:scdloss}
\end{equation}
where the loss metric is set as squared $l_2$-distance, $w = \frac{1}{4h^2}$, and $\hat{\bm{x}}_s = \bm{x}_t + (s-t)\bm{u}^\theta_{s,t}(\bm{x}_t)$, with details shown in Appendix~\ref{app:scd}.
Moreover, for IMMs, conditional samples $\{\bm{x}^{(i)}_{0}, \bm{\varepsilon}^{(i)}\}_{i=1}^{B}$ within a mini-batch of size $B$ are first partitioned into different groups, and $\{r,s,t\}$ are drawn for each group.  With similar flow map construction to CTs,  the loss metric is implemented with a grouped kernel function as to minimize MMD~\citep{mmd}, applied to measure both inter- and intra-sample similarities between $\{\bm{\hat x}_r, \bm{x}^{\theta}_r\}$ within the group, as further detailed in Appendix~\ref{app:imm}.

\paragraph{Continuous-time shortcut models.} When the difference between two time points is infinitesimal, the resulting shortcut models are referred to CTSCs, by setting $s=t-dt$ and normalizing $l(\bm{x}_t,r,s,t;\theta)$ by $dt$. For instance, MeanFlows are continuous-time shortcut models in which $s = t - dt$.  They leverage linear paths with squared $l_2$-distance as the loss metric and parameterizes the average velocity $\bm{u}_{t,r}(\bm{x}_t)$ with neural networks $F^{\theta}$. 
By writing $l(\bm{x}_t,r,t-dt,t;\theta) = w\cdot \left\| \frac{d}{dt}\left(X^\theta_{t,r}(\bm{x}_t) - \hat X_{t-dt,r}\circ \hat X_{t,t-dt}(\bm{x}_t)\right)\right\|^2$ and $\frac{d}{dt} \bm u^\theta_{t,r}(\bm x_t) = \partial_t \bm u^\theta_{t,r}(\bm x_t) + (\nabla_{\bm{x}} \bm u^\theta_{t,r})(\bm x_t) \cdot \bm{v}_{t}$, 
and applying Eq.~\ref{eq:velocityint} and~\ref{eq:meanvelo} with approximation shown in Appendix~\ref{app:MeanFlow}, 
we correspondingly obtain
\begin{equation}
    l_{\mathrm{mf}}(\bm{x}_t, r, t-dt, t;\theta) = w\cdot \left\|\bm{u}^\theta_{t,r}(\bm{x}_t)
-\mathrm{sg}\left(\bm{v}_{t|0} + (r-t) \frac{d\bm{u}^{\theta}_{t,r}(\bm{x}_t)}{dt}\right) \right\|_2^2, \label{eq:MeanFlowloss}
\end{equation} under squared $l_2$-distance with adaptive weighting $w$, as detailed in Appendix~\ref{app:MeanFlow}. Note that there is a predefined probability $p_{\text{teq}}$ such that $r = t$, which results in $l_{\text{mf}} = w \|\bm{u}^{\theta}_{t,t} - \bm{v}_t \|^{2}$ during training. This training technique of instantaneous conditional velocity supervision is also employed in SCDs. 

sCTs, as the continuous-time variants of CTs, use squared $l_2$-distance instead of LPIPS. Under $s=t-dt$, Appendix~\ref{app:sct} shows that the gradient of $l_{\mathrm{ct}}(\bm{x}_t, r, s, t; \theta)$ \emph{w.r.t.} $\theta$ can be approximated as
$ \nabla_\theta l(\bm{x}_t, r, s, t; \theta) 
    \approx \nabla_\theta \| \bm{v}^\theta_{t}(\bm{x}_t) - \mathrm{sg}( \bm{v}^\theta_{t}(\bm{x}_t) + w(t) \frac{d}{dt}X^\theta_{t,r}(\bm{x}_t) )\|_2^2$.
By setting $w(t)= \cos(\frac{\pi}{2}t)$,  
\begin{equation}
    l_{\mathrm{sct}}(\bm{x}_t,r,t-dt,t;\theta) = \left\| \bm{v}^\theta_{t}(\bm{x}_t) - \mathrm{sg}\left( \bm{v}^\theta_{t}(\bm{x}_t) + w(t) \frac{d \mathrm{DDIM}(\bm{x}_t, \bm{v}^{\theta}_t(\bm{x}_t), t, r)}{dt} \right)\right\|_2^2, \label{eq:sctloss}
\end{equation}
 
% In addition, sCTs transform the path of preconditioner from EDM via change of variables, and apply $\arctan(\cdot)$ to time samples from the log-normal distribution proposed by EDM to achieve full equivalence to the cosine path as shown in Appendix~\ref{app:flowpath}. 
% We denote this time sampling process as 
% $\tau \sim \mathcal{N}(P_{\mathrm{mean}}, P_{\mathrm{std}}^{2})$,  $t = \arctan(e^\tau)$, and $r = 0$.

\begin{remarkgray}~\label{thm:remark1}
    sCT with linear paths is of the same form as MeanFlow, as proved in Appendix~\ref{app:remark1}.  
\end{remarkgray}

\paragraph{Putting it together.} Table~\ref{tab:methodconclude} summarizes the deterministic variants reproduced from the discussed representative methods, including DTSCs and CTSCs, within our framework.  The goal of this reframing is to disentangle the independent components that are often intertwined in prior work. Within our framework, these components can be explicitly separated, such that any reasonable combination of components will yield a functioning model.  In practice, the relative effectiveness of different choices and combinations is the focus of our investigation in Sec.~\ref{sec:elucidating}.

\subsection{Discussion: shortcutting flow paths under marginal velocity fields}
\textbf{\emph{- Q.1: Why share a common design frame?}} \vspace{-0.25em} 

 We inherently aim to simulate the PF-ODE with the \emph{marginal velocity field}, written as $\bm{v}_t(\bm{x})$.
% $
% \bm{v}_t(\bm{x}) = \int_{\Omega} \bm{v}_t(\bm{x}|\bm{x}_0)\, \frac{p_t(\bm{x}| \bm{x}_0)\, p_{0}(\bm{x}_0)}{p_t(\bm{x})}\, d\bm{x}_0 
% $.
Consequently, shortcut models essentially operate along the sampling trajectories of the flow governed by $\bm{v}_t(\bm{x})$, as shown in Fig.~\ref{fig:phyintuition}(a).
Intuitively, the ideal construction of the learning target is to sample two distinct states $\bm{x}_t$ and $\bm{x}_r$ along the same curved trajectory from the flow paths,  
so that the neural network can directly map $\bm{x}_t$ to $\bm{x}_r$ such that $
F^{\theta}(\bm{x}_t, t, r) \approx \bm{x}_r ,
$ as illustrated in Fig.~\ref{fig:phyintuition}(b) and~(c).

However, such pairs $\{\bm{x}_t, \bm{x}_r\}$ cannot be obtained via simulation-based sampling:  
once $\bm{x}_t$ is sampled, $\bm{x}_r$ remains inaccessible because both $\bm{v}_t(\bm{x})$ and its integral from $t$ to $r$ are intractable.  
To overcome this, a common design paradigm is employed, which is to let the network’s outputs,  
or the conditional velocity alternatively, estimate $\bm{x}_r$ in two steps: first producing  
an intermediate $\hat{\bm{x}}_s$, and then constructing an estimated target $\hat{\bm{x}}_r$, as shown in Eq.~\ref{eq:flowmapconsist} in Sec.~\ref{sec:designframe}.  
This makes training feasible. Although one may also construct multi-step (\emph{i.e.}, more than two) flow map targets for simulating the $\{\bm{x}_t, \bm{x}_r\}$ pairs~\citep{kim2023consistency}, the paradigms of two-step target construction approximated by one-step prediction are sufficiently general according to the following theoretical justification with detailed proof in Appendix~\ref{app:thmbound}. Note that we classify the aforementioned methods' training objective into DTSC and CTSC. For example, $l_{\text{mf}}$ and $l_{\text{sct}}$ are  instances of $l_{\text{ctsc}}$.
\begin{theoremgray}
    [Error bound of DTSC\&CTSC (brief)]~\label{thm:errorbound} Under the mild assumptions with details given in Theorem~\ref{thm:apperrorbound} of (i) one-sided Lipschitz condition of marginal velocity and (ii) twice continuous differentiability with bounded second derivatives of $X_{\tau_1,\tau_2}^\theta$ for any $\tau_1,\tau_2 \in [0,1]$. Let $p_0$ the density of $\bm{x}_0$, and ${p}^{\theta}_0$ the density of $\bm{x}^\theta_0 = X^\theta_{1,0}(\bm{x}_1)$,  under the squared $l_2$-distance:
  \begin{align*}
   &W_2^2(p_0, p^\theta_0) \leq C_1 \mathcal{L}_\text{dtsc}(\theta) + C_2(t-s); 
   &W_2^2(p_0, p^\theta_0) \leq C_3 \mathcal{L}_\text{ctsc}(\theta), 
  \end{align*}
    where we write the training objective in Eq.~\ref{eq:flowmapconsist} as $\mathcal{L}_\bullet(\theta) = \mathbb{E}_{r,s,t\sim p(\tau), \,\bm{x}_t\sim p_t}[
    l_\bullet(\bm{x}_t,r,s,t;\theta)]
$ with $\bullet \in \{\text{dtsc}, \text{ctsc}\}$, $W_2(\cdot,\cdot)$ is the Wasserstein-2 distance, $\{C_1, C_2\}$ are given in Theorem~\ref{thm:dtscbound}, and $C_3$ is given in Theorem~\ref{thm:ctscbound1} and~\ref{thm:ctscbound2} in Appendix~\ref{app:thmbound}.
\end{theoremgray}
\textbf{\emph{- Q.2: What challenges in constructing flow map targets?}} \vspace{-0.25em} 

From this perspective, ideal learning for DTSC and CTSC shares a similar physical picture as shown from Fig.~\ref{fig:phyintuition}(b) and~(c).  
However, the practical construction of the two-step flow map target inevitably causes the obtained 
$\hat{\bm{x}}_s$ and $\hat{\bm{x}}_r$ to deviate from $\bm{x}_s$ and $\bm{x}_r$ on the sampling trajectory governed by marginal velocity fields as shown in Fig.~\ref{fig:phyintuition}(d) and~(e), leading to bias and variance in estimating $\bm{x}_r$ with $\hat{\bm{x}}_r$.  
Introducing this deviation into the supervision of model training greatly affects the performance differences across various shortcut model designs as justified in Prop.~\ref{thm:infererror}.  

\textbf{\emph{- Q.3: Why distillation from pretrained velocity fields performs better?}} \vspace{-0.25em} 

From another perspective, this explains why distilling from a pretrained diffusion model is often more effective than training from scratch~\citep{cm,scm}. Unlike (s)CT, which are trained from scratch, (s)CM benefits from distillation by learning from a pretrained velocity field $\bm{v}_t^{\phi}(\bm{x})$. In practical training, the conditional velocity $\bm{v}_{t|0}$ and network output $\bm{v}_t^{\theta}$ in $\mathrm{sg}(\cdot)$ in Eq.~\ref{eq:ctloss} and~\ref{eq:sctloss} are replaced with $\bm{v}_t^{\phi}$, which closely approximates $\bm{v}_t(\bm{x}_t)$. This substantially reduces errors in estimating the two-step flow targets, providing more accurate supervision for network training.

% From another perspective, this also explains why distilling from a pretrained diffusion model  
% is often more effective than training from scratch, as demonstrated by~\cite{cm} and~\cite{scm}.
% Compared with (s)CT, which trains from scratch,  
% (s)CM achieves better performance.  
% The reason is that in practice, (s)CM as the distillation variant of (s)CT,  learns from an already pretrained velocity field $\bm{v}_t^{\phi}(\bm{x})$, where the conditional velocity $\bm{v}_{t|0}$ and network output $\bm{v}_t^{\theta}$ in $\mathrm{sg}(\cdot)$ operator in Eq.~\ref{eq:ctloss} and~\ref{eq:sctloss}
% are replaced with $\bm{v}_t^{\phi}$ when training. Because the employed velocity field satisfies 
% $\bm{v}_t^{\phi}(\bm{x}) \approx \bm{v}_t(\bm{x})$, the deviation in estimating the two-step flow map targets $\hat{\bm{x}}_r$ is substantially reduced, thereby providing more accurate supervision for the training of neural networks.  

\section{Elucidating the design space of shortcut models}\label{sec:elucidating} \vspace{-0.25em}
According to our design framework, we analyze existing shortcut models from several key perspectives, including the \emph{\textbf{choice of flow path}} and \emph{\textbf{design of time sampler}}, which primarily determine how the flow map is constructed. In the following, we aim to address several corresponding questions to empirically and theoretically elucidate the design space of one-step shortcut models.

Empirically, we evaluate the proposed formulation using a unified codebase implementation with the same training iterations and batch sizes. 
For unconditional generation on CIFAR-10, we employ U-Nets~\citep{sde}~($\sim$\texttt{55M param.}) as the network architecture operating directly in the pixel space. 
For conditional generation in ImageNet-256$\times$256, with and without classifier-free guidance, we use a SiT-B/2~\citep{sit} architecture~($\sim$\texttt{131M param.}), operating in the latent space via a pretrained VQVAE~\citep{stablediff}. 
    While sCT is originally initialized from the teacher diffusion model as stated in \cite{scm}, we train all the discussed models from scratch, for a fair comparison.
 Fig.~\ref{fig:elucidsc} summarizes the results of one-step generation on the two datasets, with additional setting of classifier-guidance-free learning, as discussed in \cite{meanflow}.
Further details on settings are provided in Appendix~\ref{app:elucidexpsetting}.

\textbf{ \emph{- Q.1: Following linear or cosine paths?}}\vspace{-0.25em} 

Linear paths are generally regarded as more analytically tractable and easier to employ for training and sampling tricks (\emph{e.g.}, classifier-free guidance), owing to their simple formulation. By contrast, in pixel-space generative modeling, cosine paths are often considered more stable for training convergence, because they induce a stochastic process with fixed variance. Exploration of these two flow paths in the context of shortcut models remains underexplored. Here we extend cosine-path-based models (CT and sCT) to their linear-path counterparts. Fig.~\ref{fig:elucidsc} shows that shortcut models with linear paths are more competitive. We attribute this to the fact that the marginal velocity fields generated by linear paths as conditional paths are at lower convex transport cost~\citep{recflow}, implying lower curvature of the velocity-field-governed trajectories. Consequently, the simulated two-step flow map targets are less likely to deviate from the ideal. Furthermore, while cosine paths are optimal in the setting of diffusion and flow matching~\citep{santos2023usingornsteinuhlenbeckprocessunderstand} under Fisher information metrics, we theoretically justify that \emph{linear paths in the setting of shortcut models are optimal conditioned on data samples} in Appendix~\ref{app:thmlinearopt}. Based on this, our subsequent analysis will focus on the linear path. 
% \begin{propositiongray}\label{prop:linearopt}
% For $\forall \bm x_0$, the schedule $(\alpha_t, \sigma_t)$ under linear path is the optimal schedule considering $p(\bm u_{0,t} (\bm x_t) \mid \bm x_0, t)$  under the Fisher information metric.
% \end{propositiongray}
% \begin{propositiongray}
%     lower convex transport cost~\citep{recflow}, implying lower curvature of the velocity-field-governed trajectories. Consequently, the simulated two-step flow map targets are more likely to deviate less from the ideal. Based on this, our subsequent analysis will mainly focus on the linear path.
% \end{propositiongray}
    
\begin{figure*}
\begin{minipage}{1.00\linewidth}
    \centering\vspace{-1em}\hspace{-1em}
        \subfigure[Uncond. CIFAR ]{
            \includegraphics[width=0.33\linewidth, trim=20 00 30 30, clip]{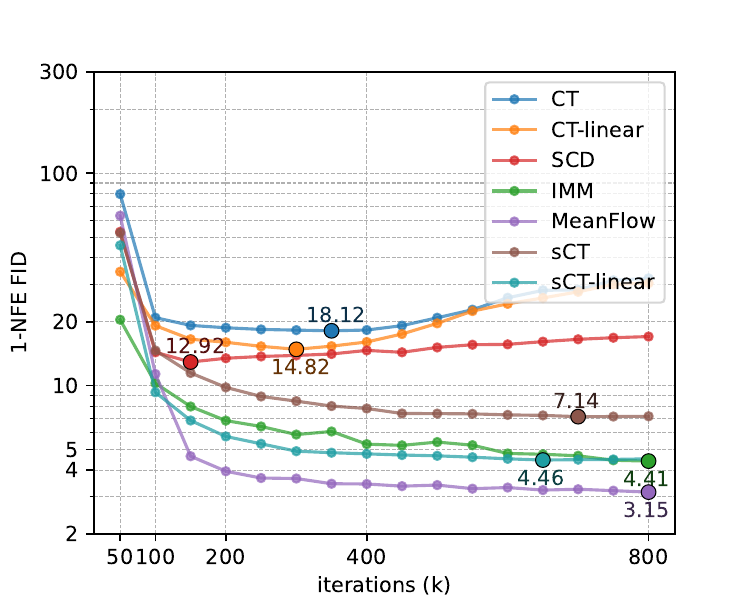}
            \label{fig:elucidsccifar}
            }\hspace{-0.5em}
        \subfigure[Cond. ImageNet]{
            \includegraphics[width=0.33\linewidth, trim=20 00 30 30, clip]{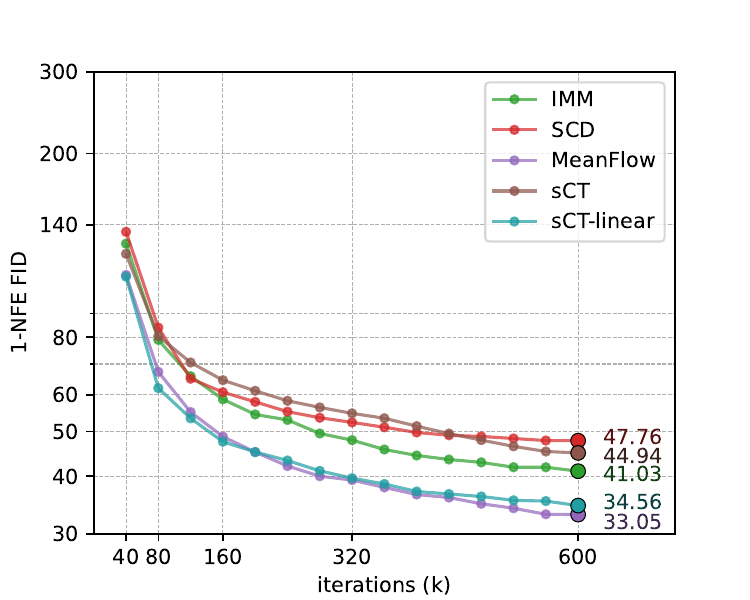}
            \label{fig:elucidscimgcnd}
            }\hspace{-0.5em}
        \subfigure[CFG. ImageNet]{
            \includegraphics[width=0.33\linewidth, trim=20 00 30 30, clip]{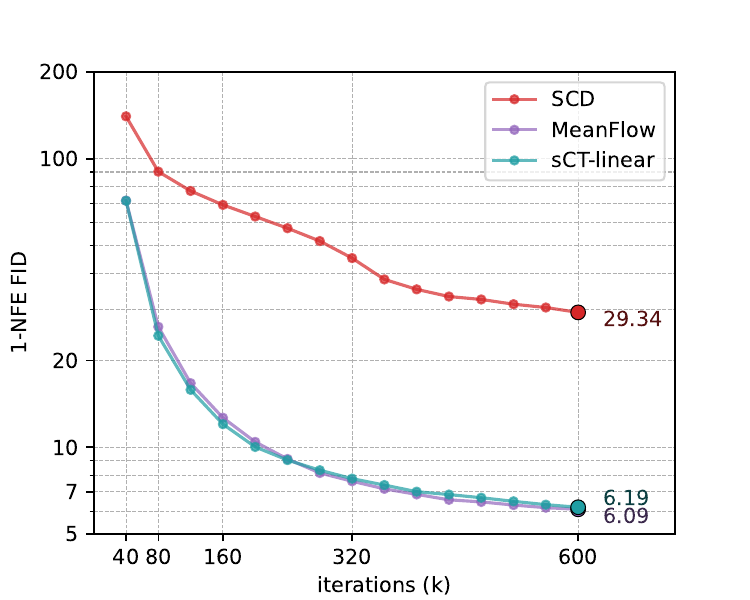}
            \label{fig:elucidscimgcfg}
            }\hspace{-1em}\vspace{-1em}
        \caption{Comparison of FID50k during training among different shortcut models described in Table~\ref{tab:methodconclude}. (a) is the unconditional~(Uncond.) generation on CIFAR-10; (b) is class-conditional~(Cond.) generation; and (c) is classifier-free-guidance~(CFG.) training on ImageNet-256$\times$256.  }\label{fig:elucidsc}
        \label{fig:deltabind}
    \hfill
\end{minipage}
\vspace{-2.2em}
\end{figure*}

\textbf{ \emph{- Q.2: Shortcutting flow paths discretely or continuously?}} \vspace{-0.25em} 

Under the same training setup and within a unified codebase, continuous-time shortcut models clearly outperform their discrete-time counterparts. As shown in Fig.~\ref{fig:elucidsccifar}, both sCT and MeanFlow achieve lower FID50k scores on CIFAR-10 compared to CT and SCD. A similar conclusion can be drawn on ImageNet-256$\times$256 from Fig.~\ref{fig:elucidscimgcnd}\&\ref{fig:elucidscimgcfg}. Below, we analyze the inference error of the discussed methods with linear paths in Prop.~\ref{thm:infererror}, and characterize the regimes in which each objective is preferable. We denote sCT and MeanFlow with linear paths by subscripts ${\mathrm{ctsc}}$, thanks to their same formulations according to Remark~\ref{thm:remark1}, and discrete-time models by $\mathrm{dtsc}$ as well. In addition, we write the parameterized $\bm{v}^\theta_t$ in sCT as $\bm{u}^\theta_{t,0}$ under the linear path according to Appendix~\ref{app:remark1}. 
\begin{propositiongray}
    [Inference error analysis] 
\label{thm:infererror}
Under mild regularity conditions shown in Appendix~\ref{app:assumptions}, 
the Wasserstein-2 distance of shortcut models with one-step generation is bounded as:
\begin{align}
&W_2^2(p_0, p^\theta_0)
\leq  2\left(\mathrm{BV_{\text{ctsc}}} +8 \mathrm{Var}\big[\frac{d}{dt} \bm u_{t,r}^\theta(\bm x_t)\big] 
+ 8\sigma_{\bm{v}_{t|0}}^2 \right) \Big|_{r=0,t=1}, \label{eq:bverror}\\
&W_2^2(p_0, p^\theta_0) \leq 2\Big( \mathrm{BV_{\text{dtsc}}} 
+ 8\delta_2^2\,\mathrm{Var}\big[\bm u_{s,r}^\theta(\bm x_t)\big]
+8(1+\ell^2\delta_2^2)\delta_1^2\,\sigma_{\text{dtsc}}^2\Big)\Big|_{r=0,t=1}, ~\label{eq:dtscbverror}
% &W_2^2(p_0, p^\theta_0) \leq 2\Big( \mathrm{BV_{\text{scd}}}
% + 8\delta_2^2\,\mathrm{Var}\big[\bm u_{s,r}^\theta(\bm x_t)\big]
% +8(1+\ell^2\delta_2^2)\delta_1^2\mathrm{Var}\!\big[\bm u^\theta(\bm x_t,t,s)\big]\Big)\Big|_{r=0,t=1},
\end{align}
where $\mathrm{BV_{\bullet}} = \mathrm{Bias}^2_{\bullet\text{-tgt}} + \mathrm{Bias}^2_{\bullet\text{-loss}} +2\mathrm{Var}\!\big[\bm u_{t,r}^\theta(\bm x_1)\big]$  with $\bullet\in\{\mathrm{ctsc},\mathrm{dtsc}\}$, and $\mathrm{Bias}^2_{\bullet\text{-tgt}}$ and $\mathrm{Bias}^2_{\bullet\text{-loss}}$ are defined in Prop.\ref{app:thmomfererror} ; $\delta_1 = t-s$, $\delta_2 = s-r$;  $\ell$ is the local Lipschitz constant of $\bm u^\theta$; $\sigma_{\bm{v}_{t|0}}^2$ is the variance of the conditional velocity, defined by $
\sigma_{\bm{v}_{t|0}}^2 \coloneqq \mathrm{Var}(\bm v_{t}(\bm x_t| \bm x_0))$; $\sigma_{\text{dtsc}}^2 = \sigma_{\bm{v}_{t|0}}^2$ for CT's two-step flow map targets, or $\sigma_{\text{dtsc}}^2 = \mathrm{Var}\!\big[\bm u_{t,s}^\theta(\bm x_t)\big]$ when using SCD's flow map targets.
\end{propositiongray}
From Theorem~\ref{thm:errorbound}, for CT and CTSC, we conclude that if 
\(\delta_2^2\mathrm{Var}\!\big[ \bm u_{t,r}^\theta(\bm x_t)\big]\) 
and \(\mathrm{Var}\!\big[\tfrac{d}{dt} \bm u_{t,r}^\theta(\bm x_t)\big]\) 
are of the same order, the right-hand side of Eq.~\ref{eq:dtscbverror} contains an additional term 
\(\ell^2\delta_2^2\delta_1^2\sigma_{\text{dtsc}}^2\) compared with Eq.~\ref{eq:bverror}, which is likely to result in higher inference error and instability in training, as the proof in Appendix~\ref{app:thmomfererror} shows the inference error already subsumes the training error bound. Further, 
when \(s \to t\), and \(\sigma_{\bm{v}_{t|0}}^2\) dominates both 
\(\delta_2^2\mathrm{Var}\!\big[ \bm u_{t,r}^\theta(\bm x_t)\big]\) 
and \(\mathrm{Var}\!\big[\tfrac{d}{dt} \bm u_{t,r}^\theta(\bm x_t)\big]\), the training convergence and sampling fidelity of CTSC and CT are both closely tied to the variance of the conditional velocity used for supervision. 
Therefore, being able to provide a low-variance velocity supervision during training, such as one obtained from a pretrained neural network, helps to improve shortcut models.

\textbf{ \emph{- Q.3: Fixing the terminal time or not?}}\vspace{-0.25em}

Since sCT-linear is a special case of MeanFlows where the terminal time $r$ is fixed at $0$, the empirical results on CIFAR-10 in Fig.~\ref{fig:elucidsccifar} and on ImageNet-256$\times$256 in Fig.~\ref{fig:elucidscimgcnd}\&\ref{fig:elucidscimgcfg} demonstrate that, in general, random sampling of $r$ is beneficial in capturing the overall shortcut patterns. However, in the early stage of training (approximately before 20–40k epochs), sCT-linear exhibits faster convergence in terms of FID50k for one-step generation. We conjecture that in the early stages, continually adding supervision of $\bm{x}_0$, akin to a denoising task, provides a simpler learning task that accelerates convergence toward favorable local optima. Yet, without intermediate flow path targets $\bm{x}_r$ where $r>0$, the model may remain stuck in these sub-optima during the later training stage.

% Second, during sampling, since DDIM can be viewed as a first-order approximation of DPM-Solver where the velocity in PF-ODE is approximated by a constant direction, its sampling error is relatively smaller under linear paths, while under cosine paths,  errors  from DDIM are typically larger and the sampling requires higher-order solvers for compensation.

\section{Improvements to training}\label{sec:improvement}
Building on the above analysis, all subsequent techniques and developments will be carried out under the \textit{continuous-time} shortcut model with  \textit{linear paths}, so we choose MeanFlow with SiT-B/2 architecture as our baseline implementation with its default hyperparameters shown in Appendix~\ref{app:imgnetablation}. Table~\ref{tab:imgnetablation} presents an ablation study that shows the effectiveness of our improvement techniques, where ESC as \underline{e}xplicit\&\underline{e}asier \underline{s}hort\underline{c}ut model is the CTSC with all the proposed techniques as follows. 

\paragraph{Plug-in velocity instead of conditional one.} Since the marginal velocity is intractable, training relies on the conditional velocity, obtained by sampling $\bm{x}_0$ from the finite training set $\{\bm{y}^{(i)}\}_{i=1}^{N}$. Based on it, we derive $\bm{v}^*_t(\bm{x}_t|\{\bm{y}^{(i)}\}_{i=1}^{N})$ as the marginal velocity under the empirical data distribution, which we refer to as the ideal velocity in the following:
\begin{propositiongray}[Marginal velocity of empirical distribution and bias-variance comparison]~\label{thm:idealvel} 
     Assume the data distribution is the empirical distribution, as $p_{{0}}(\bm{y}) = \frac{1}{N}\sum_{i=1}^{N} \mathds{1}_{\bm{y}_i}(\bm{y}) $, the marginal velocity reads
    \begin{align}
        \bm{v}^*_t(\bm{x}_t| \{\bm{y}^{(i)}\}_{i=1}^{N}) = \sum_{i}^{N}\frac{\mathcal{N}(\bm{x}_t;\alpha_t\bm{y}^{(i)},\sigma^2_t\bm{\mathrm{I}})}{\sum_{j}^{N}\mathcal{N}(\bm{x}_t;\alpha_t\bm{y}^{(j)},\sigma^2_t\bm{\mathrm{I}})}(\dot\alpha_t\bm{y}^{(i)} + \frac{\dot\sigma_t}{\sigma_t}(\bm{x}_t - \alpha_t \bm{y}^{(i)}) ), \label{eq:idealvel}
    \end{align}
    where $\bm{x}_t = \alpha_t\bm{x}_0 + \sigma_t\bm{\varepsilon}$. Specifically, under mild assumptions in Prop.~\ref{app:thmvelbv} in Appendix~\ref{app:bvanalysisplugin}, substituting $\bm v_t$ in $\mathcal{L}_\text{ctsc}$ with $\bm v^*_t$ significantly decreases Eq. \ref{eq:bverror}'s last term $\sigma_{\bm v_{t\mid0}} = \mathbb{E}\|\bm v_{t\mid0}-\bm v_t\|^2$, which reduces the variance by $\mathcal{O}(1-1/N)$ while increasing the bias by $\mathcal{O}(1/N)$. 
\end{propositiongray}
\begin{figure}
\begin{minipage}{0.48\linewidth}
  \centering
  \captionof{table}{Evaluation of training improvements under one-step generation with SiT-B/2 as $F^\theta$.} \vspace{-1em}~\label{tab:imgnetablation}
\resizebox{1\linewidth}{!}{
  
  \begin{tabular}{@{} llr @{}}
\toprule
\multicolumn{2}{l}{Training configuration} & FID50k \\
\midrule
& \multicolumn{1}{l}{MeanFlow under CFG. (Baseline)} & 6.09 \\
\rowcolor{gray!10} +A1 & Plug-in velocity ($p_{\text{plug-in}} = 1.0$) & 6.01 \\
 +A2 & Plug-in velocity ($p_{\text{plug-in}} = 0.5$) & 5.98 \\
\rowcolor{gray!10}+B1 & \begin{tabular}[c]{@{}l@{}}Plug-in velocity ($p_{\text{plug-in}} = 1.0$)\\ \& class-consistent batching\end{tabular} & 6.08 \\
 +B2 & \begin{tabular}[c]{@{}l@{}}Plug-in velocity ($p_{\text{plug-in}} = 0.5$)\\ \& class-consistent batching\end{tabular} & 5.96 \\
\rowcolor{gray!10}+C & Gradual time sampler & 5.99 \\
 +D & sCM training techniques & 5.95 \\
\midrule
\rowcolor{gray!10}& \multicolumn{1}{l}{\textbf{ESC} (Baseline + B2 + C + D)} & \textbf{5.77} \\
\bottomrule
\end{tabular}
  }\vspace{-1em}
\end{minipage}
% \begin{figure}[t]
\definecolor{codeblue}{rgb}{0.25,0.5,0.5}
\definecolor{codekw}{rgb}{0.85, 0.18, 0.50}
\definecolor{codesign}{RGB}{0, 0, 255}
\definecolor{codefunc}{rgb}{0.85, 0.18, 0.50}
\definecolor{codegreen}{rgb}{0.0, 0.6, 0.4}
\lstdefinelanguage{PythonFuncColor}{
  language=Python,
  keywordstyle=\color{blue}\bfseries,
  commentstyle=\color{codeblue},  % for lines starting with "#"
  stringstyle=\color{orange},
  showstringspaces=false,
  basicstyle=\ttfamily\small,
  % Match function calls and color them
  literate=
    % functions with one arg
    {*}{{\color{codesign}* }}{1}
    {-}{{\color{codesign}- }}{1}
    % {=}{{\color{codesign}= }}{1}
    {+}{{\color{codesign}+ }}{1}
    % function call pattern (common names)
    {Normal}{{\color{codegreen}\textbf{Normal}}}{1}
    {softmax}{{\color{codefunc}softmax}}{1}
    {matmul}{{\color{codefunc}matmul}}{1}
        {randn_like}{{\color{codefunc}randn\_like}}{1}
    {log\_prob}{{\color{codefunc}log\_prob}}{1}
    {sum}{{\color{codefunc}sum}}{1}
    {stopgrad}{{\color{codefunc}stopgrad}}{1}
    {metric}{{\color{codefunc}metric}}{1}
}
\lstset{
  language=PythonFuncColor,
  backgroundcolor=\color{white},
  basicstyle=\fontsize{8.9pt}{9.9pt}\ttfamily\selectfont,
  columns=fullflexible,
  breaklines=true,
  captionpos=b,
}
\begin{minipage}{0.45\linewidth}
% \vspace{1em}
\begin{algorithm}[H]
\caption{{Calculation of Plug-in Velocity}.}~\label{alg:code}\vspace{-0.8em} 
\begin{lstlisting}
# x: training batch (B,D)
# t: sampled time
e = randn_like(x)
xt = (1-t) * x + t * e
x_ex, xt_ex = x[:,None,:], xt[None,:,:]
eps = (xt_ex - (1-t) * x_ex) / t

logp_fn = Normal(0, 1).log_prob
logp = sum(logp_fn(eps), dim=2)
weight = softmax(logp, dim=0)

v_cnd = eps - x_ex
v_plugin = matmul(weight.T, v_cnd)
\end{lstlisting}\vspace{-0.5em}
\end{algorithm}
\end{minipage}
% \end{figure}
\vspace{-1em}
\end{figure}
Replacing the conditional velocity $\bm{v}_{t|0}$ in Eq.~\ref{eq:bverror} with the ideal velocity obtained from the full training set the variance of the velocity term to $\mathcal{O}(1/N)$. As a result, since $N$ is usually a large number, according to Prop.~\ref{thm:errorbound}, employing the ideal velocity field can therefore provide more stable supervision during training and lower error in inference. However, its computation requires summing over the entire data set, which is infeasible for large-scale data such as ImageNet ($N=1,281,167$). To address this limitation, we adopt the \textit{plug-in velocity} during training instead, which reads $\bm{v}^*_t(\bm{x}_t|\{\bm{y}^{(i)}\}_{i=1}^{B})$. The above computation is restricted to a mini-batch $\{\bm{y}^{(i)}\}_{i=1}^{B}$ with pseudocode implementation provided in Algorithm~\ref{alg:code}. This can be viewed as a mixture of conditional velocities from the mini-batch samples, reducing the level of variance $\sigma_{\bm{v}_{t|0}}$ in Eq.~\ref{eq:bverror} to  $\mathcal{O}({1}/{B})$, at the minor cost of increased bias. Theoretically, we give further details on the validity of the training objective employing plug-in velocity in Prop.~\ref{app:thmplugin} in Appendix~\ref{app:proofplugin}.

\paragraph{Plug-in velocity under guidance training.}  From the comparison between Fig.~\ref{fig:elucidscimgcnd} and Fig.~\ref{fig:elucidscimgcfg}, it is evident that classifier-free guidance (CFG) is crucial for high-quality image generation~\citep{meanflow}. With CFG, the class-conditional velocity $\bm{v}_t(\bm{x}_t | \bm{x}_0, c)$ leverages instance-level supervision from the label $c$.  In contrast, 
\(
\bm{v}^*_t(\bm{x}_t | \{(\bm{y}^{(i)},c^{(i)})\}^B_{i=1}),
\)
is computed by averaging over randomly drawn mini-batches, which is likely to dilute or erase the class-specific signal. To this end, we employ a plug-in probability $p_{\text{plug-in}}$ that substitutes the conditional velocity with the plug-in velocity, as a trade-off between lowering variance during training and retaining class guidance. The other trick is \emph{class-consistent mini-batching}: When applying CFG during training, we ensure that each mini-batch is sampled within the same class. In multi-GPU training, the class labels of mini-batches across different processes are independent of each other.
% This trick guarantees the supervision of consistent class guidance, while the loss function for gradient updates still accounts for diverse class samples in every each training iteration.

\paragraph{Gradual time sampler from sCT to MeanFlow.} As discussed in Q.3 from Sec.~\ref{sec:elucidating}, we design a time-sampling schedule that gradually evolves with training iterations. 
During the first $K_{\text{fix0}}$ iterations, the sampler selects $r=0$ with probability $p_{\text{fix0}}$, 
and with probability $1-p_{\text{fix0}}$ follows the MeanFlow sampler shown in Table~\ref{tab:methodconclude}. 
The value of $p_{\text{fix0}}$ decays from $1.0$ to $0$ under a cosine schedule at the beginning of the training, 
so that after $K_{\text{fix0}}$ iterations the sampler fully adopts the MeanFlow's strategy, where $K_{\text{fix0}}$ is usually set to 20k in practice.

\paragraph{Adoption of training techniques.} Moreover, since sCT can be regarded as a variant of CTSC, several training strategies have already been explored in its original work, such as variational adaptive loss weighting~\citep{edm2} and tangent warmup~\citep{scm}. These techniques are also applicable to CTSC and bring performance improvements in the cases given in Appendix~\ref{app:scaleup}.
\section{Scaling-up Evaluation}

\label{sec:exp}
\begin{figure}[H]
\begin{minipage}{0.67\linewidth}\vspace{-1em}
  \centering
  \captionof{table}{Evaluation on ImageNet-256$\times$256. Values with \underline{underline} denote the best except shortcut models, values in \textbf{bold} is the best shortcut diffusion model under one-step generation.} 
  \vspace{-0.7em}~\label{tab:scaleup}
\resizebox{1\linewidth}{!}{
  
  % Please add the following required packages to your document preamble:
% \usepackage{multirow}
\begin{tabular}{cllrr}
\toprule
Family                        

& Method                                                          & Param.                & NFE                      & FID50k \\
\midrule
\multirow{3}{*}{%
  \rotatebox{90}{\makebox[0pt][c]{\color{black!60}GAN}}%
}
& {\color{black!60}BigGAN~\citep{biggan}}
& {\color{black!60}112M}
& {\color{black!60}1}
& {\color{black!60}6.95}
\\

& {\color{black!60}GigaGAN~\citep{gigagan}}
& {\color{black!60}569M}
& {\color{black!60}1}
& {\color{black!60}3.45}
\\

& {\color{black!60}StyleGAN-XL~\citep{stylegan}}
& {\color{black!60}166M}
& {\color{black!60}1}
& {\color{black!60}2.30}
\\
\midrule
\multirow{4}{*}{\rotatebox{90}{\makebox[0pt][c]{{\color{black!60}AR/Mask}}}}
& {\color{black!60}AR w/ VQGAN~\citep{ar}} 
& {\color{black!60}227M} 
& {\color{black!60}1024} 
& {\color{black!60}26.52} 
\\
& {\color{black!60}MaskGIT~\citep{maskgit}} 
& {\color{black!60}227M} 
& {\color{black!60}8} 
& {\color{black!60}6.18} 
\\
& {\color{black!60}VAR-d30~\citep{var}} 
& {\color{black!60}2B} 
& {\color{black!60}10$\times$2} 
& {\color{black!60}1.92} 
\\
& {\color{black!60}MAR-H~\citep{mar}} 
& {\color{black!60}943M} 
& {\color{black!60}256$\times$2} 
& {\color{black!60}1.55} 
\\
\midrule

% ---------------- Diff / Flow ----------------
\multirow{6}{*}{\rotatebox{90}{\makebox[0pt][c]{{\color{black!60}Diff/ Flow}}}}
& {\color{black!60}ADM~\citep{edm2}} 
& {\color{black!60}554M} 
& {\color{black!60}250$\times$2} 
& {\color{black!60}10.94} 
\\
& {\color{black!60}LDM-4-G~\citep{stablediff}} 
& {\color{black!60}400M} 
& {\color{black!60}250$\times$2} 
& {\color{black!60}3.60} 
\\
& {\color{black!60}SimDiff~\citep{simdiff}} 
& {\color{black!60}2B} 
& {\color{black!60}512$\times$2} 
& {\color{black!60}2.77} 
\\
& {\color{black!60}DiT-XL/2~\citep{dit}} 
& {\color{black!60}675M} 
& {\color{black!60}250$\times$2} 
& {\color{black!60}2.27} 
\\
& {\color{black!60}SiT-XL/2~\citep{sit}} 
& {\color{black!60}675M} 
& {\color{black!60}250$\times$2} 
& {\color{black!60}2.06} 
\\
& {\color{black!60}SiT-XL/2+REPA~\citep{repa}} 
& {\color{black!60}675M} 
& {\color{black!60}250$\times$2} 
& {\color{black!60}\underline{1.42}} 
\\
\midrule

\multirow{7}{*}{ \rotatebox{90}{\makebox[0pt][c]{Shortcut}}} & iCT~\citep{icm}                                                             & 675M                  & 1                        & 34.24  \\
                                                                                         & SCD~\citep{scd}                                                             & 675M                  & 1                        & 10.60  \\
                                                                                         & IMM~\citep{imm}                                                             & 675M                  & 1$\times2$ & 7.77   \\
                                                                                         & \multirow{2}{*}{MeanFlow~\citep{meanflow}}                                       & \multirow{2}{*}{676M} & 1                        & 3.43   \\
                                                                                         &                                                                 &                       & 2                        & 2.93   \\
                                                                                         & \begin{tabular}[c]{@{}l@{}}\textbf{ESC} (w/o-class-consist.)\end{tabular} & 676M                  & 1                        & 2.92   \\
                                                                                         & \begin{tabular}[c]{@{}l@{}}\textbf{ESC} (w/-class-consist.)\end{tabular} & 676M                  & 1                        & {2.85}
                                                                                        \\
                                                                                        & \textbf{ESC}+ (w/-class-consist.) & 676M & 1 & \textbf{2.53} \\
                                                                                        \bottomrule
\end{tabular}
  }\vspace{-1em}
\end{minipage}
\begin{minipage}{0.35\linewidth}
\begin{minipage}{1\linewidth}
\centering
\includegraphics[width=0.80\linewidth, trim=8 00 25 14, clip]{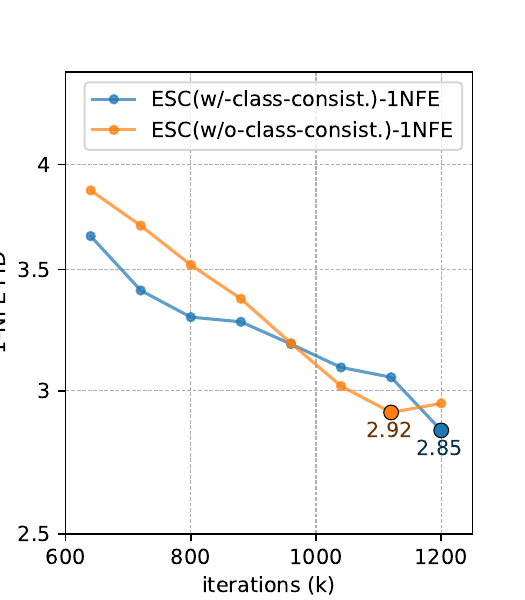}
\captionof{figure}{Convergence of FID50k.} ~\label{fig:scaleup}
\end{minipage}
\begin{minipage}{1\linewidth}
\centering
\setlength{\tabcolsep}{4pt}
\centering
\captionof{table}{Uncond. CIFAR-10.
}\vspace{-0.6em}
\resizebox{0.6\linewidth}{!}{\begin{tabular}{lrr}
\toprule
{method} & NFE & {FID} \\
\midrule
iCT &  1 & \textbf{2.83} \\
ECT &  1 & 3.60 \\
sCT &  1 & 2.97 \\
IMM &  1 & 3.20 \\
{MeanFlow} & 1 & 2.92 \\
\textbf{ESC} & 1 & \textbf{2.83} \\
\bottomrule
\end{tabular}}
\label{tab:cifar}
\end{minipage}
\end{minipage}
\end{figure}

\paragraph{Setting.} In this part, we evaluate the proposed ESC as an improved variant of CTSCs to illustrate its effectiveness at scale. We conduct a scaling-up experiment on ImageNet-256$\times$256 in latent space, and employ SiT-XL/2~($\sim$\texttt{676M param.}) as the backbone model. We follow the training setting of MeanFlow with CFG, where the model is trained from scratch with 240 epochs~($\sim$\texttt{1.2M} iterations). Furthermore, ESC+ is trained with 480 epochs~($\sim$\texttt{2.4M} iterations). In addition, for CIFAR-10~\citep{cifar}, all the shortcut models use the same U-Net~\citep{unet} architecture from \cite{sde}~($\sim$\texttt{55M param.}). The code repository is provided for reproducibility\footnote{\small{\url{https://github.com/EDAPINENUT/ExplicitShortCut/}}}. For further details on setting, please refer to Appendix~\ref{app:scaleup}.  

\paragraph{Benchmark comparison.} In Table~\ref{tab:scaleup}, 
we compare our results with previous methods by benchmarking the FID50k under one-step generation~(1-NFE).  In the context of single-step generation, the proposed techniques bring more improvements with the large-scale network architecture~(SiT-XL/2)  than with the basic one~(SiT-B/2), as ESC achieves state-of-the-art performance of an FID50k of 2.85 with 240 epochs and 2.53 with 480 epochs. This represents an improvement of 16.9\% and 26.2\% compared to the prior one-step result of 3.43 obtained by MeanFlow, respectively, and even better than the two-step generative fidelity of MeanFlow (FID50k 2.93). For visualization of images generated by ESC with different network architectures, please refer to Appendix~\ref{app:vis}. Moreover, Table~\ref{tab:cifar} gives unconditional generation results on CIFAR-10, showing that our improved models achieve competitive performance with prior approaches.
    For a full comparison including other families of methods, please refer to Appendix~\ref{app:fullcomp}. Notably, we find that the performance gains from ESC with SiT-XL/2 over MeanFlow baseline are much more significant than it with SiT-B/2, which we discuss in Appendix~\ref{app:xlbetterb}.
\vspace{-0.5em}
\paragraph{The time cost of plug-in velocity is minimal.} Computing plug-in velocity involves an $\mathcal{O}(B^2)$ weighted operation within each mini-batch, but with DDP training, per-device batch size is small ($B=16$ in our experiments). As a result, the extra overhead is negligible because profiling over 1M iterations shows $554$ ms/iter vs.~$558$ ms/iter for conditional vs.~plug-in velocity ($\approx 0.7\%$ increase). Despite a small batch size introducing larger estimation variance and bias relative to the ideal velocity, compared to the conditional velocity, it stabilizes training by theoretically reducing variance by $\mathcal{O}(1-1/B)$ at almost no additional computational cost and a minor increase in estimation bias.
\vspace{-0.5em}
\paragraph{Class-consistent mini-batching brings faster convergence.} While the final reported results show comparable performance with and without class-consistent mini-batching, we observe from Fig.~\ref{fig:scaleup} that the convergence of FID50k during training is substantially faster with the technique, where Appendix~\ref{app:convminibatch} gives full details. This suggests that the training technique is advantageous in scenarios requiring finetuning with limited training iterations. Exploring its broader applications will be a direction for future work.

% \paragraph{Two-step generation exhibits slow convergence. } We find that two-step generation (2-NFE) converges more slowly in CTSC, and similarly in MeanFlow, where its fidelity only surpasses one-step generation after 1000+ epochs as reported in its paper. This is mainly due to the model’s limited ability in early and mid-term training stages to accurately map intermediate noisy samples to their future evolution. Developing approaches that accelerate multi-step convergence while maintaining the high performance of one-step generation is an important direction for future work.
\vspace{-0.5em}
\section{Conclusion}
\vspace{-0.5em}
We focus on one-step shortcut models trained from scratch and propose a general design framework with theoretical justification of its validity. Building on this, we elucidate the design space of shortcut models through theoretical analysis and empirical evidence, and further propose improvements for continuous-time shortcut model training. Our improved model achieves state-of-the-art performance in image synthesis. More broadly, our work lowers the barrier to innovation in one-step diffusion and enables more systematic exploration of their design, with limitations discussed in Appendix~\ref{app:limitation}.
% \subsubsection*{Acknowledgments}
% Use unnumbered third level headings for the acknowledgments. All
% acknowledgments, including those to funding agencies, go at the end of the paper.
\section*{Acknowledgements}
This work was supported by National Science and Technology Major Project (No.~2022ZD0115101),
National Natural Science Foundation of China Project (No.~624B2115, No.~U21A20427), Project
(No.~WU2022A009) from the Center of Synthetic Biology and Integrated Bioengineering of Westlake
University, Project (No.~WU2023C019) from the Westlake University Industries of the Future
Research Funding. In addition, we gratefully acknowledge the continuous support from DP Technology, Beijing AI for Science Institute, and Westlake University Center for High-performance Computing, for their sustained provision of computational resources for this project.

\section*{Ethics Statement}
This work investigates one-step diffusion for generative modeling at the methodological level. 
The datasets used in this study are publicly available benchmark datasets 
and do not contain sensitive or personally identifiable information (e.g., ImageNet, CIFAR-10). 

Potential risks include the possibility of misuse, such as generating 
misleading or harmful content, or propagating societal biases present 
in the training data. Our method itself does not explicitly address 
these issues, but we highlight that appropriate safeguards should be 
adopted in downstream applications, including content filtering, 
bias auditing, and domain-specific restrictions. 

Overall, we believe the contributions of this work pose minimal ethical risks 
and can positively impact the community by advancing the efficiency and 
effectiveness of one-step generative modeling.

\section*{Reproducibility Statement}
We have made every effort to ensure the reproducibility of our results. 
All datasets used in this work are publicly available (e.g., ImageNet, CIFAR-10). 
The preprocessing steps, model architectures, training hyperparameters, and evaluation 
protocols are described in detail in Sections~\ref{sec:elucidating} and~\ref{sec:exp}. 

To further facilitate reproducibility, we release our source code through \url{https://github.com/EDAPINENUT/ExplicitShortCut/}, and will release trained model checkpoints and experiment scripts upon publication. This will allow researchers to 
reproduce all reported results and extend our approach in future work. 

\bibliography{iclr2026_conference}
\bibliographystyle{iclr2026_conference}

\newpage
\appendix

% 不需要手动 addcontentsline；用 \part 会自动成为一个“分卷”
% \addcontentsline{toc}{section}{Appendix}  % ← 建议删掉
\renewcommand \thepart{}
\renewcommand\partname{} % 清掉 partname
\renewcommand{\mtcskip}{\vskip 8pt}
\part{Appendix}     % 这里会作为一个“Part”，但无编号（前面已 \renewcommand \thepart{}）
\vspace{6mm}
{\setlength{\parskip}{8pt}   % 条目之间额外空隙
 \setlength{\parindent}{0pt} % 避免缩进
 \parttoc
}
\newpage
\begin{figure}[ht]
    \centering
    \setlength{\tabcolsep}{0pt} % 去掉列间距
    \renewcommand{\arraystretch}{0} % 去掉行间距
    \begin{tabular}{cccccccc}
        \includegraphics[width=0.123\linewidth]{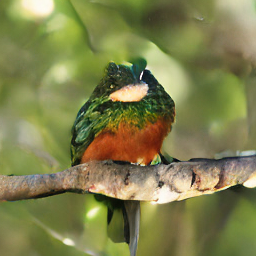} &
        \includegraphics[width=0.123\linewidth]{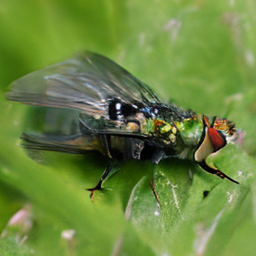} &
        \includegraphics[width=0.123\linewidth]{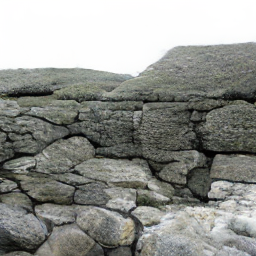} &
        \includegraphics[width=0.123\linewidth]{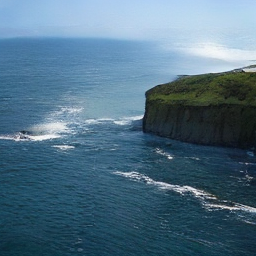} &
        \includegraphics[width=0.123\linewidth]{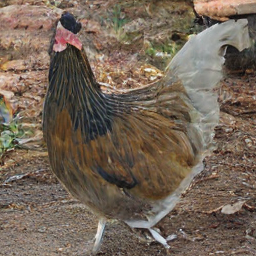} &
        \includegraphics[width=0.123\linewidth]{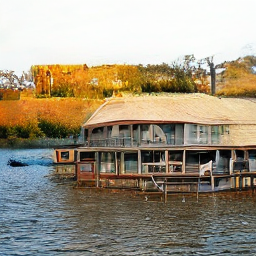} &
        \includegraphics[width=0.123\linewidth]{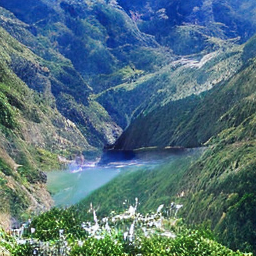} &
        \includegraphics[width=0.123\linewidth]{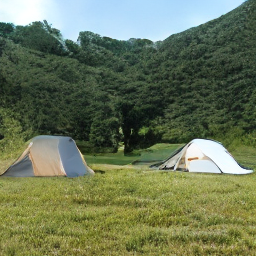} \\[-2pt]
        \includegraphics[width=0.123\linewidth]{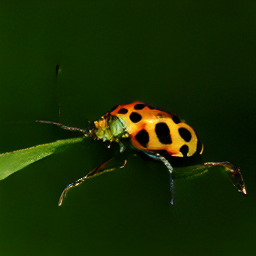} &
        \includegraphics[width=0.123\linewidth]{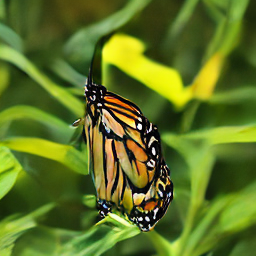} &
        \includegraphics[width=0.123\linewidth]{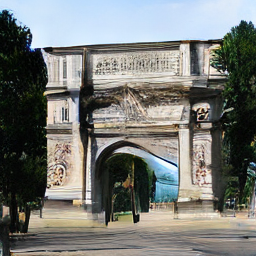} &
        \includegraphics[width=0.123\linewidth]{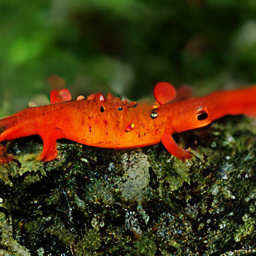} &
        \includegraphics[width=0.123\linewidth]{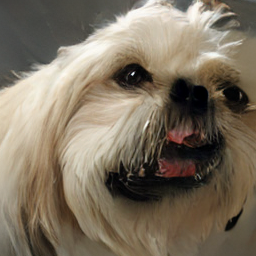} &
        \includegraphics[width=0.123\linewidth]{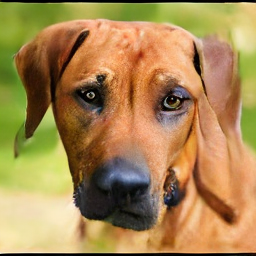} &
        \includegraphics[width=0.123\linewidth]{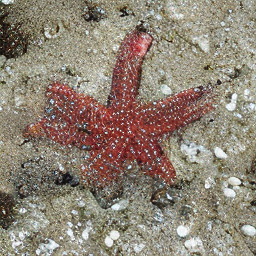} &
        \includegraphics[width=0.123\linewidth]{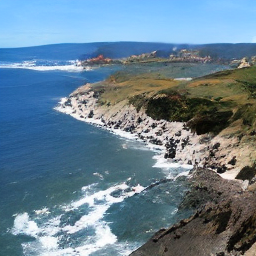} \\[-2pt]
        \includegraphics[width=0.123\linewidth]{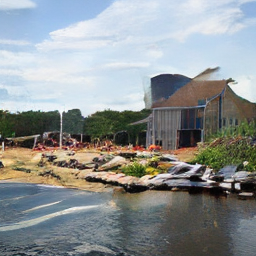} &
        \includegraphics[width=0.123\linewidth]{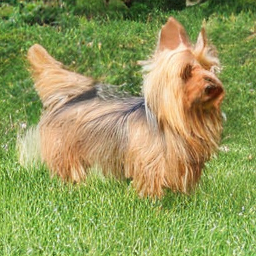} &
        \includegraphics[width=0.123\linewidth]{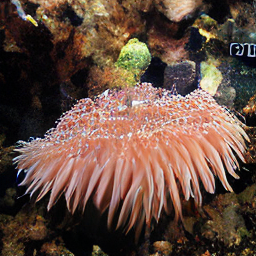} &
        \includegraphics[width=0.123\linewidth]{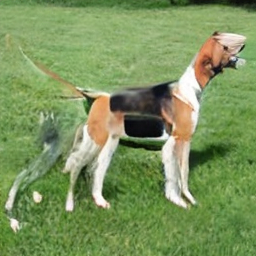} &
        \includegraphics[width=0.123\linewidth]{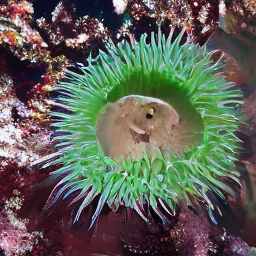} &
        \includegraphics[width=0.123\linewidth]{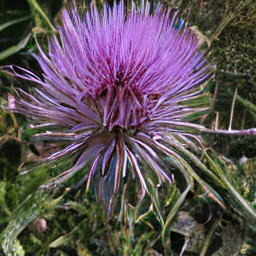} &
        \includegraphics[width=0.123\linewidth]{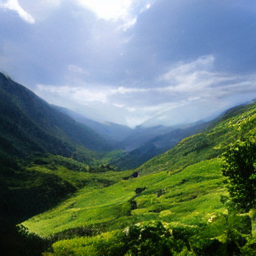} &
        \includegraphics[width=0.123\linewidth]{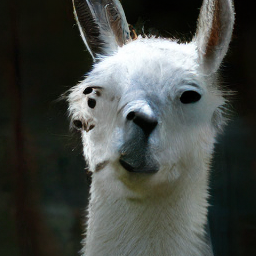} \\[-2pt]
        \includegraphics[width=0.123\linewidth]{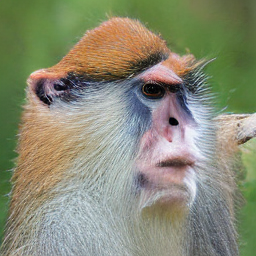} &
        \includegraphics[width=0.123\linewidth]{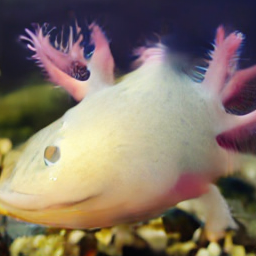} &
        \includegraphics[width=0.123\linewidth]{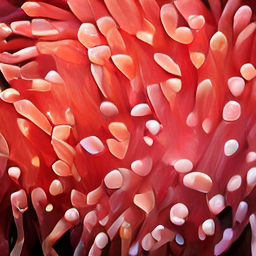} &
        \includegraphics[width=0.123\linewidth]{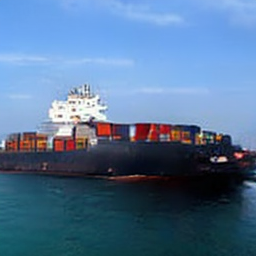} &
        \includegraphics[width=0.123\linewidth]{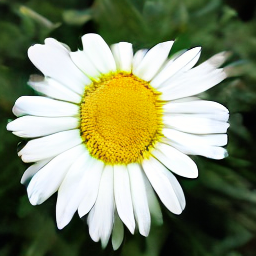} &
        \includegraphics[width=0.123\linewidth]{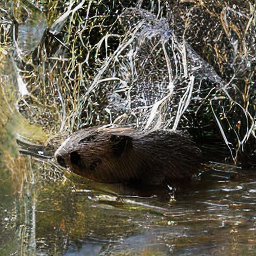} &
        \includegraphics[width=0.123\linewidth]{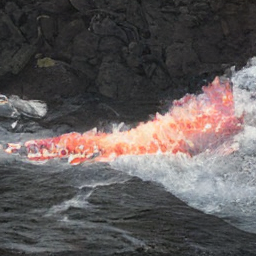} &
        \includegraphics[width=0.123\linewidth]{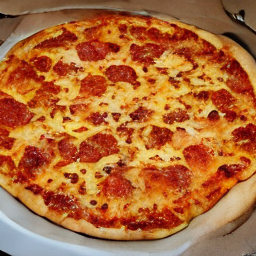}
    \end{tabular}
    \caption{Images generated by ESC with SiT-B/2 trained on ImageNet-256$\times$256, with FID50k 5.77.} ~\label{fig:imgescb}
\end{figure}

\begin{figure}[ht]
    \centering
    \setlength{\tabcolsep}{0pt} % 去掉列间距
    \renewcommand{\arraystretch}{0} % 去掉行间距
    \begin{tabular}{cccccccc}
        \includegraphics[width=0.123\linewidth]{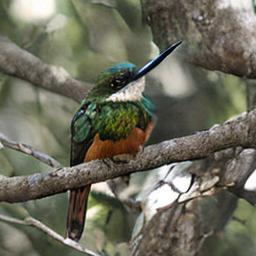} &
        \includegraphics[width=0.123\linewidth]{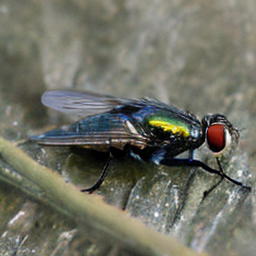} &
        \includegraphics[width=0.123\linewidth]{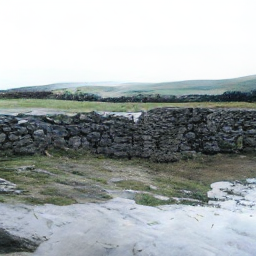} &
        \includegraphics[width=0.123\linewidth]{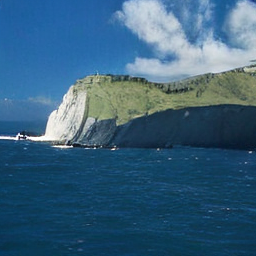} &
        \includegraphics[width=0.123\linewidth]{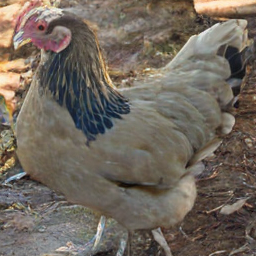} &
        \includegraphics[width=0.123\linewidth]{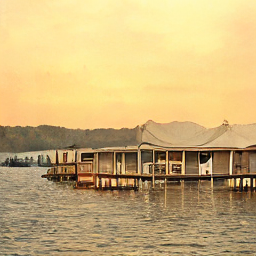} &
        \includegraphics[width=0.123\linewidth]{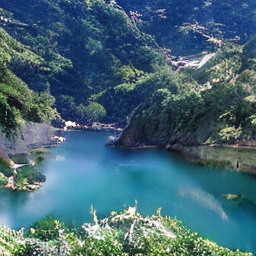} &
        \includegraphics[width=0.123\linewidth]{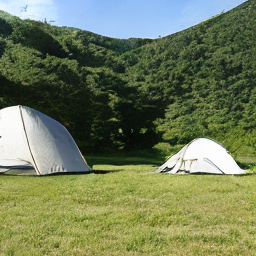} \\[-2pt]
        \includegraphics[width=0.123\linewidth]{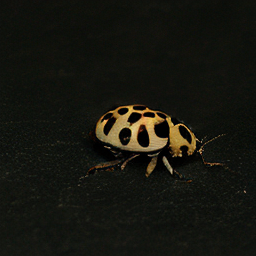} &
        \includegraphics[width=0.123\linewidth]{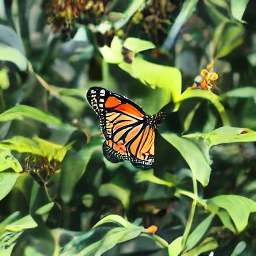} &
        \includegraphics[width=0.123\linewidth]{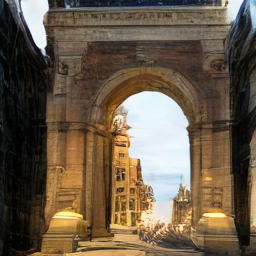} &
        \includegraphics[width=0.123\linewidth]{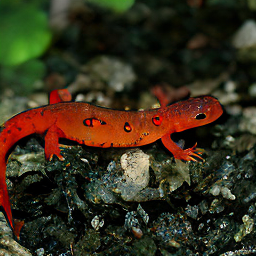} &
        \includegraphics[width=0.123\linewidth]{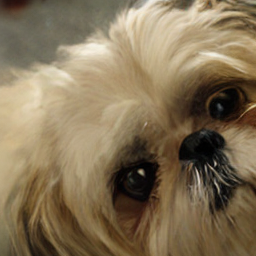} &
        \includegraphics[width=0.123\linewidth]{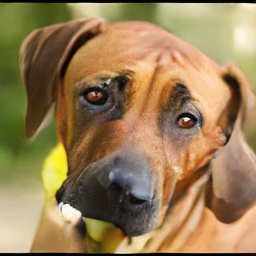} &
        \includegraphics[width=0.123\linewidth]{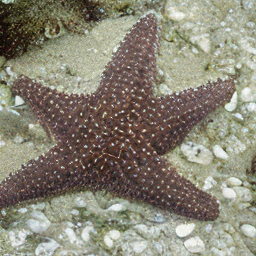} &
        \includegraphics[width=0.123\linewidth]{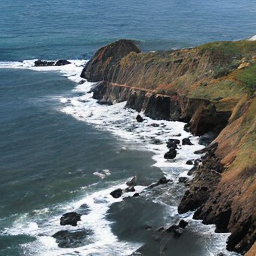} \\[-2pt]
        \includegraphics[width=0.123\linewidth]{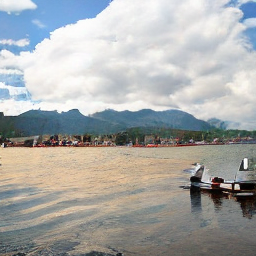} &
        \includegraphics[width=0.123\linewidth]{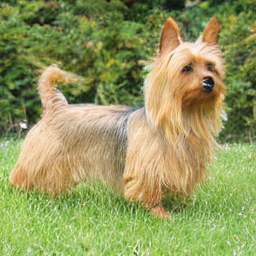} &
        \includegraphics[width=0.123\linewidth]{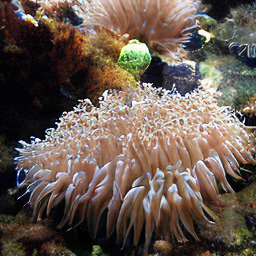} &
        \includegraphics[width=0.123\linewidth]{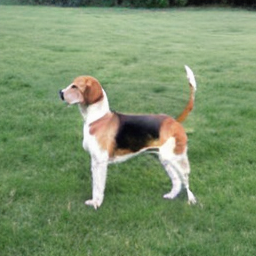} &
        \includegraphics[width=0.123\linewidth]{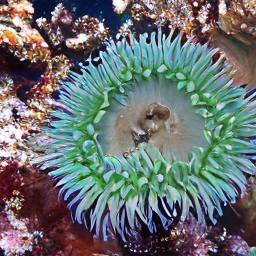} &
        \includegraphics[width=0.123\linewidth]{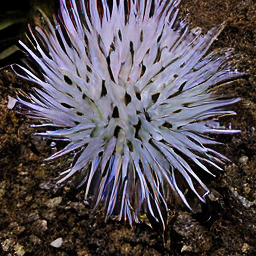} &
        \includegraphics[width=0.123\linewidth]{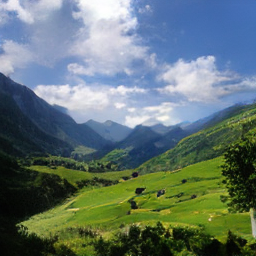} &
        \includegraphics[width=0.123\linewidth]{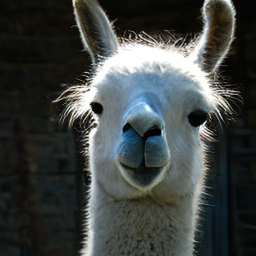} \\[-2pt]
        \includegraphics[width=0.123\linewidth]{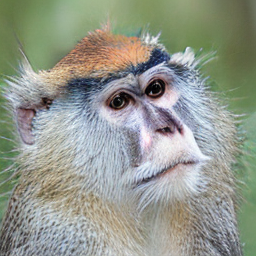} &
        \includegraphics[width=0.123\linewidth]{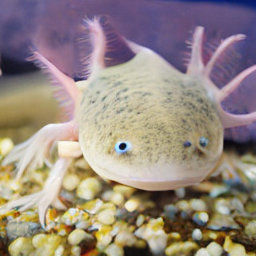} &
        \includegraphics[width=0.123\linewidth]{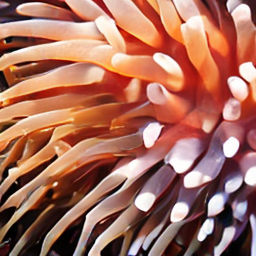} &
        \includegraphics[width=0.123\linewidth]{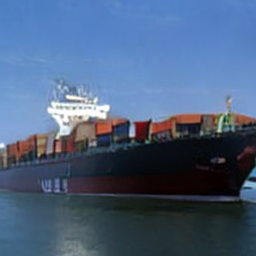} &
        \includegraphics[width=0.123\linewidth]{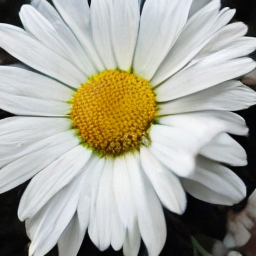} &
        \includegraphics[width=0.123\linewidth]{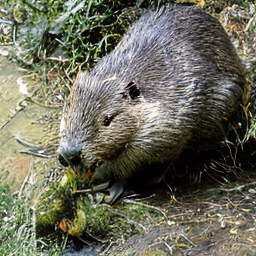} &
        \includegraphics[width=0.123\linewidth]{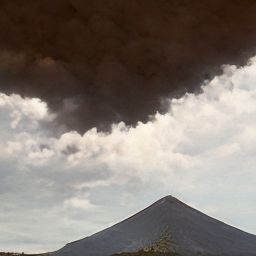} &
        \includegraphics[width=0.123\linewidth]{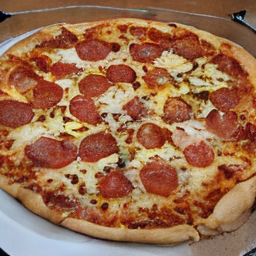}
    \end{tabular}
    \caption{Images generated by ESC with SiT-XL/2 trained on ImageNet-256$\times$256, with FID50k 2.85.} ~\label{fig:imgescxl}
\end{figure}

\section{Background of Diffusion Models}
\subsection{Stochastic Interpolants and Flow Map}~\label{app:preliminary}
Here we give a more formal definition of stochastic interplants and flow map:
\begin{definition}[Stochastic Interpolants \citep{stochasticinterp}]
    The stochastic interpolant $\bm{I}_t$ between probability densities $q$ and $p_1=\mathcal{N}(0,\bm{I})$ is the stochastic process given by
    \begin{equation}
        \bm{x}_t = \alpha_t \bm{x}_0 + \sigma_t \bm{z},
    \end{equation}
    where $\alpha_t,\sigma_t \in C^1([0,1])$ satisfy $\alpha_0=\sigma_1=1$ and $\alpha_1=\sigma_0=0$. We denote the distribution of $\bm{x}_t$ as $p_t$.
\end{definition}

\begin{proposition}[Probability Flow]
    For all $t \in [0,1]$, the probability density of $\bm{x}_t$ is the same as the probability density of the solution to
    \begin{equation}
        \dot{\bm{x}}_t = \bm{v}_t(\bm{x}_t), \quad  \bm{x}_0 \sim p_0(\bm{x}),
        \label{eq:pflow}
    \end{equation}
    \textit{where $\bm{v}: [0,1] \times \mathbb{R}^d \rightarrow \mathbb{R}^d$ is the time-dependent velocity field (or drift) given by}
    \begin{equation}
        \bm{v}_t(\bm{x}) = \mathbb{E}_{\bm{x}_0\sim p_0, \bm{z}\sim\mathcal{N}(0,\bm{I})}[\dot{\bm{x}}_t \mid \bm{x}_t = \bm{x}].
    \end{equation}
\end{proposition}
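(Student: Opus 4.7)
The plan is to prove that both densities satisfy the same transport/continuity equation with the same initial condition, and then invoke uniqueness. Let $p_t$ be the law of the stochastic interpolant $\bm{x}_t = \alpha_t \bm{x}_0 + \sigma_t \bm{z}$, and let $\tilde p_t$ be the law of the solution to the ODE in Eq.~\ref{eq:pflow} with initial data $\bm{x}_0\sim p_0$. We verify $p_t = \tilde p_t$ for all $t\in[0,1]$ by showing both satisfy $\partial_\tau \mu_\tau + \nabla\cdot(\bm{v}_\tau\,\mu_\tau) = 0$ in the weak sense, starting from $\mu_0 = p_0$.

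First, I would handle the interpolant side via a test-function argument. Take $\phi\in C_c^\infty(\mathbb{R}^d)$ and compute
\begin{equation*}
\frac{d}{dt}\int \phi(\bm{x})\,p_t(\bm{x})\,d\bm{x} \;=\; \frac{d}{dt}\mathbb{E}\!\left[\phi(\alpha_t \bm{x}_0 + \sigma_t \bm{z})\right]
\;=\; \mathbb{E}\!\left[\nabla\phi(\bm{x}_t)\cdot \dot{\bm{x}}_t\right].
\end{equation*}
Conditioning on $\bm{x}_t$ and using the tower property together with the definition $\bm{v}_t(\bm{x}) = \mathbb{E}[\dot{\bm{x}}_t\mid \bm{x}_t = \bm{x}]$ yields
\begin{equation*}
\mathbb{E}\!\left[\nabla\phi(\bm{x}_t)\cdot \dot{\bm{x}}_t\right]
= \mathbb{E}\!\left[\nabla\phi(\bm{x}_t)\cdot \bm{v}_t(\bm{x}_t)\right]
= \int \nabla\phi\cdot \bm{v}_t\,p_t\,d\bm{x}
= -\int \phi\,\nabla\cdot(\bm{v}_t p_t)\,d\bm{x},
\end{equation*}
where the last step is integration by parts (legal once one checks mild decay of $\bm{v}_t p_t$, which follows from $p_t$ being a Gaussian mixture-like density with smooth moments whenever $\sigma_t>0$). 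Hence $p_t$ is a weak solution of the continuity equation with drift $\bm{v}_t$ and initial condition $p_0$.

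Next, the ODE side is standard: if $\bm{x}_0\sim p_0$ and $\dot{\bm{x}}_t = \bm{v}_t(\bm{x}_t)$, then for any test $\phi$,
\begin{equation*}
\frac{d}{dt}\int \phi\,\tilde p_t\,d\bm{x} = \mathbb{E}\!\left[\nabla\phi(\bm{x}_t)\cdot \bm{v}_t(\bm{x}_t)\right] = -\int \phi\,\nabla\cdot(\bm{v}_t \tilde p_t)\,d\bm{x},
\end{equation*}
so $\tilde p_t$ solves the same continuity equation with the same initial datum $p_0$. Uniqueness of weak solutions to the linear continuity equation with a sufficiently regular velocity (e.g., locally Lipschitz in space and measurable in time, as stipulated by the smoothness assumptions invoked elsewhere in the paper) then forces $p_t = \tilde p_t$ for all $t\in[0,1]$.

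The only real subtlety is regularity: the conditional expectation $\bm{v}_t(\bm{x})$ must be smooth enough for the continuity-equation uniqueness theorem (e.g., DiPerna--Lions or Ambrosio) to apply, and the integration by parts must be justified. For $t\in(0,1)$ the Gaussian component $\sigma_t \bm{z}$ regularizes $p_t$ and guarantees $\bm{v}_t$ is $C^\infty$ in $\bm{x}$, so the argument is clean on any subinterval $[\eps,1-\eps]$; the endpoints $t=0$ and $t=1$ are then handled by passing to the limit using the boundary conditions $\alpha_0=\sigma_1=1$, $\alpha_1=\sigma_0=0$ together with dominated convergence of $\mathbb{E}[\phi(\bm{x}_t)]$. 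This boundary handling is the main technical obstacle; everything else is a direct computation via the tower property and integration by parts.
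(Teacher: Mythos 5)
Your argument is correct and is the standard one: the paper itself states this proposition without proof, importing it from the stochastic interpolants literature (\citealp{stochasticinterp}), where the proof is exactly your route — show via the tower property and a test-function computation that the law of the interpolant is a weak solution of the continuity equation $\partial_t p_t + \nabla\cdot(\bm{v}_t p_t)=0$ with the conditional-expectation drift, observe that the pushforward of $p_0$ under the ODE flow solves the same equation, and conclude by uniqueness. Your attention to the regularity needed for uniqueness and to the degenerate endpoints (where $\sigma_t$ or $\alpha_t$ vanishes) is appropriate and consistent with how the cited reference handles these points, so there is nothing to add.
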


More specifically, 
\begin{equation}
     \bm{v}_t(\bm{x}) = \dot{\alpha}_t \,\mathbb{E}(\bm{x}_0 \mid \bm{x}_t = \bm{x}) 
     + \dot{\sigma}_t \,\mathbb{E}(\bm{z} \mid \bm{x}_t = \bm{x})
     \label{eq:sivelocity}
\end{equation}

\begin{definition}[Flow Map \citep{flowmap, tutorial}]
    The flow map $\bm{X}_{s,t} : \mathbb{R}^d \to \mathbb{R}^d$ for Eq.~\ref{eq:pflow} is the unique map such that
    \begin{equation}
        {X}_{s,t}(\bm{x}_s) = \bm{x}_t, \quad \text{for all } (s,t) \in [0,1]^2, 
    \end{equation}
    where $(\bm{x}_t)_{t \in [0,1]}$ is any solution to the ODE Eq.~\ref{eq:pflow}.
\end{definition}

\begin{proposition}[Consistency Property \citep{flowmap}]~\label{app:thmconsist}
    The flow map $\bm{X}_{s,t}(\bm{x})$ satisfies the Consistency Property
    \begin{equation}
        {X}_{s,r}({X}_{t,s}(\bm{x})) = {X}_{t,r}(\bm{x})
        \label{eq:consistency}
    \end{equation}
    for all $(t,s,r,\bm{x}) \in [0,1]^3 \times \mathbb{R}^d$. In particular, 
    ${X}_{s,t}({X}_{t,s}(\bm{x})) = \bm{x}$  
    for all $(s,t,\bm{x}) \in [0,1]^2 \times \mathbb{R}^d$.
\end{proposition}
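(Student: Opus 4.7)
The plan is to derive the identity directly from the existence and uniqueness of solutions to the PF-ODE $\dot{\bm{x}}_\tau = \bm{v}_\tau(\bm{x}_\tau)$, using a single trajectory indexed by all three times $t,s,r$. No calculation is required beyond tracking what the flow map actually means.

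First I would fix $\bm{x} \in \mathbb{R}^d$ and $t \in [0,1]$, and let $(\bm{x}_\tau)_{\tau \in [0,1]}$ denote the unique solution of $\dot{\bm{x}}_\tau = \bm{v}_\tau(\bm{x}_\tau)$ satisfying $\bm{x}_t = \bm{x}$. Existence and uniqueness of such a solution, both forward and backward in time, follow from the standard regularity already imposed on $\bm{v}_\tau$ in the paper (e.g., the one-sided Lipschitz condition invoked elsewhere, and implicitly the smoothness required for $X_{\tau_1,\tau_2}$ to be a well-defined diffeomorphism of $\mathbb{R}^d$). By the definition of the flow map applied along this single trajectory, we have $X_{t,s}(\bm{x}) = \bm{x}_s$ and $X_{t,r}(\bm{x}) = \bm{x}_r$.

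Next I would observe that the very same curve $(\bm{x}_\tau)_\tau$ is also the unique solution of the ODE with initial condition $\bm{x}_s$ imposed at time $s$; this is exactly where uniqueness is used, since any two solutions agreeing at one time must coincide on all of $[0,1]$. Consequently $X_{s,r}(\bm{x}_s) = \bm{x}_r$. Composing the two identities yields
\begin{equation*}
X_{s,r}(X_{t,s}(\bm{x})) \;=\; X_{s,r}(\bm{x}_s) \;=\; \bm{x}_r \;=\; X_{t,r}(\bm{x}),
\end{equation*}
which is the claimed consistency property. The special case $X_{s,t}(X_{t,s}(\bm{x})) = \bm{x}$ follows immediately by taking $r = t$, since then $\bm{x}_r = \bm{x}_t = \bm{x}$.

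The only genuine obstacle is bookkeeping around orientation: the statement quantifies over all $(t,s,r) \in [0,1]^3$ without assuming $r \leq s \leq t$, so the trajectory $(\bm{x}_\tau)$ must be defined on the whole interval $[0,1]$ and the flow map must make sense for both forward and backward time. Under the paper's regularity assumptions on $\bm{v}_\tau$ this is standard, and once global existence and uniqueness are granted, the argument above applies uniformly to any ordering of the three times.
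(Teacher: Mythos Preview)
Your argument is correct and is precisely the standard proof of the semigroup property for flow maps: fix a single trajectory through $\bm{x}$ at time $t$, invoke uniqueness to identify it with the trajectory through $\bm{x}_s$ at time $s$, and read off the composition. The paper itself does not supply a proof of this proposition; it simply states it as a cited preliminary from \cite{flowmap}, so there is nothing to compare against beyond noting that your derivation is the canonical one.
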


\subsection{Flow Map Solver}
\label{app:ddim}

\subsubsection{Euler Solver}
With a probability velocity field $\bm v_t(\bm x)$ which can be derived from a pre-defined probability path or approximated by $\bm v_t^{\theta}(\bm x)$ with a neural network, an Euler Solver can predict the Flow Map from $t$ to $r$ with
\begin{equation}
    \bm x_r = X_{t,r}(\bm x_t) = \bm x_t + \int_t^r\bm v_\tau (\bm x) d\tau .
\end{equation}
Besides, if the integral of $\bm v_\tau(\bm x)$ is given as $\bm u_{t,r}=  \frac{1}{r-t} \int_t^r\bm v_\tau d\tau$ or parameterized with $\bm u_{t,r}^{\theta}$, the flow map can be easily obtained with 
\begin{equation}
\bm x_r = X_{t,r}(\bm x_t) = \bm x_t + (r-t)\bm u(t,r).
\end{equation}

\subsubsection{DDIM Solver}

Let the forward process be defined by
\begin{equation}
    \bm{x}_r = \alpha(r)\bm{x}_0 + \sigma(r)\bm{\varepsilon}, 
    \qquad \bm{\varepsilon}\sim\mathcal{N}(\bm{0},\bm{I}),
\end{equation}
so that at any $t$ we have
\begin{equation}
    \bm{x}_t = \alpha(t)\bm{x}_0 + \sigma(t)\bm{\varepsilon}, 
    \qquad 
    \bm{v}_t = \dot{\alpha}(t)\bm{x}_0 + \dot{\sigma}(t)\bm{\varepsilon}.
\end{equation}

\paragraph{Conditional formulation.}
Conditioned on $\bm{x}_t$, the posterior distribution of $(\bm{x}_0,\bm{\varepsilon})$ 
is Gaussian, and hence both $\bm{x}_r$ and $\bm{v}_t$ can be written as linear 
functions of $\bm{x}_0,\bm{\varepsilon}$. 
Taking conditional expectations yields
\begin{equation}
    \mathbb{E}
    \begin{bmatrix} \bm{x}_t \\ \bm{v}_t \end{bmatrix}
    =
    \begin{bmatrix} \alpha_t & \sigma_t \\ \dot{\alpha}_t & \dot{\sigma}_t \end{bmatrix}
    \mathbb{E}\!\begin{bmatrix} \bm{x}_0 \\ \bm{\varepsilon} \end{bmatrix},
\end{equation}
and by inversion, we obtain
\begin{equation}
    \mathbb{E}
    \begin{bmatrix} \bm{x}_0 \\ \bm{\varepsilon} \end{bmatrix}
    =
    \frac{1}{\alpha_t \dot{\sigma}_t - \sigma_t \dot{\alpha}_t}
    \begin{bmatrix} \dot{\sigma}_t & -\sigma_t \\ -\dot{\alpha}_t & \alpha_t \end{bmatrix}
    \mathbb{E}
    \begin{bmatrix} \bm{x}_t \\ \bm{v}_t \end{bmatrix}.
\end{equation}

\paragraph{DDIM update.}
Substituting back into the expression for $\bm{x}_r$ gives
\begin{align}
    \mathbb{E}[\bm{x}_r \mid \bm{x}_t] 
    &= \alpha_r\,\mathbb{E}[\bm{x}_0\mid\bm{x}_t] + \sigma_r\,\mathbb{E}[\bm{\varepsilon}\mid\bm{x}_t] \\
    &= \bar{\alpha}(t,r)\,\bm{x}_t + \bar{\beta}(t,r)\,\bm{v}_t,
\end{align}
where the coefficients are
\begin{equation}
    \bar{\alpha}(t,r) = \frac{\alpha_r \dot{\sigma}_t - \sigma_r \dot{\alpha}_t}{\alpha_t \dot{\sigma}_t - \sigma_t \dot{\alpha}_t}, 
    \qquad
    \bar{\beta}(t,r) = \frac{-\alpha_r \sigma_t + \sigma_r \alpha_t}{\alpha_t \dot{\sigma}_t - \sigma_t \dot{\alpha}_t}.
\end{equation}

\paragraph{Cosine Path.}
In the cosine path, the schedule of $\alpha_t = \alpha(t)$ and $\sigma_t = \sigma(t)$ reads 
\[
\alpha(t) = \cos\left( \frac{\pi}{2} t \right), \quad \sigma(t) = \sin\left( \frac{\pi}{2} t \right)
\]
\[
\dot{\alpha}(t) = -\frac{\pi}{2} \sin\left( \frac{\pi}{2} t \right), \quad
\dot{\sigma}(t) = \frac{\pi}{2} \cos\left( \frac{\pi}{2} t \right)
\]

Then:
\begin{align}
    \bar{\alpha}(t, r) &= \cos\left( \frac{\pi}{2}(r - t) \right), \quad
\bar{\beta}(t, r) = \frac{2}{\pi}\sin\left( \frac{\pi}{2}(r - t) \right)  \label{eq:cosine_ddim}
\end{align}

\paragraph{Linear Schedule.}
In the linear path, the schedule of $\alpha_t = \alpha(t)$ and $\sigma_t = \sigma(t)$ reads 

\[
\alpha(t) = 1 - t, \quad \sigma(t) = t \Rightarrow
\dot{\alpha}(t) = -1, \quad \dot{\sigma}(t) = 1
\]

Then:
\[
\bar{\alpha}(t,r) = \frac{(1 - r)(1) - r(-1)}{(1 - t)(1) - t(-1)} = 1
\]
\[
\bar{\beta}(t,r) = \frac{- (1 - r) t + r (1 - t)}{1} = r - t
\]

Therefore, 
\[
\bar{\alpha}(t,r) =  1, \quad 
\bar{\beta}(t,r) = r - t
\]

\subsection{Derived Flow Path from preconditioner of EDM}
\label{app:flowpath}
The original establishment of EDM is based on the score-based diffusion model, while in this part, we aim to demonstrate that although in EDM, $\alpha_t$ and $\sigma_t$ do not satisfy 
\[
\alpha_0 = 1, \alpha_1 = 0; \sigma_0 = 0, \sigma_1=1,
\]
the preconditioner of EDM is equivalent to the cosine path in our paper, or namely TrigFlow in sCT, by using the change-of-variable. This part is mostly based on Appendix~B of TrigFlow proposed by \cite{scm}, while we use a more unified view from stochastic interpolants~\citep{stochasticinterp} and SiT~\citep{sit}.

\subsubsection{Score-based view of EDM.}
\paragraph{EDM forward diffusion.}
We draw $\bm{x}_0\sim p_{\text{data}}$ and $\bm{\varepsilon}\sim\mathcal{N}(\bm{0},\sigma_{\text{data}}\bm{I})$ and define
\begin{equation}
\label{eq:edm-forward}
\bm{x}_t \;=\; \alpha_t\,\bm{x}_0 \;+\; \sigma_t\,\bm{\varepsilon}, 
\end{equation}
where $\alpha_t>0$ and $\sigma_t>0$ are schedule functions determined by a noise scale $\sigma(t)$:
\begin{equation}
    \alpha_t \;=\; \frac{\sigma_\text{data}}{\sqrt{\sigma_\text{data}^2+\sigma(t)^2}}, 
    \qquad
    \sigma_t \;=\; \frac{\sigma(t)}{\sqrt{\sigma_\text{data}^2+\sigma(t)^2}}.\label{eq:eq30}
\end{equation}
Here $\sigma_\text{data}$ denotes the data standard deviation, and the EDM noise schedule is
\begin{equation}
    \sigma(t) \;=\;
    \Big(\sigma_{\max}^{1/\rho} + t\big(\sigma_{\min}^{1/\rho}-\sigma_{\max}^{1/\rho}\big)\Big)^{\rho},
    \qquad t\in[0,1],
\end{equation}
with typical choices $\sigma_{\min}\approx 2\times 10^{-3}$, 
$\sigma_{\max}\approx 80.0$, and $\rho=7$.

\subsubsection{From EDM preconditioner to cosine path.}

\paragraph{Score parameterization.}
We may interpret EDM as a score-based model. Specifically, define the score
\begin{equation}
    \bm{s}_t(\bm{x}_t) := \nabla_{\bm{x}_t}\log p_t(\bm{x}_t),
\end{equation}
and approximate it by a neural network
\begin{equation}
    \varphi^\theta(\bm{x}_t,t) \;\approx\; \bm{s}_t(\bm{x}_t).
\end{equation}
Since the Gaussian corruption satisfies $\bm{\varepsilon}=-\sigma(t)\bm{s}_t(\bm{x}_t)$, 
the EDM predictor is written as
\begin{equation}
D^\theta(\bm{x}_t,t)
= c_{\mathrm{skip}}(t)\,\bm{x}_t 
+ c_{\mathrm{out}}(t)\,\varphi^\theta\!\big(c_{\mathrm{in}}(t)\bm{x}_t,\,c_{\mathrm{noise}}(t)\big),
\end{equation}
since $\frac{\sigma_{\text{data}}^2}{\sigma^2(t) + \sigma_{\text{data}}^2} (\bm{x}_0 + \sigma(t)\bm{\varepsilon}) = c_{\text{skip}}(t) \bm{x}_t $, and according to Eq.~\ref{eq:edm-forward} and Eq.~\ref{eq:eq30},
with scaling coefficients ensuring unit-normalized training targets:
\begin{equation}
\label{eq:edm-coeffs}
c_{\mathrm{in}}(t) = \frac{1}{\sigma_\text{data}},\qquad
c_{\mathrm{skip}}(t) = \alpha_t,\qquad
c_{\mathrm{out}}(t) = -\,\sigma_\text{data}\,\sigma_t.
\end{equation}
and the denoiser reduces to
\begin{equation}
{D}^\theta(\bm{x}_t,t)
= \alpha_t\,\hat{\bm{x}}_t
- \beta_t\,\sigma_\text{data}\,\varphi^\theta\!\big(\hat{\bm{x}}_t/\sigma_\text{data},\,c_{\mathrm{noise}}(t)\big).\label{eq:denoise}
\end{equation}

\paragraph{Cosine reparameterization.}
Since $\alpha_t^2+\beta_t^2=1$, we can introduce a cosine time variable $t'\in[0,1]$ such that
\begin{equation}
\alpha_t = \cos\!\Big(\tfrac{\pi}{2}t'\Big),\qquad
\beta_t = \sin\!\Big(\tfrac{\pi}{2}t'\Big),
\qquad
t' = \tfrac{2}{\pi}\arctan\!\Big(\tfrac{\beta_t}{\alpha_t}\Big) = \tfrac{2}{\pi}\arctan\!\Big(\tfrac{\sigma(t)}{\sigma_{\text{data}}}\Big).
\end{equation}
On this cosine path, we may equivalently define
\begin{equation}
\alpha(t') = \cos\!\Big(\tfrac{\pi}{2}t'\Big),\qquad
\sigma(t') = \sin\!\Big(\tfrac{\pi}{2}t'\Big), \label{eq:cosinesche}
\end{equation}
which again satisfies $\alpha(t')^2+\sigma(t')^2=1$.  
Moreover, since in EDM one typically samples $t \sim \log\mathcal{N}(P_{\mathrm{mean}}, P_{\mathrm{std}}^2)$, 
under the change of variables $t' = \tfrac{2}{\pi}\arctan(t)$, the resulting sampling matches the time parameterization used in CT and sCT (Table~\ref{tab:methodconclude}).

\subsubsection{From Score parameterization to Velocity}
In general, we can denote $t' \in[0,1]$ as $t$, with Eq.~\ref{eq:cosinesche}, which leads the denoising predictor of Eq.~\ref{eq:denoise} to 
\begin{equation}
    {D}^\theta(\bm{x}_t,t)
= \cos(\frac{\pi}{2}t)\,{\bm{x}}_t
- \sin(\frac{\pi}{2}t)\,\sigma_\text{data}\,{\varphi}^\theta\!\big(\bm{x}_t/\sigma_\text{data},\,c'_{\text{noise}}(t)\big),
\end{equation}
Since $ {D}^\theta(\bm{x}_t,t) $ aims to approximate $\bm{x}_0$, and if we write
\[\bm{v}^{\theta}(\bm{x}_t, t) = \frac{\pi}{2}\,\sigma_\text{data}\,
{\varphi}^\theta\!\big(\bm{x}_t/\sigma_\text{data},\,c'_{\text{noise}}(t)\big).\],
it reads
\[
{D}^\theta(\bm{x}_t,t)
= \cos(\frac{\pi}{2}t){\bm{x}}_t + \frac{2}{\pi}\sin(-\frac{\pi}{2}t) \bm{v}^{\theta}(\bm{x}_t, t) 
\]
the coefficient $\cos(\frac{\pi}{2}t)$ and $ \frac{2}{\pi}\sin(-\frac{\pi}{2}t) $ coincides to $\bar\alpha(t,0)$ and $\bar\beta(t,0)$ in Eq.~\ref{eq:cosine_ddim}, the parameterization of $\bm{v}_t^\theta(\bm{x}_t) = \frac{\pi}{2}\sigma_{\text{data}}F^{\theta}(\bm{x}_t,t)$ is equivalent to denoise the path of preconditioner in EDM schedule. The same evidence is also provided with Eq.~4 in \cite{scm}.

\section{Derivation of Flow Map Construction and Loss}
\label{app:flowmapconstruct}
\subsection{Consistency Training}
\label{app:ct}
In CTs and CMs~\citep{cm}, the original paper uses the EDM preconditioner as the components of the basic diffusion model. By using a neural network $\varphi^\theta$ to approximate the score function $\bm{s}_t(\bm{x})$, its target is to map any noised samples in $t$ to $0$ which in the flow map notation,  reads \[\hat X^\theta_{t,0}(\bm{x}_t)=f^\theta(\bm{x}_t) = c_\text{skip}(t) \bm{x}_t + c_\text{out} \sigma_\text{data}\varphi^{\theta}(\bm{x}_t),\] 
such that 
\begin{align}
    l_\text{cm} = d(f^\theta(\bm{x}_t) , \mathrm{sg}(f^\theta(\hat {\bm{x}}_s))), \label{eq:ddim_ori}
\end{align}
where  
\begin{equation}
\begin{aligned}
    t &= [\sigma_{\text{max} }^{1/\rho}+  \frac{\tau}{K}(\sigma_{\text{min} }^{1/\rho} - \sigma_{\text{max} }^{1/\rho})]^{\rho}\\
    s &= [\sigma_{\text{max} }^{1/\rho}+\frac{\tau+1}{K}(\sigma_{\text{min} }^{1/\rho} - \sigma_{\text{max} }^{1/\rho})]^{\rho} \\
    \tau &\sim \mathcal{U}[0,1,\ldots,K], ~\label{eq:cttimesampler}
\end{aligned} 
\end{equation}

and $\hat {\bm{x}}_s$ is on the same conditional flow path that generates $\bm{x}_t$.
By adopting the equivalence of EDM preconditioner to Cosine path, as shown in Appendix~\ref{app:flowpath}, we can write $F^\theta$ as the approximator of $\bm{v}_t$ under the change-of-variable, which reads
\begin{equation}
\begin{aligned}
    \bm{x}_0 &\sim p_0, \quad \bm{\varepsilon}\sim \mathcal{N}(0,1) \\
    \bm{x}_t &= \cos(\frac{\pi}{2}t)\bm{x}_0 + \sin(\frac{\pi}{2}t)\bm \varepsilon \\
    \bm{v}_{t|0} &= -\frac{\pi}{2}\sin(\frac{\pi}{2}t) \bm{x}_0 + \frac{\pi}{2}\cos(\frac{\pi}{2}t)\bm \varepsilon \\
    \bm{\hat{x}}_s &= \cos(\frac{\pi}{2}s)\bm{x}_0 + \sin(\frac{\pi}{2}s)\bm \varepsilon,
\end{aligned}
\end{equation}
Then, following the derivation of Eq.~\ref{eq:cosine_ddim}, 
by substituting the coefficients $\bar{\alpha}_{t,s}$ and $\bar{\beta}_{t,s}$, 
we can equivalently express
\begin{equation}
\begin{aligned}
\hat X_{t,s}(\bm{x}_t) = \bm{\hat{x}}_s &= \mathrm{DDIM}(\bm{x}_t, \bm{v}_{t|0}, t, s) \\
\hat X_{s,r}(\bm{\hat{x}}_s) =\bm{\hat{x}}_r &= \mathrm{DDIM}(\hat{\bm{x}}_s, F^\theta(\hat{\bm{x}}_s), s, r)\\
X^\theta_{t,r}(\bm{x}_t) = \bm{x}^\theta_r &= \mathrm{DDIM}(\bm{x}_t, F^\theta(\bm{x}_t), t, r) 
\end{aligned}
\end{equation}

Finally, by replacing $d$ in Eq.~\ref{eq:ddim_ori}, it reads
\begin{equation}
    \begin{aligned}
    l_\text{ct}(\bm{x}_t,r,s,t;\theta) &= \mathrm{LPIPS}(f^\theta(\bm{x}_t) , \mathrm{sg}(f^\theta(\hat {\bm{x}}_s))),  \\
    &= \mathrm{LPIPS}(X^\theta_{t,r}(\bm{x}_t) , \mathrm{sg}(\hat X_{s,r}(\hat{\bm{x}}_s))) \\
    &= \mathrm{LPIPS}\Big(\mathrm{DDIM}(\bm{x}_t, \bm{v}_t^{\theta}(\bm{x}_t), t, r), \;\mathrm{sg}\big(\mathrm{DDIM}(\hat{\bm{x}}_s, \bm{v}_s^{\theta}(\hat{\bm{x}}_s), s, r)\big)\Big),~\label{eq:ctloss_final}
\end{aligned}
\end{equation}

where $\hat{\bm{x}}_s = \mathrm{DDIM}(\bm{x}_t, \bm{v}_{t|0}, t,s)$, which coincides the description of CTs in Sec.~\ref{sec:examples}. Further, under the change-of-variable, the time sampler Eq.~\ref{eq:cttimesampler} will be of the form as described by Table~\ref{tab:methodconclude}.
\subsection{Shortcut Diffusion}
\label{app:scd}
In the original paper of SCD~\citep{scd}, it parameterizes the velocity with the neural network as 
\[
F^\theta(\bm{x}_t, t,r) =\bm{v}^\theta(\bm{x}_t,t,r),
\]
while we claim that the $\bm{v}^\theta(\bm{x}_t,t,r)$ is not the instantaneous one, because it requires the entries of both $t$ as the start point, and $r$ as the end point. Instead, we regard it as the average velocity, leading to our parameterization of
\[
F^\theta(\bm{x}_t, t,r) =\bm{u}_{t,r}^\theta(\bm{x}_t).
\]
In this way, 
$x_t$ is first sampled from a linear path, as 
\begin{equation}
\begin{aligned}
    \bm{x}_0 &\sim p_0, \quad \bm{\varepsilon}\sim \mathcal{N}(0,1) \\
    \bm{x}_t &= (1-t)\bm{x}_0 + t\bm \varepsilon \\
    \bm{v}_{t|0} &= - \bm{x}_0 + \bm \varepsilon ,
\end{aligned}
\end{equation} 
Then, the flow map is constructed via
\begin{equation}
\begin{aligned}
\hat X_{t,s}(\bm{x}_t) = \bm{\hat{x}}_s &= \bm{x}_t -h\bm{u}^{\theta}_{t,s}(\bm{x}_t) \\
\hat X_{s,r}(\bm{x}_s) = \bm{\hat{x}}_r &= \bm{x}_s -h\bm{u}^{\theta}_{s,r}(\bm{\hat x}_s) \\
X^\theta_{t,r}(\bm{x}_t) =  \bm{\hat{x}}^\theta_t &= \bm{x}_t -2h\bm{u}^{\theta}_{t,r}(\bm{x}_t)
\end{aligned}
\end{equation}
Finally,
according to the consistency property of flow map shown in Prop.~\ref{app:thmconsist}, and by setting $w = h^2$ the loss term for the regularization of velocity can be rewritten as 
\begin{equation}
\begin{aligned}
        l_{\mathrm{scd}}(\bm{x}_t, r,s,t; \theta) &=\frac{1}{4h^2}\cdot\left\| \bm{x}_t - {2h}\bm{u}^{\theta}_{t,r}(\bm{x}_t) - \mathrm{sg}\left(\bm{x}_t - {h}\bm{u}^{\theta}_{t,s}(\bm{x}_t)  - {h}\bm{u}^{\theta}_{s,r}(\bm{x}_t - {h}\bm{u}^{\theta}_{t,s}(\bm{x}_t) \right)  \right\|^2_2 \\
        &=\frac{1}{4h^2}\cdot\left\| \bm{x}_t - {2h}\bm{u}^{\theta}_{t,r}(\bm{x}_t) - \mathrm{sg}\left(\bm{x}_t - {h}\bm{u}^{\theta}_{t,s}(\bm{x}_t)  - {h}\bm{u}^{\theta}_{s,r}(\hat{\bm{x}}_s) \right)\right\|^2_2 \\
        &= \left\| 
\bm{u}^{\theta}_{t,r}(\bm{x}_t)
- \frac{1}{2}\mathrm{sg}\!\left( 
\bm{u}^{\theta}_{t,s}(\bm{x}_t) 
+ \bm{u}^{\theta}_{s,r}(\bm{\hat x}_s) 
\right)
\right\|_2 ^2,
\end{aligned}
\label{eq:appscdloss}
\end{equation}
where $r,s,t$ are equi-spaced, such that $t-s = s-r =h$, which coincides Eq.~\ref{eq:scdloss} in Sec.~\ref{sec:examples}. Specifically, in the time sampler, it defines the total step $K$ with 
$T=\lfloor \log_2 K \rfloor$. For each sample, it draws 
$2h \in \{2^{-0},2^{-1},\dots,2^{-(T-1)}\}$ and 
$e\sim\mathcal{U}(0,1)$, then computes
\begin{equation}
    t = \tfrac{1}{K}\Big\lfloor 2h K + e\cdot(K-2h K+1)\Big\rfloor, 
    \qquad 
    r = t - 2h, 
    \qquad 
    s = t - h,
\end{equation}
as the time  sampler for $\{r,s,t\}$. We denote the time sampler as $(t,h) \sim \mathrm{Uniform}\log_{2}(t,h)$, and $s = t - h$, $r = t - 2h$.
\subsection{Inductive Moment Matching}
\label{app:imm}

Given a mini-batch of size $B$, IMM first draws $\{(\bm{x}_0^{(i)}, \bm{\varepsilon}^{(i)})\}_{i=1}^B$, 
and partition them into groups of size $M$. Within each group, a triplet $(r,s,t)$ 
is sampled, where $(r,t)$ are drawn uniformly from $[0,1]$ with $s$ separated by a fixed difference, as 
\begin{equation}
    \begin{aligned}
        t &\sim\mathcal{U}[0,1] \\
        n_s &= \frac{1}{1-t} - \frac{1}{2^\gamma} \\
        s &=\frac{n_s}{n_s + 1} \\
        r &\sim \mathcal{U}[0,t],
    \end{aligned}
\end{equation}
where $\gamma$ is usually set as 12 according to its code implementation.
Each group thus defines $M$ correlated particle samples that share the same flow interpolation times.
For each group of $M$ particles, IMM constructs an $M \times M$ kernel matrix based on a 
positive definite kernel function as metrics $d(\cdot, \cdot)$ (\emph{e.g.}, RBF). The objective is
\begin{equation}
\mathcal{L}_{\mathrm{imm}}(\theta) = 
\mathbb{E}_{\bm{x}_t,\bm{x}'_t, \bm{x}_r,\bm{x}'_r, r,s,t}
\Big[
w(r,t)\,\big(
\mathrm{ker}(\bm{z}_{t,r}, \bm{z}'_{t,r})
+ \mathrm{ker}(\bm{z}_{s,r}, \bm{z}'_{s,r})
- \mathrm{ker}(\bm{z}_{t,r}, \bm{z}'_{s,r})
- \mathrm{ker}(\bm{z}'_{t,r}, \bm{z}_{s,r})
\big)
\Big],\label{eq:immmmd}
\end{equation}
where 
\begin{align*}
    \bm{z}_{t,r} &= \mathrm{DDIM}(\bm{x}_t,\bm{v}^\theta_t(\bm x_t), t, r), \\
    \bm{z}'_{t,r} &= \mathrm{sg}(\mathrm{DDIM}(\bm{x}_t,\bm{v}^\theta_t(\bm x_t), t, r)),
\end{align*}

and $w(r,t)$ is a prior weighting. Intuitively, samples are repelled by intra-group pairs 
(\emph{e.g.}\ $\bm{z}_{t,r}$ vs.\ $\bm{z}'_{t,r}$) while attracted towards inter-group matches 
($\bm{z}_{t,r}$ vs.\ $\bm{z}'_{s,r}$). This ensures both intra-sample diversity and inter-sample alignment.

In practice, a batch of size $B$ is divided into $B/M$ groups, 
and the IMM loss is computed as an average over these groups.
For kernels, RBF and negative pseudo-Huber kernels are common choices for $\mathrm{ker}(\cdot,\cdot)$, 
which guarantee moment matching up to all orders.

Further, we bridge the IMM loss with the common flow map learning objective in the following. In Eq.~\ref{eq:immmmd}, it gives the group kernel function, according to Appendix. C.3 in \cite{imm}, we can write the loss here as 
\begin{equation}
    \begin{aligned}
        &\mathcal{L}_{\mathrm{imm}}(\theta) \\
        =& 
\mathbb{E}_{\bm{x}_t,\bm{x}'_t, \bm{x}_r,\bm{x}'_r, r,s,t}
\Big[
w(r,t)\,\big(
\mathrm{ker}(\bm{z}_{t,r}, \bm{z}'_{t,r})
+ \mathrm{ker}(\bm{z}_{s,r}, \bm{z}'_{s,r})
- \mathrm{ker}(\bm{z}_{t,r}, \bm{z}'_{s,r})
- \mathrm{ker}(\bm{z}'_{t,r}, \bm{z}_{s,r})
\big)
\Big],\ \\
=& \mathbb{E}_{r,s,t}
\Big[w(r,t) (\mathbb{E}_{\bm{x}_t,\bm{x}'_t, \bm{x}_r,\bm{x}'_r}    \Big[
        \langle
        \mathrm{ker}(\bm{z}_{t,r}, \cdot),
        \mathrm{ker}(\bm{z}'_{t,r}, \cdot)
        \rangle
        +
        \langle
        \mathrm{ker}(\bm{z}_{s,r}, \cdot),
        \mathrm{ker}(\bm{z}'_{s,r}, \cdot)
        \rangle
        \\ & \quad \quad\quad\quad\quad\quad\quad\quad\quad\quad\quad- 
        \langle
        \mathrm{ker}(\bm{z}_{t,r}, \cdot),
        \mathrm{ker}(\bm{z}'_{s,r}, \cdot)
        \rangle
        -
        \langle
        \mathrm{ker}(\bm{z}'_{t,r}, \cdot),
        \mathrm{ker}(\bm{z}_{s,r}, \cdot)
        \rangle
    \Big])
\Big] \\
=& \mathbb{E}_{r,s,t}
\Bigg[
    w(r,t)
    \Big\langle
    \mathbb{E}_{\bm{x}_t}
    \big[\mathrm{ker}(X_{t,r}^{\theta}(\bm{x}_t), \cdot)
    - 
    \mathrm{ker}(\mathrm{sg}(X_{s,r}^{\theta}(\hat{\bm{x}}_s)), \cdot)\big],\\
    & \quad \quad\quad\quad\quad\quad
    \mathbb{E}_{\bm{x}_t'} 
    \big[\mathrm{ker}(X_{t,r}^{\theta}(\bm{x}_t'), \cdot)
    - 
    \mathrm{ker}(\mathrm{sg}(X_{s,r}^{\theta}(\hat{\bm{x}}_s')), \cdot)\big]
    \Big\rangle 
\Bigg] \\
    =&  \mathbb{E}_{r,s,t}
\Bigg[
    w(r,t)     \Big\|
    \mathbb{E}_{\bm{x}_t}
    \big[\mathrm{ker}(X_{t,r}^{\theta}(\bm{x}_t), \cdot)
    - 
    \mathrm{ker}(\mathrm{sg}(X_{s,r}^{\theta}(\hat{\bm{x}}_s)), \cdot)\big]
    \Big\|_{\mathcal{H}}^2
\Bigg]
    \end{aligned}
\end{equation}
where $\hat{\bm{x}}_s = \mathrm{DDIM}({\bm{x}}_t, \bm{v}_{t|0},t,s) $ is estimated with conditional velocity, and $\Big\|
    \mathbb{E}_{\bm{x}}
    \big[\mathrm{ker}(X_{t,r}^{\theta}(\bm{x}_t), \cdot)
    - 
    \mathrm{ker}(\mathrm{sg}(X_{s,r}^{\theta}(\hat{\bm{x}}_s)), \cdot)\big]
    \Big\|_{\mathcal{H}}^2$ is the Maximum Mean Discrepancy commonly defined
on Reproducing Kernel Hilbert Space (RKHS) $\mathcal{H}$ with a positive definite kernel in IMM.
Then, according to Jensen’s inequality, 
\begin{equation}
    \begin{aligned}
       & \mathbb{E}_{r,s,t}
\Bigg[
    w(r,t)     \Big\|
    \mathbb{E}_{\bm{x}_t}
    \big[\mathrm{ker}(X_{t,r}^{\theta}(\bm{x}_t), \cdot)
    - 
    \mathrm{ker}(\mathrm{sg}(X_{s,r}^{\theta}(\hat{\bm{x}}_s)), \cdot)\big]
    \Big\|_{\mathcal{H}}^2
\Bigg] \\
\leq & \mathbb{E}_{r,s,t, \bm{x}_t} \Bigg[w(r,t) \big\|\mathrm{ker}(X_{t,r}^{\theta}(\bm{x}_t), \cdot)
    - 
    \mathrm{ker}(\mathrm{sg}(X_{s,r}^{\theta}(\hat{\bm{x}}_s)), \cdot)\big\|_\mathcal{H}^2 \Bigg]
    \end{aligned}
\end{equation}
In this way, we define $d(\bm{x},\bm{y})$ as RKHS discrepancy $\big\|\mathrm{ker}(\bm{x}, \cdot)
    - 
    \mathrm{ker}(\bm{y}, \cdot)\big\|_\mathcal{H}^2 $, it reads
    \begin{equation}
        \mathcal{L}_{\mathrm{imm}}(\theta) \leq \mathbb{E}_{r,s,t\sim p(\tau), \,\bm{x}_t\sim p_t}
\big[
    w(r,t) \cdot d(    X^\theta_{t,r}(\bm{x}_t),\mathrm{sg}(\hat X_{s,r}\circ \hat X_{t,s}(\bm{x}_t))
)
\big]
    \end{equation}
    Therefore, minimizing Eq.~\ref{eq:flowmapconsist} is equivalent to upper-bounding the IMM loss. 

\subsection{MeanFlow}
\label{app:MeanFlow}

As MeanFlow takes the Linear flow path, we can easily obtain
\begin{equation}
    \mathrm{DDIM}(\bm{x}_t,\bm{v}_t,t,s) = \bm{x}_t + (s-t)\bm{v}_t,
\end{equation}
which means the DDIM solver and Euler solver are the same. With the flow map construction, we can write the corresponding terms into $\| \cdot \|^2$ of $d(\cdot, \cdot )$ as follows:
\begin{equation}
    \begin{aligned}
            & \left( \bm{x}_t + (r-t)\bm{u}_{t,r}^\theta(\bm{x}_t) \right) -\left( \bm{x}_t + (s-t) \bm{v}_t  + (r-s)\bm{u}_{s,r}^\theta \left( \bm{x}_s \right) \right)  \\
    =&(r-t)\bm{u}_{t,r}^\theta(\bm{x}_t) - (s-t) \bm{v}_t - (r-s)\bm{u}_{s,r}^\theta \left( \bm{x}_s \right) .
    \end{aligned}
\end{equation}By substituting $s=t-dt$ and normalized by $dt$, we get
\begin{equation}
\begin{aligned}
    & ((r-t)\bm{u}_{t,r}^\theta(\bm{x}_t) + dt \cdot \bm{v}_t - (r-t+dt)\bm{u}^\theta_{t-dt,r} \left( \bm{x}_{t-dt} \right)  ) / dt\\
    =& dt \cdot \left( \bm{v}_t + \frac{d\left[(r-t)\bm{u}^\theta_{t,r}(\bm{x}_t)\right]}{dt} \right) /dt\\
    =&  \bm{v}_t - \bm{u}_{t,r}^\theta(\bm{x}_t) - (t-r)\frac{d}{dt} \bm{u}^\theta_{t,r}(\bm{x}_t).
\end{aligned}
\end{equation}
However, the marginal velocity $\bm{v}_t$ is inaccessible in training, so it can be replaced by the conditional velocity $\bm{v}_{t|0}$.
From Eq. 6 in \citet{meanflow}, by adding the adaptive loss term $w$, this loss coincides with the training objective of MeanFlow in Eq.~\ref{eq:MeanFlowloss}, as 
\begin{equation}
    l(\bm{x}_t, r, t-dt, t;\theta) = w\cdot \left\|\bm{u}^\theta_{t,r}(\bm{x}_t)
-\mathrm{sg}\left(\bm{v}_{t|0} + (r-t) \frac{d\bm{u}^{\theta}_{t,r}(\bm{x}_t)}{dt}\right) \right\|^2
\end{equation}
 Further, MeanFlow adopts an adaptively weighted squared L2 loss. 
Given the regression error $\Delta = u_\theta - u_{\text{tgt}}$, where $u_{\text{tgt}} = \mathrm{sg}\left(\bm{v}_{t|0} + (r-t) \frac{d\bm{u}^{\theta}_{t,r}(\bm{x}_t)}{dt}\right)$, the squared L2 loss is 
$\|\Delta\|_2^2$.  
To stabilize training, MeanFlow reweights $\|\Delta\|_2^2$ with 
\begin{equation}
    w = \frac{1}{(\|\Delta\|_2^2 + c)^p},
\end{equation}
where $c>0$ avoids division by zero and $p$ controls the weighting ($p=0.5$ recovers a Pseudo-Huber style loss).  
The final loss is defined as $\mathrm{sg}(w)\cdot \mathcal{L}$, where $\mathrm{sg}(\cdot)$ denotes the stop-gradient operator.

\subsection{s-Consistency Training}
\label{app:sct}
We here simplify the derivation with $\sigma_\text{data} = 1$. 
According to the Eq.~\ref{eq:ctloss_final}, we can write the corresponding terms with squared $l_2$-distance with $s = t - \Delta t$ and $l(\bm{x}_t, r,s,t;\theta)$ normalized by $\Delta t$, as the following:
\begin{equation}
\begin{aligned}
    &w(t)\lim_{\substack{s=t-\Delta t \\ \Delta t \to 0}}\frac{1}{{\Delta t}}\|\mathrm{DDIM}(\bm{x}_t,\bm{v}^\theta_t,t,r) - \mathrm{sg} (\mathrm{DDIM}\left(\mathrm{DDIM} (\bm{x}_t,\bm{v}_t,t,s),\bm{v}^\theta_s,s,r\right)) \|^2\\
    =& w(t)\lim_{\substack{s=t-\Delta t \\ \Delta t \to 0}}\frac{1}{{\Delta t}}\| \mathrm{DDIM}(\bm{ x}_t,\bm{v}^\theta_t,t,r) - \mathrm{sg}( \mathrm{DDIM}(\hat{\bm{ x}}_s,\bm{v}^\theta_s,s,r))\|^2  \\
     \\
    =&w(t)\lim_{\substack{s=t-\Delta t \\ \Delta t \to 0}}\left(\mathrm{DDIM}\left(\bm{x}_t,\bm{v}^\theta_t,t,r\right) - \mathrm{sg}(\mathrm{DDIM}(\hat{\bm{x}}_s,\bm{v}^{\theta}_s,s,r))\right)^\mathsf{T} \cdot \\&\quad \quad \quad \quad \quad \quad\frac{\mathrm{DDIM}\left(\bm{x}_t,\bm{v}^\theta_t,t,r\right) - \mathrm{sg} (\mathrm{DDIM}(\hat{\bm{x}}_s,\bm{v}^\theta_s,s,r))}{\Delta t} \\
    \approx &w(t) (\mathrm{DDIM}\left(\bm{x}_t,\bm{v}^\theta_t,t,r\right) - \mathrm{sg}(\mathrm{DDIM}(\hat{\bm{x}}_s,\bm{v}^\theta_s,s,r)))^\mathsf{T} \mathrm{sg}\left(\frac{d \mathrm{DDIM}\left(\bm{x}_t,\bm{v}^\theta_t,t,r\right)}{dt}\right) \\ \label{eq:sct_ddim}
\end{aligned}
\end{equation}
By fixing $r=0$, from Eq.~\ref{eq:cosine_ddim},  it can be obtain that the  gradient of Eq.~\ref{eq:sct_ddim} w.r.t. $\theta$ is   
\begin{equation}
    \begin{aligned}
        & w(t)\nabla_\theta \left[(\mathrm{DDIM}\left(\bm{x}_t,\bm{v}^\theta_t,t,r\right) - \mathrm{sg}(\mathrm{DDIM}(\hat{\bm{x}}_s,\bm{v}^\theta_s,s,r)))^\mathsf{T} \mathrm{sg}\left(\frac{d \mathrm{DDIM}\left(\bm{x}_t,\bm{v}^\theta_t,t,r\right)}{dt}\right)\right]\\
        =&w(t) \nabla_\theta \left[\mathrm{DDIM}\left(\bm{x}_t,\bm{v}^\theta_t,t,r\right)^\mathsf{T}\mathrm{sg}\left(\frac{d \mathrm{DDIM}\left(\bm{x}_t,\bm{v}^\theta_t,t,r\right)}{dt}\right )\right] \\
        =& w(t)\nabla_\theta  \left[\left(\cos(\frac{\pi}{2}t) \bm{x}_t - \sin(\frac{\pi}{2}t)\bm{v}^\theta_t\right)^\mathsf{T} \mathrm{sg}\left(\frac{d \mathrm{DDIM}\left(\bm{x}_t,\bm{v}^\theta_t,t,r\right)}{dt}\right)\right] \\
        =&\nabla_\theta  \left(\bm{v}^\theta_t \right)^\mathsf{T}\mathrm{sg}\left(  -\sin(\frac{\pi}{2}t) w(t)\frac{d \mathrm{DDIM}\left(\bm{x}_t,\bm{v}^\theta_t,t,r\right)}{dt} \right) \\
        =& \nabla_\theta \|\bm{v}^\theta_t -\mathrm{sg}\left(\bm{v}^\theta_t  + w'(t)\frac{d \mathrm{DDIM}\left(\bm{x}_t,\bm{v}^\theta_t,t,r\right)}{dt}\right)\|^2
    \end{aligned}
\end{equation}
where $w(t) = \frac{1}{\tan(\frac{\pi}{2}t)}$, and $w'(t) = -\sin(\frac{\pi}{2}t) w(t) = -\cos{(\frac{\pi}{2}t)}$, we prove that this flow map construction corresponds to the original loss with the specific time sampler, and the derived loss is in the same form as Eq.~\ref{eq:sctloss} where we rewrite $w' $ by $w$.

\section{Proof of Theorems and Propositions}
\subsection{Proof of Equivariance of MeanFlow and sCT-linear (Remark.~\ref{thm:remark1})}
\label{app:remark1}
\emph{Sketch of proof.} First note that under linear paths, $\hat X_{t,0}^{\theta}(\bm{x}_t) = \mathrm{DDIM}(\bm{x}_t, \bm{v}^{\theta}_t(\bm{x}_t), t, 0) = \bm{x}_t - t\bm{v}_t^\theta(\bm{x}_t)$. As for the training objective, with $w(t) = 1$ and linear path, Eq.~\ref{eq:sctloss} can be easily written as $l(\bm{x}_t,r,t-dt,t;\theta)\Big|_{r=0} = \| \bm{v}^\theta_{t}(\bm{x}_t) - \mathrm{sg}( \bm{v}_{t} + (r-t)\frac{d}{dt} \bm{v}^\theta_{t}(\bm{x}_t))\|_2^2\Big|_{r=0}$. Since in sCT, $r$ is fixed to $0$, parameterization of the neural network can be invariant to $r$, leading to $\bm{v}^\theta_{t}(\bm{x}_t) = F^{\theta}(\bm{x}_t, t, 0)$. Thus, in sampling, by replacing $\bm{u}_{t,0}$ and $\bm{v}_t$  with $F^{\theta}$ in Eq.~\ref{eq:meanvelo} and~\ref{eq:ddimv}, respectively, the sampling processes are the same when following linear paths.

\subsection{Proof of Error Bound (Theorem~\ref{thm:errorbound})}
\label{app:thmbound}

In this section, we aim to prove the error bound of DTSC\&CTSC. Specifically, the theorem is stated as follows. 
\begin{theoremgray}
    [Error bound of DTSC\&CTSC] \label{thm:apperrorbound} Assume the marginal velocity of the flow path satisfies the one-sided Lipschitz condition, where
    $$
    \exists\, C_t \in L^1[0,1] \;:\;
\bigl(\bm{v}_t(\bm x)-\bm{v}_t(\bm y)\bigr)\cdot(\bm x-\bm y) \geq -C_t \| \bm x-\bm y \|^2,
\quad \text{for all } (t,\bm x,\bm y)\in[0,1]\times\mathbb{R}^d\times\mathbb{R}^d.
    $$
  Assume $X_{t,s}^\theta$ are twice continuously differentiable with bounded second derivatives, the weighting function $w(r,s,t)$ is non-negative and bounded. For DTSC, also assume $p(r=0)>0$, 1st-step satisfies $\hat X_{t,t}(\bm{x}_t) = \bm{x}_t$, and $\exists t_1 \leq \cdots \leq t_N\ \ \text{s.t}\ \ p(0,t_n,t_{n+1})>0, w(0,t_n,t_{n+1}) > 0$.
  
  Under $d(\bm{x}, \bm{y}) = \Vert \bm{x} - \bm{y}\Vert_2^2$, given $\bm{x}_1 \sim p_1$, let $p_0$ the density of $\bm{x}_0$, and ${p}^{\theta}_0$ the density of $\bm{x}^\theta_0 = X^\theta_{1,0}(\bm{x}_1)$ that is estimated by neural network with parameter $\theta$,  then
  \begin{align*}
   &W_2^2(p_0, p^\theta_0) \leq C^1_1 \mathcal{L}_\text{dtsc}(\theta) + C^1_2(t-s), \\
   &W_2^2(p_0, p^\theta_0) \leq C_1^2 \mathcal{L}_\text{ctsc}(\theta), 
  \end{align*}
  where we write the training objective in Eq.~\ref{eq:flowmapconsist} as $\mathcal{L}_\bullet(\theta) = \mathbb{E}_{r,s,t\sim p(\tau), \,\bm{x}_t\sim p_t}[
    l_\bullet(\bm{x}_t,r,s,t;\theta)], 
$ with $\bullet \in \{\text{ctsc}, \text{dtsc}\}$, and $W_2(\cdot,\cdot)$ is the Wasserstein-2 distance.
\end{theoremgray}
We note that MeanFlow loss and sCT loss are all $\mathcal{L}_\text{ctsc}(\theta)$, and CT loss and SCD loss are all $\mathcal{L}_\text{dtsc}(\theta)$. IMM's loss is calculated across different conditional paths, as finally bounded by the $\mathcal{L}_\text{dtsc}(\theta)$ as shown in Appendix~\ref{app:imm}. The mentioned previous methods all satisfy the assumptions about $w(r,s,t)$, $p(r,s,t)$, and $\hat X_{t,t}(\bm{x}_t)=\bm{x}_t$. As for the assumption of $d(\bm{x}, \bm{y}) = \Vert \bm{x} - \bm{y}\Vert_2^2$, it holds for all the mentioned methods except CT, which takes LPIPS as the metric function. The convergence of $\mathcal{L}_\text{ct}$ has already been proved by \cite{cm}.

We prove the theorem in three steps: (i) establish the error bound for DTSC; (ii) derive the start point differential CTSC bound; and (iii) further derive the end point differential CTSC bound.

\subsubsection{Error Bound of  DTSC}

\begin{lemma}
Assume $d$ and $X_{t,s}^\theta$ are both twice continuously differentiable with bounded second derivatives, the weighting function $w(r,s,t)$ is non-negative and bounded, and 1st-step satisfies $\hat X_{t,t}(\bm{x}_t) = \bm{x}_t$. We define a loss $\mathcal{L}_1$ as follows:
\begin{align}
    \mathcal{L}_1(\theta) \coloneqq \mathbb{E}\left[ w(r,s,t)\, d\big(\, X^{\theta}_{t,r}(\bm{x}_t), \mathrm{sg}(X^{\theta}_{s,r}({\bm{x}}_s)) \big) \right]. \label{eq:l1}
\end{align}
Then, 
\begin{align*}
    \mathcal{L}_\text{dtsc}(\theta) = \mathcal{L}_1(\theta) + \mathcal{O}(t-s),
\end{align*}
where $\mathcal{L}_\text{dtsc}$ is the discrete-time shortcut models' loss.
\end{lemma}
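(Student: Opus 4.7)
My strategy is to isolate the single source of discrepancy between $\mathcal{L}_\text{dtsc}$ and $\mathcal{L}_1$: the former uses the first-step estimate $\hat X_{t,s}(\bm{x}_t)$ as a proxy for $\bm{x}_s$ inside the two-step target, whereas the latter feeds the true interpolant value $\bm{x}_s$ into the parameterized outer step. Concretely, I first write
\begin{equation*}
\mathcal{L}_\text{dtsc}(\theta) - \mathcal{L}_1(\theta)
= \mathbb{E}\!\left[ w(r,s,t)\bigl(d(A,\; \hat X_{s,r}(\hat X_{t,s}(\bm{x}_t))) - d(A,\; X^\theta_{s,r}(\bm{x}_s))\bigr)\right],
\end{equation*}
with $A = X^\theta_{t,r}(\bm{x}_t)$. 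For every representative DTSC in Table~\ref{tab:methodconclude} the outer step $\hat X_{s,r}$ coincides with the parameterized map $X^\theta_{s,r}$, so the two targets simplify to $X^\theta_{s,r}(\hat X_{t,s}(\bm{x}_t))$ and $X^\theta_{s,r}(\bm{x}_s)$, respectively.

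Next I control the pointwise discrepancy $d(A,B) - d(A,C)$ with $B = X^\theta_{s,r}(\hat X_{t,s}(\bm{x}_t))$ and $C = X^\theta_{s,r}(\bm{x}_s)$. Under squared $L_2$ the identity $d(A,B) - d(A,C) = (2A - B - C)^\top(C - B)$ combined with Cauchy--Schwarz bounds the loss gap by $\sqrt{\mathbb{E}[w\|2A - B - C\|^2]}\cdot\sqrt{\mathbb{E}[w\|B - C\|^2]}$. The first factor is finite under the assumed bounded derivatives of $X^\theta$ and integrability of the interpolant, while Lipschitz continuity of $X^\theta_{s,r}$ (from the same bounded-derivative assumption) reduces the second factor to controlling $\|\hat X_{t,s}(\bm{x}_t) - \bm{x}_s\|$.

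The key remaining step is $\|\hat X_{t,s}(\bm{x}_t) - \bm{x}_s\| = \mathcal{O}(t-s)$ pointwise. Both quantities equal $\bm{x}_t$ at $s = t$: the former by the boundary condition $\hat X_{t,t}(\bm{x}_t) = \bm{x}_t$ assumed in the lemma, the latter by the definition of the stochastic interpolant evaluated at $s = t$. A first-order Taylor expansion in $s$ around $t$ of each side, whose remainders are uniformly controlled by the bounded second derivatives, then yields $\|\hat X_{t,s}(\bm{x}_t) - \bm{x}_s\| \leq M(\bm{x}_t)\,|t - s|$ with $M$ an integrable envelope. Substituting this into the Cauchy--Schwarz bound and using boundedness of $w$ delivers $\mathcal{L}_\text{dtsc}(\theta) = \mathcal{L}_1(\theta) + \mathcal{O}(t-s)$.

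The main obstacle is that the factor $\|2A - B - C\|$ in the Cauchy--Schwarz bound depends on $\bm{x}_t$, which is unbounded; justifying that this factor has finite second moment uniformly in $(r,s,t)$ requires combining the bounded-derivative assumption on $X^\theta$ with integrability of the interpolant. A secondary technicality is that smoothness of $\hat X_{t,s}$ in $s$ is implicit rather than stated — it holds for all examples considered (DDIM-based first steps are exact in $s$ for linear and cosine paths, while SCD's first step inherits smoothness from the parameterized $\bm{u}^\theta$), so it should be added as an auxiliary hypothesis to make the Taylor step rigorous.
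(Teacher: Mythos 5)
Your proposal is correct and rests on the same two essential facts as the paper's proof --- the boundary condition $\hat X_{t,t}(\bm{x}_t)=\bm{x}_t$ forcing $\hat X_{t,s}(\bm{x}_t)-\bm{x}_s=\mathcal{O}(t-s)$, and smoothness of the maps to propagate that perturbation through the loss --- but you organize the argument differently. The paper Taylor-expands \emph{both} losses around the common base point $(s,\bm{x}_s)$, using $\bm{x}_t=\bm{x}_s+(t-s)\bm{v}_s+o(t-s)$ to expand the one-step prediction $X^\theta_{t,r}(\bm{x}_t)$ as well as the target, and then observes that the zeroth- and first-order terms of the two expansions coincide, so the difference is absorbed into the remainder. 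You instead notice that the first argument $A=X^\theta_{t,r}(\bm{x}_t)$ is \emph{identical} in $\mathcal{L}_{\mathrm{dtsc}}$ and $\mathcal{L}_1$ and therefore never needs to be expanded; the two losses differ only through the argument fed to the outer map, and you control that directly via the polarization identity plus Cauchy--Schwarz. This is a genuine economy: you perturb one quantity instead of two. What it costs you is generality --- the identity $d(A,B)-d(A,C)=(2A-B-C)^\top(C-B)$ is specific to squared $L^2$, whereas the lemma is stated for any twice continuously differentiable $d$ and the paper's first-order expansion in $\partial_1 d$ covers that case (note, though, that the downstream Theorem restricts to squared $L^2$ anyway). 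The two technical caveats you flag yourself --- integrability of the $\|2A-B-C\|$ factor and smoothness of $\hat X_{t,s}$ in $s$ --- are present at exactly the same level of informality in the paper's proof (its Taylor remainders require the same uniform control, and it likewise asserts $\hat X_{t,s}(\bm{x}_t)=\bm{x}_s+\mathcal{O}(t-s)$ for the average-velocity solver by appealing to the boundary condition and twice continuous differentiability), so they do not constitute a gap relative to the reference argument.
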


\begin{proof}
As 
\begin{align*}
    \bm{x}_t = \bm{x}_s + (t-s)\bm{v}_s + \mathcal{O}(t-s),
\end{align*}
and here we define the $\theta^-$ as the parameters in the model which stop-grad operates, for notational simplicity.
By using Taylor expansion, we can get that
\begin{align*}
\mathcal{L}_1(\theta) 
&= \mathbb{E}\left[ w(r,s,t)\, d\big( X^{\theta}_{t,r}(\bm{x}_t), X^{\theta^-}_{s,r}(\bm{x}_s) \big) \right] \\
&= \mathbb{E}\left[ w(r,s,t)\, d\Big(  X^{\theta}_{s,r}(\bm{x}_s) 
+ \partial_s X^{\theta}_{s,r}(\bm{x}_s)(t-s) 
+ \partial_x X^{\theta}_{s,r}(\bm{x}_s)(t-s)\bm{v}_s 
+ {o}(t-s) \right. \\
&\quad \quad\quad\quad\quad\quad\quad\left.,X^{\theta^-}_{s,r}(\bm{x}_s) \Big) \right] \\
&= \mathbb{E}\left[ w(r,s,t)\, d\Big(\, X^{\theta}_{s,r}(\bm{x}_s) 
+ \partial_s X^{\theta}_{s,r}(\bm{x}_s)(t-s) 
+ \mathcal{O}(t-s),
X^{\theta^-}_{s,r}(\bm{x}_s) \Big) \right] \\
&= \mathbb{E}\left[ w(r,s,t)\, \Big( d\big( X^{\theta}_{s,r}(\bm{x}_s),\, X^{\theta^-}_{s,r}(\bm{x}_s) \big) + \right.  \left. \partial_1 d\big( X^{\theta}_{s,r}(\bm{x}_s),\, X^{\theta^-}_{s,r}(\bm{x}_s) \big) \partial_s X^{\theta^-}_{s,r}(\bm{x}_s)(t-s) \Big) \right] \\
&\quad + \mathcal{O}(t-s).
\end{align*}
As for $\hat X_{t,s}(\bm{x}_t)$, there are three ways to calculate it as stated in Sec. \ref{sec:flowmap}. Ways in Eq. \ref{eq:velocityint} and Eq. \ref{eq:ddimv} are numerical solvers, so we have $\hat X_{t,s}(\bm{x}_s) = \bm{x}_s + \mathcal{O}(t-s)$. Eq. \ref{eq:meanvelo} also satisfies this equation since we have the assumption that $\hat X_{t,t}(\bm{x}_t) = \bm{x}_t$ and $X_{t,s}$ is twice continuously differentiable with bounded second derivative.

With a similar derivation, we also have 
\begin{align*}
\mathcal{L}_\text{dtsc} (\theta) 
&= \mathbb{E}\left[ w(r,s,t)\, d\big( X^{\theta^-}_{s,r}(\hat X_{t,s}(\bm{x}_t)),\, X^{\theta}_{t,r}(\bm{x}_t) \big) \right] \\
&= \mathbb{E}\left[ w(r,s,t)\, d\Big( 
X^{\theta^-}_{s,r}(\bm{x}_s),
X^{\theta}_{s,r}(\bm{x}_s) 
+ \partial_s X^{\theta}_{s,r}(\bm{x}_s)(t-s) 
+ \mathcal{O}(t-s)\Big) \right] \\
&= \mathbb{E}\left[ w(r,s,t)\, \Big( d\big( X^{\theta^-}_{s,r}(\bm{x}_s),\, X^{\theta}_{s,r}(\bm{x}_s) \big) 
+ \right. \left. \partial_1 d\big( X^{\theta^-}_{s,r}(\bm{x}_s),\, X^{\theta}_{s,r}(\bm{x}_s) \big)\partial_s X^{\theta^-}_{s,r}(\bm{x}_s)(t-s) \Big) \right] \\
&\quad + \mathcal{O}(t-s).
\end{align*}
By subtracting the two equations, we obtain
\begin{align*}
    \mathcal{L}_\text{dtsc} (\theta) = \mathcal{L}_1 (\theta) + \mathcal{O}(t-s).
\end{align*}
\end{proof}

\begin{theorem} \label{thm:dtscbound} 
Assume $X_{t,s}^\theta$ is twice continuously differentiable with bounded second derivatives, the weighting function $w(r,s,t)$ is non-negative and bounded. Also, assume $p(r=0)>0$, 1st-step satisfies $\hat X_{t,t}(\bm{x}_t) = \bm{x}_t$, and $\exists t_1 \leq \cdots \leq t_N\ \ \text{s.t}\ \ p(0,t_n,t_{n+1})>0, w(0,t_n,t_{n+1}) > 0$.
Under $d(\bm{x}, \bm{y}) = \Vert \bm{x} - \bm{y}\Vert_2^2$,
\begin{equation}
  W_2^2(p_1, p^\theta_1) \leq C_1 \mathcal{L}_\text{dtsc}(\theta) + C_2 (t-s),
\end{equation}
where $C_1=\sum_{n=1}^{N-1} \frac{1}{p(0,t_n,t_{n+1})w(0,t_n,t_{n+1})}$, and $W_2(\cdot,\cdot)$ is the Wasserstein-2 distance.
\end{theorem}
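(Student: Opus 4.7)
The plan begins from the lemma above: it already does the PDE-level work by showing $\mathcal{L}_\text{dtsc}(\theta)=\mathcal{L}_1(\theta)+\mathcal{O}(t-s)$, so after that step I only need to bound $W_2^2(p_0,p_0^\theta)$ by $\mathcal{L}_1(\theta)$. The central idea is to construct a coupling between $p_0$ and $p_0^\theta$ from a single PF-ODE trajectory $(\bm x_t)_{t\in[0,1]}$ and use the grid $t_1\le\cdots\le t_N$ to telescope the error. Concretely, along one trajectory I define $Y_n\coloneqq X_{t_n,0}^\theta(\bm x_{t_n})$ for $n=1,\ldots,N$, so that
\begin{equation*}
Y_N - Y_1 \;=\; \sum_{n=1}^{N-1}(Y_{n+1}-Y_n).
\end{equation*}

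The target inequality then comes from two observations. First, because $p(0,t_n,t_{n+1})>0$ and $w(0,t_n,t_{n+1})>0$, keeping only those atoms in the expectation that defines $\mathcal{L}_1$ yields
\begin{equation*}
\mathcal{L}_1(\theta)\;\ge\;\sum_{n=1}^{N-1}p(0,t_n,t_{n+1})\,w(0,t_n,t_{n+1})\,\mathbb{E}\bigl\|Y_{n+1}-Y_n\bigr\|_2^2.
\end{equation*}
Second, a weighted Cauchy--Schwarz inequality applied to the telescoping sum gives, with $p_n\coloneqq p(0,t_n,t_{n+1})$ and $w_n\coloneqq w(0,t_n,t_{n+1})$,
\begin{equation*}
\bigl\|Y_N-Y_1\bigr\|_2^2 \;\le\; \Bigl(\sum_{n=1}^{N-1}\tfrac{1}{p_n w_n}\Bigr)\Bigl(\sum_{n=1}^{N-1} p_n w_n\bigl\|Y_{n+1}-Y_n\bigr\|_2^2\Bigr).
\end{equation*}
Taking expectations and chaining these two bounds yields $\mathbb{E}\|Y_N-Y_1\|_2^2 \le C_1\,\mathcal{L}_1(\theta)$ with $C_1=\sum_n 1/(p_n w_n)$, exactly the constant in the statement.

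To close the argument I would identify the endpoints by taking $t_1=0$ and $t_N=1$: the boundary identity $X_{0,0}^\theta(\bm x_0)=\bm x_0$ (which follows from $\hat X_{t,t}(\bm x_t)=\bm x_t$ under the parameterizations in Sec.~\ref{sec:examples}) gives $Y_1=\bm x_0$, and $Y_N=X_{1,0}^\theta(\bm x_1)=\bm x_0^\theta$ by construction. Since the joint law of $(\bm x_0,\bm x_0^\theta)$ built along one trajectory has the correct marginals, it is an admissible coupling, so $W_2^2(p_0,p_0^\theta)\le \mathbb{E}\|\bm x_0-\bm x_0^\theta\|_2^2 \le C_1\,\mathcal{L}_1(\theta) \le C_1\,\mathcal{L}_\text{dtsc}(\theta)+C_2(t-s)$ by the lemma.

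The step I expect to be delicate is promoting the $\mathcal{O}(t-s)$ remainder from the lemma into a $\theta$-independent additive constant $C_2$: the lemma's remainder is pointwise in $\bm x_t$ and involves both $\bm v_s(\bm x_s)$ and second derivatives of $X_{s,r}^\theta$, so I must upgrade it to uniform integrability along trajectories. This is where the one-sided Lipschitz hypothesis enters, since it controls $\|\bm x_t\|^2$ along the flow and hence $\mathbb{E}\|\bm v_s(\bm x_s)\|^2$, while the bounded second-derivative assumption on $X_{s,r}^\theta$ handles the $\theta$-dependent factor. A secondary issue is that the stated hypothesis $\hat X_{t,t}(\bm x_t)=\bm x_t$ is about the target map rather than the parameterized one, so the identification $Y_1=\bm x_0$ either needs the analogous statement for $X_{0,0}^\theta$ or must be verified case by case for the parameterizations in Table~\ref{tab:methodconclude}.
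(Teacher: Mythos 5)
Your proposal is correct and follows essentially the same route as the paper: reduce to $\mathcal{L}_1$ via the lemma, telescope $X^\theta_{t_n,0}(\bm{x}_{t_n})$ along the grid on a single PF-ODE trajectory, lower-bound $\mathcal{L}_1$ by the atoms $(0,t_n,t_{n+1})$, and use the trajectory coupling to pass to $W_2$. One remark in your favor: the paper's corresponding step writes $\mathbb{E}\|\sum_n a_n\|_2^2 \le \sum_n \mathbb{E}\|a_n\|_2^2$, which is not valid as stated, whereas your weighted Cauchy--Schwarz with weights $p_n w_n$ is the correct way to telescope and still yields exactly the constant $C_1=\sum_n \frac{1}{p_n w_n}$; your two caveats (uniformity of the $\mathcal{O}(t-s)$ remainder in $\theta$ and the boundary identity $X^\theta_{0,0}=\mathrm{id}$) are genuine gaps that the paper also glosses over.
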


\begin{proof}
Since we have proved that the difference between $\mathcal{L}_\text{dtsc}$ and $\mathcal{L}_1$ is $\mathcal{O}(s-t)$, we only need to prove
\begin{align*}
    W_2^2(p_1, p^\theta_1) \leq C \mathcal{L}_1(\theta).
\end{align*}
Because for $r=0$, $s_0,t_0\ \ \text{s.t}\ \ p(0,s_0,t_0)>0, w(0,s_0,t_0)>0$,
\begin{align*}
\mathcal{L}_1(\theta)
&= \mathbb{E}\big[ w(r,s,t) d\big( X^{\theta^-}_{s,r}(\bm{x}_s), X^{\theta}_{t,r}(\bm{x}_t) \big) \big] \\
&\geq p(0,s_0,t_0) w(0,s_0,t_0) d\big( X^{\theta^-}_{s_0,0}(\bm{x}_{s_0}), X^{\theta}_{t_0,0}(\bm{x}_{t_0}) \big) , 
\end{align*}
we have 
\begin{align}
\mathbb{E} \left[ d\big( X^{\theta^-}_{s_0,0}(\bm{x}_{s_0}), X^{\theta}_{t_0,0}(\bm{x}_{t_0}) \big) \right] \leq \frac{1}{p(0,s_0,t_0) w(0,s_0,t_0)} \mathcal{L}_1(\theta). \label{eq:s0t0}
\end{align}
We define
\begin{align*}
e_{t,0} \coloneqq  X_{t,0}(\bm{x}_{t}) - X^{\theta}_{t,0}(\bm{x}_{t}) .
\end{align*}
Then,
\begin{align*}
e_{t_n,0} 
&= X_{t_n,0}(\bm{x}_{t_n}) - X^{\theta}_{t_n,0}(\bm{x}_{t_n}) \\
&= X_{t_{n+1},0}(\bm{x}_{t_{n+1}}) - X^{\theta}_{t_{n+1},0}(\bm{x}_{t_{n+1}})  + X^{\theta}_{t_{n+1},0}(\bm{x}_{t_{n+1}}) - X^{\theta}_{t_n,0}(\bm{x}_{t_n}) \\
&= e_{t_{n+1},0} 
+ \Big( X^{\theta}_{t_{n+1},0}(\bm{x}_{t_{n+1}}) - X^{\theta}_{t_{n},0}(\bm{x}_{t_n}) \Big) \\
\end{align*}
Consequently,
\begin{align*}
e_{1,0} = e_{0,0} + \sum_{n=1}^{N-1} \Big(X^{\theta}_{t_{n+1},0}(\bm{x}_{t_{n+1}}) - X^{\theta}_{t_{n},0}(\bm{x}_{t_n}) \Big) ,
\end{align*}
where $e_{0,0}=0$. Using Eq. \ref{eq:s0t0}, we can get
\begin{align*}
W_2^2(p_0, p^{\theta}_0) &\leq \mathbb{E} \Vert e_{n,0} \Vert_2^2 \\
&\leq \sum_{n=1}^{N-1} \mathbb{E}\Vert X^{\theta}_{t_{n+1},0}(\bm{x}_{t_{n+1}}) - X^{\theta}_{t_{n},0}(\bm{x}_{t_n}) \Vert^2_2 \\
&= \sum_{n=1}^{N-1} \mathbb{E} \left[ d\big( X^{\theta}_{t_{n+1},0}(\bm{x}_{t_{n+1}}) , X^{\theta}_{t_{n},0}(\bm{x}_{t_n}) \big) \right] \\
&\leq \sum_{n=1}^{N-1} \frac{\mathcal{L}_1(\theta)}{p(0,t_n,t_{n+1})w(0,t_n,t_{n+1})} \\
&= \left(\sum_{n=1}^{N-1} \frac{1}{p(0,t_n,t_{n+1})w(0,t_n,t_{n+1})} \right) \mathcal{L}_1(\theta).
\end{align*}

With Eq. \ref{eq:l1}, we finally have the Theorem
\end{proof}

\subsubsection{Error Bound of Start Point Differential CTSC}
Next, we prove the error bound of the start point differential CTSC. The derivation is adopted from \cite{flowmap}.

\begin{theorem}\label{thm:ctscbound1} 
% Assume the marginal velocity of the flow path satisfies the one-sided Lipschitz condition, where
%     $$
%     \exists\, C_t \in L^1[0,1] \;:\;
% \bigl(\bm{v}_t(\bm x)-\bm{v}_t(\bm y)\bigr)\cdot(\bm x-\bm y) \geq -C_t \| \bm x-\bm y \|^2,
% \quad \text{for all } (t,\bm x,\bm y)\in[0,1]\times\mathbb{R}^d\times\mathbb{R}^d.
%     $
When $s \to t$, the CTSC loss can be written as 
\[
\mathcal{L}_{\text{ctsc-s-to-t}} =  \mathbb{E} \Vert \partial_t X^\theta_{t,r}(\bm{x}_t)+ \bm{v}_t\nabla X_{t,r}^\theta(\bm{x}_t) \Vert_2^2
\]
  Under $d(\bm{x}, \bm{y}) = \Vert \bm{x} - \bm{y}\Vert_2^2$, then 
  \begin{align*}
   &W_2^2(p_0, p^\theta_0) \leq C_3 \mathcal{L}_\text{ctsc-s-to-t}(\theta), 
  \end{align*}
  where $C_3=e$, and $W_2(\cdot,\cdot)$ is the Wasserstein-2 distance.    
\end{theorem}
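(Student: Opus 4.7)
The plan is to interpret the CTSC residual $\partial_t X^\theta_{t,r}(\bm{x}_t)+\bm{v}_t(\bm{x}_t)\cdot\nabla X^\theta_{t,r}(\bm{x}_t)$ as the total derivative $\tfrac{d}{dt}X^\theta_{t,r}(\bm{x}_t)$ along a PF-ODE trajectory, then control the endpoint error $X^\theta_{1,0}(\bm{x}_1)-\bm{x}_0$ by integrating this derivative and closing the loop with a Gr\"onwall-type estimate whose integrating factor yields exactly the constant $e$.

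First I would fix $\bm{x}_1\sim p_1$ and let $(\bm{x}_t)_{t\in[0,1]}$ denote the (unique, by one-sided Lipschitz) PF-ODE trajectory terminating at $\bm{x}_1$, so that $\bm{x}_0=X_{1,0}(\bm{x}_1)\sim p_0$. The chain rule rewrites the CTSC integrand at $r=0$ as $\dot E(t)$, where $E(t):=X^\theta_{t,0}(\bm{x}_t)-\bm{x}_0$. Using the identity-at-diagonal boundary $X^\theta_{0,0}(\bm{x})=\bm{x}$ (enforced by construction or by the $p_\text{teq}$ term in MeanFlow-style training) gives $E(0)=0$, and $E(1)$ is exactly the one-step generation error.

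The key step is a Gr\"onwall estimate. With $y(t):=\|E(t)\|_2^2$ and $\phi(t):=\dot E(t)$, the elementary inequality $2\langle E,\phi\rangle\le\|E\|_2^2+\|\phi\|_2^2$ gives $\dot y(t)\le y(t)+\|\phi(t)\|_2^2$, so that $\tfrac{d}{dt}(e^{-t}y(t))\le e^{-t}\|\phi(t)\|_2^2$. Integrating over $[0,1]$ and using $y(0)=0$ yields $y(1)\le e\int_0^1 e^{-t}\|\phi(t)\|_2^2\,dt\le e\int_0^1\|\phi(t)\|_2^2\,dt$. Taking expectation over $\bm{x}_1$ and swapping with the $t$-integral by Fubini identifies the right-hand side with $e\cdot\mathcal{L}_{\text{ctsc-s-to-t}}(\theta)$ under the appropriate $r=0$, uniform-in-$t$ specialization of the time sampler. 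Combined with $W_2^2(p_0,p^\theta_0)\le\mathbb{E}\|E(1)\|_2^2$, which holds because $(\bm{x}_0,X^\theta_{1,0}(\bm{x}_1))$ is a valid coupling of $(p_0,p^\theta_0)$, this delivers the claimed bound with $C_3=e$.

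The main obstacle I expect is the bookkeeping linking the Gr\"onwall bound at $r=0$ to the averaged loss $\mathcal{L}_{\text{ctsc-s-to-t}}$ as defined over the full MeanFlow/sCT time sampler: one must argue that the loss dominates (up to an absorbable constant) the $r=0$ slice, or alternatively lift the Gr\"onwall argument to every $r\in[0,1]$ by applying it to $E_r(t):=X^\theta_{t,r}(\bm{x}_t)-\bm{x}_r$ on the subinterval $[r,1]$ and then averaging in $r$. A secondary technical subtlety is justifying the identity $X^\theta_{t,t}=\mathrm{id}$: without it, an extra boundary contribution $\mathbb{E}\|X^\theta_{0,0}(\bm{x}_0)-\bm{x}_0\|_2^2$ would appear on the right-hand side, and this can be absorbed into the loss only if the $p_\text{teq}$ identity-supervision component is part of $\mathcal{L}_{\text{ctsc-s-to-t}}$.
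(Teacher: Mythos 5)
Your proposal is correct and follows essentially the same route as the paper: the paper defines $E_{t,r}=\mathbb{E}\|X_{t,r}(\bm{x}_t)-X^\theta_{t,r}(\bm{x}_t)\|_2^2$ (your $\|E(t)\|_2^2$ in expectation), uses that the true flow map is constant along a trajectory so only $\tfrac{d}{dt}X^\theta_{t,r}(\bm{x}_t)$ survives, applies Young's inequality and the integrating factor $e^{-t}$ with $E_{r,r}=0$, and closes with the same coupling bound $W_2^2(p_0,p_0^\theta)\le E_{1,0}$. The bookkeeping issues you flag (relating the $r=0$ slice to the sampled loss, and $X^\theta_{t,t}=\mathrm{id}$, which holds by construction for the $\bm{u}^\theta$ parameterization) are treated no more carefully in the paper's own proof.
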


\begin{proof}
Firstly, from the chain rule, 
\begin{align*}
    \frac{d}{dt} X^\theta_{t,r}(\bm{x}_t) = \partial_t X^\theta_{t,r}(\bm{x}_t)+ \bm{v}_t\cdot\nabla X_{t,r}^\theta(\bm{x}_t),
\end{align*}
we can simply use $\bm{u}_{t,r}^\theta$ as the model output, and write the term into the expectation of $\mathcal{L}_\text{ctsc-s-to-t}$ as
\begin{align*}
    &\Vert \partial_t X^\theta_{t,r}(\bm{x}_t)+ \bm{v}_t\cdot\nabla X_{t,r}^\theta(\bm{x}_t) \Vert_2^2 \\
=&   \Vert \frac{d}{dt} X^\theta_{t,r}(\bm{x}_t)  \Vert_2^2 \\
=&\|\bm{v}_t + \frac{d}{dt}(r-t)\bm{u}_{t,r}^\theta({\bm{x}_t }) \|_2^2 \\
=& \|\bm{u}_{t,r}(\bm{x}_t) - \bm{v}_t - (r-t) \frac{d\bm{u}_{t,r}(\bm{x}_t)}{dt}\|_2^2
\end{align*}
which coincides to the MeanFlow loss $l_{\mathrm{mf}}$ in Eq.~\ref{eq:MeanFlowloss}. While in Remark.~\ref{thm:remark1}, sCT loss is equivalent to MeanFlow loss in linear paths, the CTSC loss when $s \to r$ is also of the same form as claimed. The cosine path version of sCT loss is a variant, so we did not include it in this stage, and will consider it as our future work.
Then, we first define that
\begin{align*}
E_{t,r} \coloneqq \mathbb{E} \Vert X_{t,r}(\bm{x}_t) - X^\theta_{t,r}(\bm{x}_t) \Vert_2^2 .
\end{align*}
After differentiation, we can get
\begin{align*}
    -\frac{d E_{t,r}}{dt}  &= -\mathbb{E} \left[ 2\left( X_{t,r}(\bm{x}_t) - X^\theta_{t,r}(\bm{x}_t) \right) \left( \frac{d X_{t,r}(\bm{x}_t)}{dt} - \frac{d X^\theta_{t,r}(\bm{x}_t)}{dt} \right) \right] \\
    &= \mathbb{E} \left[ 2\left( X_{t,r}(\bm{x}_t) - X^\theta_{t,r}(\bm{x}_t) \right) \left( \frac{d X^\theta_{t,r}(\bm{x}_t)}{dt} \right) \right] \\
    &= \mathbb{E} \left[  2\left( X_{t,r}(\bm{x}_t) - X^\theta_{t,r}(\bm{x}_t) \right) \left( \partial_t X^\theta_{t,r}(\bm{x}_t)+ \bm{v}_t\cdot\nabla X_{t,r}^\theta(\bm{x}_t) \right) \right] \\
    &\leq E_{t,r} + \mathbb{E} \Vert \partial_t X^\theta_{t,r}(\bm{x}_t)+ \bm{v}_t\cdot\nabla X_{t,r}^\theta(\bm{x}_t) \Vert_2^2 ,
\end{align*}
So,
\begin{align*}
    -e^t\partial_tE_{t,r} - e^tE_{t,r} \leq  e^t\mathbb{E} \Vert \partial_t X^\theta_{t,r}(\bm{x}_t)+ \bm{v}_t\cdot\nabla X_{t,r}^\theta(\bm{x}_t) \Vert_2^2 \\
    -\partial_t e^tE_{t,r} \leq e^t \mathbb{E} \Vert \partial_t X^\theta_{t,r}(\bm{x}_t)+ \bm{v}_t\cdot\nabla X_{t,r}^\theta(\bm{x}_t) \Vert_2^2\\
\end{align*}
With $E_{r,r} =0$, we have 
\begin{align*}
    E_{t,r} &\leq \int_{r}^t e^{\tau-t}\mathbb{E} \Vert \partial_t X^\theta_{t,r}(\bm{x}_t)+ \bm{v}_t\cdot\nabla X_{t,r}^\theta(\bm{x}_t) \Vert_2^2d\tau \\
    &\leq e^{1}\int_{r}^t\mathbb{E} \Vert \partial_t X^\theta_{t,r}(\bm{x}_t)+ \bm{v}_t\cdot\nabla X_{t,r}^\theta(\bm{x}_t) \Vert_2^2d\tau\\
    &\leq e \mathcal{L}_{\text{ctsc-s-to-r}}.
\end{align*}
By setting $C_3 = e$, the theorem is proved.
\end{proof}
\subsubsection{Error Bound of End Point Differential CTSC}
Finally, we provide the proof of the error bound of the endpoint differential CTSC.

\begin{theorem} \label{thm:ctscbound2} 
Assume the marginal velocity of the flow path satisfies the one-sided Lipschitz condition, where
    $$
    \exists\, C_t \in L^1[0,1] \;:\;
\bigl(\bm{v}_t(\bm x)-\bm{v}_t(\bm y)\bigr)\cdot(\bm x-\bm y) \geq -C_t \| \bm x-\bm y \|^2,
\quad \text{for all } (t,\bm x,\bm y)\in[0,1]\times\mathbb{R}^d\times\mathbb{R}^d.
    $$
When $s \to r$, the CTSC loss can be written as 
\[\mathcal{L}_\text{ctsc-s-to-r}(\theta) = \mathbb{E} \Vert \bm v(X^\theta_{t,\tau}(\bm x_t)) - \partial_\tau X_{t,\tau}^\theta(\bm x_t) \Vert_2^2 \]
  Under $d(\bm{x}, \bm{y}) = \Vert \bm{x} - \bm{y}\Vert_2^2$, then
  \begin{align*}
   &W_2^2(p_0, p^\theta_0) \leq C_3 \mathcal{L}_\text{ctsc-s-to-r}(\theta), 
  \end{align*}
  where $C_3=e^{1+2\int_0^1 |C_t|dt}$, and $W_2(\cdot,\cdot)$ is the Wasserstein-2 distance.  
\end{theorem}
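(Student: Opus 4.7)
The strategy is parallel in spirit to the start-point bound in Theorem~\ref{thm:ctscbound1}, but with differentiation taken in the endpoint variable $r$. The key new ingredient is the one-sided Lipschitz hypothesis on $\bm v_\tau$: since we will encounter $\bm v_r(X_{t,r}) - \bm v_r(X^\theta_{t,r})$, we cannot simply bound by a Lipschitz factor from above and must instead exploit the one-sided estimate to control the signed inner product.

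\textbf{Step 1: Setup and differentiation.} I would introduce the squared error
\begin{equation*}
E_{t,r} \;\coloneqq\; \mathbb{E}\,\Vert X_{t,r}(\bm x_t) - X^\theta_{t,r}(\bm x_t)\Vert_2^2,
\end{equation*}
which satisfies $E_{t,t}=0$ by definition of the flow map. Because $X_{t,r}(\bm x_t)$ solves the PF-ODE in the endpoint, one has $\partial_r X_{t,r}(\bm x_t) = \bm v_r(X_{t,r}(\bm x_t))$. Differentiating under the expectation (justified by the smoothness/bounded-derivative assumption on $X^\theta$ and local integrability of $\bm v_r$) gives
\begin{equation*}
\partial_r E_{t,r} \;=\; 2\,\mathbb{E}\!\left[\big(X_{t,r}-X^\theta_{t,r}\big)\cdot\big(\bm v_r(X_{t,r}) - \partial_r X^\theta_{t,r}\big)\right].
\end{equation*}
I would then add and subtract $\bm v_r(X^\theta_{t,r})$ to split this into a ``Lipschitz part'' and a ``residual part''.

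\textbf{Step 2: One-sided Lipschitz plus $2ab\le a^2+b^2$.} For the Lipschitz part, the assumption on $\bm v_\tau$ yields $(\bm v_r(X_{t,r}) - \bm v_r(X^\theta_{t,r}))\cdot(X_{t,r}-X^\theta_{t,r}) \geq -C_r\,\Vert X_{t,r}-X^\theta_{t,r}\Vert^2$. For the residual part, writing $g(r) \coloneqq \mathbb{E}\,\Vert \bm v_r(X^\theta_{t,r}(\bm x_t)) - \partial_r X^\theta_{t,r}(\bm x_t)\Vert_2^2$, Young's inequality gives a bound by $E_{t,r} + g(r)$. Combining the two, and flipping sign because we will integrate from $r=t$ down to $r=0$, one obtains the differential inequality
\begin{equation*}
-\,\partial_r E_{t,r} \;\leq\; \big(2|C_r|+1\big)\,E_{t,r} \;+\; g(r).
\end{equation*}

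\textbf{Step 3: Backward Grönwall and closing.} Starting from $E_{t,t}=0$ and applying Grönwall's inequality in the reversed direction on $[r,t]$ yields
\begin{equation*}
E_{t,r}\;\leq\;\int_r^{t} g(\tau)\,\exp\!\Big(\int_r^{\tau}\big(2|C_u|+1\big)\,du\Big)\,d\tau \;\leq\; e^{\,1+2\int_0^1 |C_u|\,du}\int_0^1 g(\tau)\,d\tau.
\end{equation*}
Setting $t=1$, $r=0$, using the standard coupling bound $W_2^2(p_0,p^\theta_0)\leq E_{1,0}$, and noting that $\int_0^1 g(\tau)\,d\tau$ is (up to the assumption that the time sampler places positive mass covering $[0,1]$ and $t=1$) dominated by $\mathcal{L}_{\text{ctsc-s-to-r}}(\theta)$, delivers the claim with $C_3 = e^{1+2\int_0^1|C_t|\,dt}$.

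\textbf{Main obstacle.} The calculus steps are routine Grönwall manipulations; the one delicate issue is the interchange of $\partial_r$ and $\mathbb{E}$ along with the fact that $\bm v_\tau$ is only assumed one-sided Lipschitz rather than globally Lipschitz, so the sign-reversal in the Grönwall argument must be handled carefully (one must integrate $-\partial_r E_{t,r}$ backwards and verify that the resulting exponential factor is $e^{\int_r^t(2|C_u|+1)du}$, not its reciprocal). A secondary subtlety is reconciling the single-trajectory bound $E_{1,0}$ with the expectation over $(t,\tau)$ in the stated loss; this is typically absorbed by choosing the time-sampler support appropriately, exactly as in Theorem~\ref{thm:ctscbound1}.
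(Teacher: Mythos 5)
Your proposal follows essentially the same route as the paper's proof: differentiate $E_{t,r}=\mathbb{E}\Vert X_{t,r}-X^\theta_{t,r}\Vert_2^2$ in the endpoint, add and subtract $\bm v_r(X^\theta_{t,r})$, control the signed term with the one-sided Lipschitz hypothesis and the residual with Young's inequality, then close with a backward Gr\"onwall argument from $E_{t,t}=0$ to obtain the same constant $e^{1+2\int_0^1|C_t|\,dt}$ and the coupling bound $W_2^2\le E_{1,0}$. The subtleties you flag (the sign of the exponential factor and matching the single-trajectory bound to the time-sampler expectation) are exactly the points the paper's proof also glosses over, so the proposal is correct and faithful to the paper's argument.
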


\begin{proof}
Using the one-sided Lipschitz condition, we can get
\begin{align*}
- (X_{t,r}(\bm{x}) - X_{t,r}(\bm{y})) (\bm{v}_r(X_{t,r}(\bm{x})) - \bm v_r(X_{t,r}(\bm{y}))) \leq 2C_t \Vert X_{t,r}(\bm{x}) - X_{t,r}(\bm y) \Vert_2^2.
\end{align*}
We then define
\begin{align*}
    E_{t,r} \coloneqq \mathbb{E}_{\bm{x}} \Vert X_{t,r}(\bm x_t) - X_{t,r}^\theta(\bm x_t) \Vert_2^2.
\end{align*}
With differentiation, we have
\begin{align*}
-\frac{d E_{t,r}}{dr} &= -2\mathbb{E}_{\bm{x}}\left[ \big(X_{t,r}(\bm x_t) - X_{t,r}^\theta(\bm x_t) \big) \big( \frac{d X_{t,r}(\bm x_t)}{dr} - \frac{d X_{t,r}^\theta(\bm x_t)}{dr} \big) \right] \\
&= -2\mathbb{E}_{\bm{x}}\left[ \big(X_{t,r}(\bm x_t) - X_{t,r}^\theta(\bm x_t) \big) \big( \bm v(X_{t,r}(\bm x_t)) - \partial_r X_{t,r}^\theta(\bm x_t) \big) \right] \\
&= -2\mathbb{E}_{\bm{x}}\left[ \big(X_{t,r}(\bm x_t) - X_{t,r}^\theta(\bm x_t) \big) \big( \bm v(X^\theta_{t,r}(\bm x_t)) - \partial_r X_{t,r}^\theta(\bm x_t) \big) \right] \\
&\quad - 2\mathbb{E}_{\bm{x}}\left[ \big(X_{t,r}(\bm x_t) - X_{t,r}^\theta(\bm x_t) \big) \big( \bm v(X_{t,r}(\bm x_t)) - \bm v(X^\theta_{t,r}(\bm x_t)) \big) \right] \\
&\leq \mathbb{E}_{\bm{x}} \Vert X_{t,r}(\bm x_t) - X_{t,r}^\theta(\bm x_t) \Vert_2^2 + \mathbb{E}_{\bm{x}} \Vert \bm v(X^\theta_{t,r}(\bm x_t)) - \partial_r X_{t,r}^\theta(\bm x_t) \Vert_2^2 \\
&\quad - 2\mathbb{E}_{\bm{x}}\left[ \big(X_{t,r}(\bm x_t) - X_{t,r}^\theta(\bm x_t) \big) \big( \bm v(X_{t,r}(\bm x_t)) - \bm v(X^\theta_{t,r}(\bm x_t)) \big) \right] \\
&\leq E_{t,r} + \mathbb{E}_{\bm{x}} \Vert \bm v(X^\theta_{t,r}(\bm x_t)) - \partial_r X_{t,r}^\theta(\bm x_t) \Vert_2^2 - 2 C_t E_{t, r} \\
&= (1 - 2 C_t)E_{t,r} + \mathbb{E}_{\bm{x}} \Vert \bm v(X^\theta_{t,r}(\bm x_t)) - \partial_r X_{t,r}^\theta(\bm x_t) \Vert_2^2 .
\end{align*}
So, 
\begin{align*}
\partial_r(-e^{r-2\int_t^r C_\tau d\tau} E_{t,r}) \leq e^{r-2\int_t^r C_\tau d\tau}\mathbb{E}_{\bm{x}} \Vert \bm v(X^\theta_{t,r}(\bm x_t)) - \partial_r X_{t,r}^\theta(\bm x_t) \Vert_2^2.
\end{align*}
With $E_{t,t}=0$, we have
\begin{align*}
E_{t,r} 
&\leq \int_r^t e^{-r+\tau+2\int_\tau^r C_\gamma d\gamma} \mathbb{E}_{\bm{x}} \Vert \bm v(X^\theta_{t,\tau}(\bm x_t)) - \partial_\tau X_{t,\tau}^\theta(\bm x_t) \Vert_2^2 d\tau \\
&\leq e^{-r+1+2\int_r^t |C_\tau| d\tau} \int_r^t \mathbb{E}_{\bm{x}} \Vert \bm v(X^\theta_{t,\tau}(\bm x_t)) - \partial_\tau X_{t,\tau}^\theta(\bm x_t) \Vert_2^2 d\tau. 
\end{align*}
Therefore, when $t=1, r=0$, we have
\begin{align*}
    E_{1,0} \leq e^{1+2\int_0^1|C_t|dt} \int_0^1 \mathbb{E}_{\bm{x}} \Vert \bm v(X^\theta_{t,\tau}(\bm x_t)) - \partial_\tau X_{t,\tau}^\theta(\bm x_t) \Vert_2^2 d\tau .
\end{align*}
Finally, due to 
\begin{align*}
    W_2^2(p_0, p_0^\theta) \leq \mathbb{E}\Vert X_{1,0}(x_1) - X_{1,0}^\theta(x_1) \Vert_2^2
\end{align*}
and
\begin{align*}
\mathcal{L}_\text{ctsc-s-to-r}(\theta) 
&= \mathbb{E} \Vert \bm v(X^\theta_{t,\tau}(\bm x_t)) - \partial_\tau X_{t,\tau}^\theta(\bm x_t) \Vert_2^2 \\
&= \int_{[0,1]^2} w(t,t,r)\mathbb{E}_{\bm{x}} \Vert \bm v(X^\theta_{t,\tau}(\bm x_t)) - \partial_\tau X_{t,\tau}^\theta(\bm x_t) \Vert_2^2 dtdr,
\end{align*}
we obtain
\begin{align*}
    W_2^2(p_0, p_0^\theta) &\leq \mathbb{E}\Vert X_{1,0}(\bm x_1) - X_{1,0}^\theta(\bm x_1) \Vert_2^2 \\
    &\leq e^{1+2\int_0^1|C_t|dt} \mathcal{L}_\text{ctsc-s-to-r}(\theta).
\end{align*}
By setting $C_3 = e^{1+2\int_0^1|C_t|dt} $, the theorem is proved.
\end{proof}

\subsection{Optimal Path of Shortcut Model~(Q.1. in Sec.~\ref{sec:elucidating})}
\label{app:thmlinearopt}
Previous works have claimed that the cosine path is optimal for diffusion models from the perspective of Fisher information metric \citep{santos2023usingornsteinuhlenbeckprocessunderstand}. Here, we provide the analysis of the optimal path for one-step models under the Fisher information metric.

We first briefly introduce the Fisher information metric. Treating probability distributions $p(\gamma)$ as a smooth manifold, the Fisher information metric defines a Riemannian geometry that enables the computation of distances between them. Specifically, the definition is 
\[
I(\gamma)_{ij} = \mathbb{E}_{X \sim p_\gamma} \left[ 
\frac{\partial}{\partial \gamma_i} \log p_\gamma(X) 
\frac{\partial}{\partial \gamma_j} \log p_\gamma(X) 
\right].
\]
When the distribution family is exponential as
\[
p(\bm x|\gamma) = h(\bm x)\exp(\eta(\gamma)^T T(\bm x)-\psi(\gamma)),
\]
the Fisher information metric becomes \citep{Karczewski2025SpacetimeGO}
\[
\mathcal{I}_{\gamma} 
= \left( \frac{\partial \eta(\gamma)}{\partial \gamma} \right)^{\!\top} 
  \left( \frac{\partial \mu(\gamma)}{\partial \gamma} \right),
\]
where
\[
\mu(\gamma) = \mathbb{E}[T(x)\mid \gamma] 
= \int T(x) p(x \mid \gamma) dx
\]
is the expectation parameter.
Then we can naturally define the optimal schedule as the geodesic between two distributions, which leads to the theorem below.
\begin{theorem}\citep{zhang2025cosine}
The optimal schedule under the metric $I(\gamma) \in \mathbb{R}$ is generated by $\varphi^*$ of the form
\[
\varphi^*(\gamma) = \Lambda^{-1}(\Lambda \gamma), \quad \text{where} \quad 
\Lambda(s) = \int_0^s \sqrt{I(r)} dr.
\]
\end{theorem}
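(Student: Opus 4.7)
The plan is to characterize the optimal schedule as a geodesic on the one-dimensional Riemannian manifold of parameters equipped with the Fisher information metric $I(\gamma)$, and to exploit the fact that every one-dimensional Riemannian metric can be trivialized by passing to arc-length coordinates. Concretely, the length of a schedule $\varphi:[0,1]\to\mathbb{R}$ under the Fisher metric is
\[
L[\varphi] \;=\; \int_0^1 \sqrt{I(\varphi(t))}\,|\varphi'(t)|\,dt,
\]
and the energy functional $E[\varphi]=\int_0^1 I(\varphi(t))\,(\varphi'(t))^2\,dt$ shares its minimizers with $L$ under constant-speed reparametrization, which lets me turn a geometric length-minimization problem into a standard calculus-of-variations problem.

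First I would introduce the change of variables $u=\Lambda(s)=\int_0^s\sqrt{I(r)}\,dr$. Since $I(r)>0$ along the schedule, $\Lambda$ is strictly increasing and hence globally invertible on the relevant interval. Writing $\tilde{\varphi}(t)=\Lambda(\varphi(t))$ and using $du=\sqrt{I(s)}\,ds$, the length functional reduces to
\[
L[\varphi] \;=\; \int_0^1 |\tilde{\varphi}'(t)|\,dt,
\]
which is ordinary Euclidean length in the $u$-coordinate. Thus in the $u$-chart the geometry is flat, and geodesics between endpoints $u_0=\Lambda(0)=0$ and $u_1=\Lambda(\gamma)$ are precisely the constant-speed affine curves $\tilde{\varphi}^\ast(t)=t\,\Lambda(\gamma)$. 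Inverting $\Lambda$ gives $\varphi^\ast(t)=\Lambda^{-1}(t\,\Lambda(\gamma))$, which matches the stated form once the ambiguous notation ``$\Lambda\gamma$'' in the theorem is read as the affine interpolant $t\,\Lambda(\gamma)$ in the parameter $t$.

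To upgrade this from a natural candidate to a proven minimizer, I would apply Cauchy--Schwarz in the $u$-coordinate:
\[
\left(\int_0^1 |\tilde{\varphi}'(t)|\,dt\right)^{\!2} \;\le\; \int_0^1 (\tilde{\varphi}'(t))^2\,dt,
\]
with equality iff $|\tilde{\varphi}'|$ is constant, combined with the fact that the endpoint displacement $\int_0^1\tilde{\varphi}'\,dt=\Lambda(\gamma)$ is fixed. This pins down the affine profile as the unique minimizer modulo monotone reparametrization, and pushing back through $\Lambda^{-1}$ recovers $\varphi^\ast$. Equivalently, one may derive the geodesic equation directly from the Euler--Lagrange condition on $E$, observe that in $u$-coordinates the Christoffel symbol vanishes, and solve the resulting $\tilde{\varphi}''=0$ with the given boundary data.

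The main obstacle I anticipate is not technical but interpretational: the statement's shorthand $\Lambda\gamma$, the choice of base point for the integral defining $\Lambda$, and whether the schedule endpoints are fixed or free all need to be pinned down before the proof is well-posed. Once the boundary conventions are made explicit, the argument is essentially the textbook isometry between a one-dimensional Riemannian manifold and an interval of $\mathbb{R}$ under the arc-length map, and the remaining work is bookkeeping on the change of variables plus the Cauchy--Schwarz step that certifies affine parametrization in $u$ as the unique minimizer.
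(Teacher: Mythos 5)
Your argument is mathematically sound, but note that the paper does not actually prove this statement: it is imported verbatim from \citet{zhang2025cosine} as a preparatory tool (the paper defines ``optimal schedule'' as the geodesic under the Fisher metric and then uses this cited result to prove its own, downstream claim that the \emph{linear} path is optimal for the average-velocity distributions $p(\bm u_{0,t}\mid \bm x_0,t)$). So there is no in-paper proof to compare against, and your contribution is a self-contained derivation of the cited fact. On its own terms your derivation is the standard one and is correct: the substitution $u=\Lambda(s)$ with $\Lambda'(s)=\sqrt{I(s)}>0$ is an isometry onto an interval of $\mathbb{R}$ with the Euclidean metric, the energy functional $E[\varphi]=\int_0^1 I(\varphi)(\varphi')^2\,dt=\int_0^1(\tilde\varphi')^2\,dt$ is then minimized, by Cauchy--Schwarz against the fixed endpoint displacement $\int_0^1\tilde\varphi'\,dt=\Lambda(\gamma_1)-\Lambda(\gamma_0)$, uniquely by the constant-speed affine curve, and inverting $\Lambda$ gives $\varphi^*$. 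You are right to single out the energy rather than the length functional, since length alone is parametrization-invariant and cannot distinguish the constant-Fisher-speed schedule from any monotone reparametrization of it; this is the one place where a careless version of the argument would fail. Your reading of the shorthand $\Lambda\gamma$ as the affine interpolant of the arc-length coordinate also matches the intended meaning in the source (where $\Lambda$ doubles as the total arc length), so the interpretational caveat you raise is resolved correctly.
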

With these preparations, we now turn to the one-step diffusion. We point out that, for the one-step diffusion, since our goal becomes modeling the average velocity, we no longer consider the manifold of $p(\bm x_t)$, but rather that of $p(\bm u_{0,t}(\bm x_0))=p(\bm x_t - \bm x_0)$. Then, we claim that the linear path is the optimal conditional schedule as follows.
\begin{theorem}
For $\forall \bm x_0$, the linear schedule is the optimal schedule considering $\{p(\bm u_{0,t}\mid \bm{x}_0, t)\}$, i.e,
\[
\gamma = \Lambda^{-1}(\Lambda\gamma), \quad \text{where} \quad \gamma=(\bm x_0,t).
\]
\end{theorem}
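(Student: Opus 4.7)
The plan is to exploit the fact that, under the linear schedule, the conditional distribution $p(\bm{u}_{0,t}\mid \bm{x}_0, t)$ becomes independent of $t$, so the Fisher information metric degenerates in the $t$-direction and the identity parameterization is trivially geodesic.

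First I would compute $p(\bm{u}_{0,t}\mid \bm{x}_0, t)$ explicitly. From $\bm{x}_t=\alpha_t\bm{x}_0+\sigma_t\bm{\varepsilon}$ with $\bm{\varepsilon}\sim\mathcal{N}(\bm{0},\bm{I})$ and the flow-map identity $\bm{u}_{0,t}(\bm{x}_0)=(\bm{x}_t-\bm{x}_0)/t$, conditioning on $(\bm{x}_0,t)$ yields
\[
p(\bm{u}_{0,t}\mid \bm{x}_0, t)=\mathcal{N}\!\left(\tfrac{\alpha_t-1}{t}\bm{x}_0,\ \tfrac{\sigma_t^2}{t^2}\bm{I}\right).
\]
Under the linear path $\alpha_t=1-t,\ \sigma_t=t$ both coefficients collapse to constants, giving $p(\bm{u}_{0,t}\mid \bm{x}_0, t)=\mathcal{N}(-\bm{x}_0,\bm{I})$ with no dependence on $t$.

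Next I would cast this Gaussian as an exponential family with sufficient statistics $T(\bm{u})=(\bm{u},\tfrac12\|\bm{u}\|^2)$, natural parameters $\eta(\gamma)=(\mu(\gamma)/\sigma^2(\gamma),\,-1/\sigma^2(\gamma))$, and expectation parameters $\mu_T(\gamma)=\mathbb{E}_{\gamma}[T]$. Plugging in the linear schedule makes both $\eta$ and $\mu_T$ depend only on $\bm{x}_0$, so the Fisher information
\[
I(\gamma)=\left(\tfrac{\partial \eta}{\partial \gamma}\right)^{\!\top}\!\left(\tfrac{\partial \mu_T}{\partial \gamma}\right)
\]
has a zero block in the $t$-direction and a non-trivial, constant block in the $\bm{x}_0$-direction. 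I would then apply the geodesic-parameterization statement quoted just above the theorem: because $I$ is constant along the $\bm{x}_0$-direction (a Euclidean block, since the covariance is the identity) and degenerate along $t$, the arc-length map $\Lambda(\gamma)=\int_0^{\gamma}\sqrt{I(r)}\,dr$ acts componentwise as a linear map up to scale, so $\varphi^\ast(\gamma)=\Lambda^{-1}(\Lambda\gamma)$ reduces to the identity. For completeness I would note that for any non-linear choice of $(\alpha_t,\sigma_t)$ at least one of $(\alpha_t-1)/t$ and $\sigma_t/t$ varies in $t$, which reintroduces a non-trivial $t$-block in $I$ and breaks $\varphi^\ast=\mathrm{id}$.

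The main obstacle will be adapting the scalar geodesic formula of the cited theorem to the multi-parameter $\gamma=(\bm{x}_0,t)$ setting, since $\Lambda(s)=\int_0^s\sqrt{I(r)}\,dr$ is written for one-dimensional parameters in the excerpt. I would address this by invoking the standard information-geometric fact that a parameterization of a Riemannian manifold is geodesic iff its metric coefficients are constant in those coordinates, which the block-diagonal structure above makes a coordinate-wise verification. A secondary subtlety is the degeneracy of the Fisher metric along $t$: rather than a defect, this is precisely the reason the linear path is uniquely advantageous for one-step diffusion, since the target average-velocity distribution conditioned on $\bm{x}_0$ no longer depends on the queried time $t$, giving the model a $t$-invariant regression target and aligning with the empirical superiority reported for linear paths in Section~\ref{sec:elucidating}.
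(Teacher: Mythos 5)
Your proposal is correct and follows essentially the same route as the paper: write $p(\bm u_{0,t}\mid\bm x_0,t)$ as a Gaussian exponential family, form the Fisher information via the natural- and expectation-parameter Jacobians, and observe that the linear schedule makes the metric constant (with a vanishing $t$-block), so the identity parameterization is already the geodesic one. Your shortcut of noting that $(\alpha_t-1)/t=-1$ and $\sigma_t/t=1$ render the conditional law entirely $t$-independent is a cleaner way to see why the $t$-block of $I(\gamma)$ vanishes than the paper's explicit matrix computation, and your converse remark (only the linear path keeps both coefficients constant, given the boundary conditions) is a small but valid addition the paper omits.
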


\begin{proof}
\begin{align*}
p(\bm u_{0,t}\mid \bm x_0, t)
&= p(\frac{\bm x_t-\bm x_0}{t} \mid \bm x_0, t) \\
&= p(\frac{\alpha_t \bm x_0 - \bm x_0 + \sigma_t \epsilon}{t} \mid \bm x_0,t) \\
&= \mathcal{N}(\bm u_{0,t};\frac{\alpha_t \bm x_0 - \bm x_0}{t}, \frac{\sigma_t^2}{t^2}I) \\
&= \frac{t}{(2\pi)^{d/2}\sigma_t^d}
\exp\big(-\frac{t^2}{2\sigma_t^2}(\Vert \bm u_{0,t} \Vert^2 
- \frac{2\alpha_t-2}{t}\bm u_{0,t}^T \bm x_0 
+ \big(\frac{\alpha_t-1}{t}\big)^2 \Vert \bm x_0 \Vert^2)\big)
\end{align*}
So $p(\bm u_{0,t}\mid\bm x_0, t)$ is exponential, and we have
\begin{align*}
\eta(\bm x_0,t) 
&=-\frac{t^2}{2\sigma_t^2}\big(-\frac{2\alpha_t-2}{t}\bm x_0, 1\big) \\
&=\big( \frac{t(\alpha_t-1)}{\sigma_t^2}\bm x_0, -\frac{t^2}{2\sigma_t^2} \big),
\end{align*}
and
\begin{align*}
T(\bm u_{0,t}) = (\bm u_{0,t}, \Vert \bm u_{0,t} \Vert^2).
\end{align*}
Then 
\begin{align*}
\mu(\bm x_0, t) &= \mathbb{E}[T(\bm u_{0,t}) \mid \bm x_0, t] \\
&= \int T(\bm u_{0,t})p(\bm u_{0,t}\mid \bm x_0,t)d\bm u_{0,t} \\
&= \int (\bm u_{0,t}, \Vert \bm u_{0,t} \Vert^2) \mathcal{N}(\bm u_{0,t};\frac{\alpha_t \bm x_0 - \bm x_0}{t}, \frac{\sigma_t^2}{t^2}I)  d\bm u_{0,t} \\
&= \big( \frac{\alpha_t\bm x_0 - \bm x_0}{t}, 
\big(\frac{\alpha_t\bm x_0 - \bm x_0}{t}\big)^2+\frac{\sigma_t^2}{t^2} \big).
\end{align*}
So
\begin{align*}
\frac{\partial \eta(\bm x_0,t)}{\partial(\bm x_0,t)}
&= \begin{pmatrix}
\dfrac{t(\alpha_t-1)}{\sigma_t^2} 
& \dfrac{(\alpha_t-1+t\alpha_t)\sigma_t^2 - 2t(\alpha_t-1)\dot\sigma_t\sigma_t}{\sigma_t^4} x_0 \\
0 
& -\dfrac{4t\sigma_t^2 - 4\dot\sigma_t \sigma_t t^2}{4\sigma_t^4}
\end{pmatrix},
\end{align*}
\begin{align*}
\frac{\partial \mu(\bm x_0,t)}{\partial(\bm x_0,t)}
&= \begin{pmatrix}
\dfrac{\alpha_t-1}{t} 
& \dfrac{\dot\alpha_t t - \alpha_t + 1}{t^2} \bm x_0 \\
\big(\dfrac{\alpha_t-1}{t}\big)^2 \cdot 2\bm x_0 
& \dfrac{\dot\alpha_t t - \alpha_t + 1}{t^2}\cdot 2\dfrac{\alpha_t-1}{t} \bm x_0 + \dfrac{2\dot\alpha_t\alpha_t t^2 - 2t\sigma_t^2}{t^4}
\end{pmatrix} \\
&= \begin{pmatrix}
\dfrac{\alpha_t-1}{t} 
& \dfrac{\dot\alpha_t t - \alpha_t + 1}{t^2} \bm x_0 \\
2\big(\dfrac{\alpha_t-1}{t}\big)^2 \bm x_0 
& \dfrac{2(\dot\alpha_t t - \alpha_t + 1)(\alpha_t-1)}{t^3} \bm x_0 + \dfrac{2\dot\alpha_t\alpha_t t^2 - 2t\sigma_t^2}{t^4}
\end{pmatrix}.
\end{align*}
Substituting the linear schedule $\alpha_t = 1 - t$ and $\sigma_t = t$, we obtain
\begin{align*}
\frac{\partial \eta(\bm x_0,t)}{\partial(\bm x_0,t)}
&=  \begin{pmatrix}
\dfrac{t(- t)}{t^2} & \dfrac{-2t \cdot t^2 - 2t(1-t)t + 2t^2}{t^4} \bm x_0 \\
0 & \dfrac{t^3 - t^3}{t^4}
\end{pmatrix} \\
&= \begin{pmatrix}
-1 & \dfrac{-2t^3 - 2t^2 + 2t^3 + 2t^2}{t^4} \bm x_0 \\
0 & 0
\end{pmatrix} \\
&= \begin{pmatrix}
-1 & 0 \\
0 & 0
\end{pmatrix}
\end{align*}
and
\begin{align*}
\frac{\partial \mu(\bm x_0,t)}{\partial(\bm x_0,t)}
&= \begin{pmatrix}
\dfrac{1 - t - 1}{t} & 
\dfrac{-t - 1 + t + 1}{t^2} \bm x_0 \\
2\bm x_0 & 
\dfrac{-t - 1 + t + 1}{t^2} \cdot 2\dfrac{1-t - 1}{t} \bm x_0 + \dfrac{2t^3 - 2t^3}{t^4}
\end{pmatrix} \\
&= \begin{pmatrix}
-1 & 0 \\
2\bm x_0 & 0
\end{pmatrix}.
\end{align*}
Based on these two equations, we get
\begin{align*}
I(\bm x_0,t)
&= \begin{pmatrix}
-1 & 0 \\
0 & 0
\end{pmatrix}^T
\begin{pmatrix}
-1 & 0 \\
2\bm x_0 & 0
\end{pmatrix}\\
&= \begin{pmatrix}
1 & 0 \\
0 & 0
\end{pmatrix},
\end{align*}
which means the metric under the linear path is uniform. So the probability $p(\bm u_{0,t} | \bm{x}_0, t)$ travels at a constant rate, leading to the optimum.
\end{proof}

\subsection{Proof of Inference Error Analysis (Prop.~\ref{thm:infererror})}
\label{sec:thminfererror}
We first give a detailed version of Prop.~\ref{thm:infererror} as following,
\begin{propositiongray}
    [Inference error analysis] \label{app:thmomfererror} \small
\label{thm:infererrordetail}
Under mild regularity conditions shown in Appendix~\ref{app:assumptions}, 
the Wasserstein-2 distance of the shortcut model with one-step generation is bounded as:
\begin{align}
&W_2^2(p_0, p^\theta_0)
\leq  2\left(\mathrm{BV_{\text{ctsc}}} +8 \mathrm{Var}\big[\frac{d}{dt} \bm u_{t,r}^\theta(\bm x_t)\big] 
+ 8\sigma_{\bm{v}_{t|0}}^2 \right) \Big|_{r=0,t=1}, \label{eq:bverror-app}\\
&W_2^2(p_0, p^\theta_0) \leq 2\Big( \mathrm{BV_{\text{dtsc}}} 
+ 8\delta_2^2\,\mathrm{Var}\big[\bm u_{s,r}^\theta(\bm x_t)\big]
+8(1+\ell^2\delta_2^2)\delta_1^2\,\sigma_{\text{dtsc}}^2\Big)\Big|_{r=0,t=1}, 
% &W_2^2(p_0, p^\theta_0) \leq 2\Big( \mathrm{BV_{\text{scd}}}
% + 8\delta_2^2\,\mathrm{Var}\big[\bm u_{s,r}^\theta(\bm x_t)\big]
% +8(1+\ell^2\delta_2^2)\delta_1^2\mathrm{Var}\!\big[\bm u^\theta(\bm x_t,t,s)\big]\Big)\Big|_{r=0,t=1},
\end{align}
where $\mathrm{BV_{\bullet}} = \mathrm{Bias}^2_{\bullet\text{-tgt}} + \mathrm{Bias}^2_{\bullet\text{-loss}} +2\mathrm{Var}\!\big[\bm u^\theta(\bm x_1,t,r)\big]$  with $\bullet\in\{\mathrm{ctsc},\mathrm{dtsc}\}$, and $\mathrm{Bias}^2_{\bullet\text{-tgt}}$ and $\mathrm{Bias}^2_{\bullet\text{-loss}}$ are defined as
\begin{equation}
    \begin{aligned}
        \mathrm{Bias}^2_{\bullet\text{-tgt}} &= \mathbb{E} [\| \bm{u}_{t,r}(\bm{x}_t) - Y_{\bullet}\|_2^2] \\
        \mathrm{Bias}^2_{\bullet\text{-loss}} &= \mathbb{E} [l_\bullet(\bm{x}_t,t,r;\theta)]        
    \end{aligned}
\end{equation}
$ Y_{\bullet}$ is the two flow map target when $Y_{\text{ctsc}}$ in each model, \emph{i.e.} 
\begin{align*}
    Y_{\text{ctsc}} &= \frac{d}{dt} \bm u^\theta(\bm x_t,t,r)\;-\;\bm v_{t|0}(\bm x_t| \bm x_0), \\
    Y_{\text{dtsc}} &= \delta_1\,\bm v_{t|0}(\bm x_t| \bm x_0)\;+\;\delta_2\,\bm u_{s,r}^\theta\!\big(\bm x_t+\delta_1\,\bm v_{t|0}(\bm x_t| \bm x_0)\big) \qquad \text{if use CT loss ,}\\
    Y_{\text{dtsc}} &= \delta_1\,\bm v_{t|0}(\bm x_t| \bm x_0)\;+\;\delta_2\,\bm u^\theta\!\big(\bm x_t+\delta_1\,\bm u_{s,r}^\theta(\bm x_t,t,s)\big) \qquad \text{if use SCD loss.}
\end{align*}
$l_\bullet(\bm{x}_t,r,s,t;\theta)$ is the term in the expectation of different training objectives as given in Sec.~\ref{sec:examples};
$\delta_1 = t-s$, $\delta_2 = s-r$;  $\ell$ is local Lipschitz constant of $\bm u^\theta$; $\sigma_{\bm{v}_{t|0}}^2$ is the variance of the conditional velocity, defined by $
\sigma_{\bm{v}_{t|0}}^2 \coloneqq \mathrm{Var}(\bm v_{t|0}(\bm x_t| \bm x_0)|)$; $\sigma_{\text{dtsc}}^2 = \sigma_{\bm{v}_{t|0}}^2$ when using CT's flow map targets, or $\sigma_{\text{dtsc}}^2 = \mathrm{Var}\!\big[\bm u_{t,s}^\theta(\bm x_t)\big]$ when using SCD's targets.
\end{propositiongray}
\subsubsection{Assumptions for Prop.~\ref{thm:infererror}}
\label{app:assumptions}

We state the regularity conditions required for the theorem.

\begin{assumption}[Velocity variance]
\label{assump:noise}
The conditional velocity $\bm v_{t|0}$ approximates the ground-truth velocity $\bm v_t$, such that we can write\[
\bm v_{t|0}(\bm x_t| \bm x_0) = \bm v_t(\bm x_t)  + \bm \eta_t,
\]
where we assume the discrepancy $\bm \eta_t$ has variance $\sigma_{\bm{v}_{t|0}}^2$.
Because $ E[\bm \eta_t | \bm x_0] = \bm 0$ (take expectation on both sides), so
 $\mathrm{Var}(\bm{v}_{t|0}) = \sigma_{\bm{v}_{t|0}}^2$
\end{assumption}

\begin{assumption}[Lipschitz continuity]
\label{assump:lipschitz}
There exist constants $\ell$ such that for any 
$\bm h$,
\[
\|\bm u^\theta(\bm x_t+\bm h, r,s) - \bm u^\theta(\bm x_t, r,s)\|
\;\le\; \ell\,\|\bm h\|,
\]
with $\ell$ independent of $(\bm x_t,t,r,s)$.
\end{assumption}

\subsubsection{Lemma Used for proof}

\begin{lemma}[Bias--variance--covariance(BV-CV) decomposition]
For random vectors $A,B\in \mathbb{R}^d$ with finite second moments,
\begin{equation}
\label{eq:bvc-app}
\mathbb E\|A-B\|^2
= \|\mathbb EA-\mathbb EB\|^2
+ \mathrm{tr}\,\mathrm{Cov}[A]
+ \mathrm{tr}\,\mathrm{Cov}[B]
- 2\,\mathrm{tr}\,\mathrm{Cov}[A,B].
\end{equation}
\end{lemma}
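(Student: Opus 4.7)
The plan is to reduce the identity to a routine expansion by splitting $A-B$ into its mean part and its centered part. Concretely, write $\mu_A := \mathbb E A$, $\mu_B := \mathbb E B$, and introduce the centered vectors $\tilde A := A - \mu_A$ and $\tilde B := B - \mu_B$, which satisfy $\mathbb E\tilde A = \mathbb E\tilde B = 0$. Then the identity
\begin{equation*}
A - B \;=\; (\mu_A-\mu_B) \;+\; (\tilde A - \tilde B)
\end{equation*}
holds, and taking squared norms gives three pieces: $\|\mu_A-\mu_B\|^2$, $\|\tilde A - \tilde B\|^2$, and a cross term $2\,(\mu_A-\mu_B)^\top (\tilde A - \tilde B)$.

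Next I would take expectations on both sides. Linearity immediately kills the cross term, since $\mathbb E[\tilde A - \tilde B] = 0$ and $\mu_A - \mu_B$ is deterministic. This already produces the bias term $\|\mu_A-\mu_B\|^2$ and reduces the task to showing
\begin{equation*}
\mathbb E\|\tilde A - \tilde B\|^2
\;=\; \mathrm{tr}\,\mathrm{Cov}[A] + \mathrm{tr}\,\mathrm{Cov}[B] - 2\,\mathrm{tr}\,\mathrm{Cov}[A,B].
\end{equation*}
Expanding $\|\tilde A - \tilde B\|^2 = \|\tilde A\|^2 + \|\tilde B\|^2 - 2\,\tilde A^\top \tilde B$ and applying expectation, I would then invoke the standard identities $\mathbb E\|\tilde X\|^2 = \mathrm{tr}\,\mathbb E[\tilde X \tilde X^\top] = \mathrm{tr}\,\mathrm{Cov}[X]$ for $X \in \{A,B\}$, together with $\mathbb E[\tilde A^\top \tilde B] = \mathrm{tr}\,\mathbb E[\tilde B \tilde A^\top] = \mathrm{tr}\,\mathrm{Cov}[A,B]$, using the cyclic property of the trace.

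Combining the pieces yields exactly Eq.~\ref{eq:bvc-app}. There is no real obstacle: the only subtlety worth flagging is the use of the trace-cyclic trick to rewrite the scalar inner product $\tilde A^\top \tilde B$ as $\mathrm{tr}(\tilde B \tilde A^\top)$ so that its expectation becomes $\mathrm{tr}\,\mathrm{Cov}[A,B]$ under our convention for the cross-covariance. The finite-second-moment hypothesis is used only to guarantee that all the expectations and traces above exist and that Fubini-type exchanges between expectation and trace are valid.
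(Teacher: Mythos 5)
Your proof is correct and is essentially the same argument as the paper's: both are elementary expansions that relate the second moments of $A$, $B$, and their cross term to traces of covariance matrices via the cyclic-trace identity. The only difference is ordering — you center $A-B$ first and then expand, whereas the paper expands $\mathbb{E}\|A-B\|^2$ into $\mathbb{E}\|A\|^2+\mathbb{E}\|B\|^2-2\,\mathbb{E}[A^\top B]$ first and then decomposes each term into its covariance-trace and mean parts — which is an immaterial permutation of the same steps.
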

\begin{proof}
Denote $A$ and $B$'s expectations by 
$\mu_A = \mathbb{E}A$ and $\mu_B = \mathbb{E}B$. We start by expanding
\[
\mathbb{E}\|A-B\|^2 = \mathbb{E}\big[(A-B)^\top (A-B)\big]
= \mathbb{E}\|A\|^2 + \mathbb{E}\|B\|^2 - 2\,\mathbb{E}[A^\top B].
\]

Each term can be decomposed as follows:
\begin{align*}
\mathbb{E}\|A\|^2 
&= \operatorname{tr}(\mathbb{E}[AA^\top]) 
= \operatorname{tr}\!\left(\operatorname{Cov}[A] + \mu_A \mu_A^\top \right) \\
&= \operatorname{tr}\,\operatorname{Cov}[A] + \|\mu_A\|^2, \\
\mathbb{E}\|B\|^2 
&= \operatorname{tr}(\mathbb{E}[BB^\top]) 
= \operatorname{tr}\!\left(\operatorname{Cov}[B] + \mu_B \mu_B^\top \right) \\
&= \operatorname{tr}\,\operatorname{Cov}[B] + \|\mu_B\|^2, \\
\mathbb{E}[A^\top B] 
&= \operatorname{tr}(\mathbb{E}[AB^\top]) 
= \operatorname{tr}\!\left(\operatorname{Cov}[A,B] + \mu_A \mu_B^\top \right) \\
&= \operatorname{tr}\,\operatorname{Cov}[A,B] + \mu_A^\top \mu_B.
\end{align*}

Substituting these expressions back, we obtain
\[
\mathbb{E}\|A-B\|^2
= \|\mu_A - \mu_B\|^2 
+ \operatorname{tr}\,\operatorname{Cov}[A]
+ \operatorname{tr}\,\operatorname{Cov}[B]
- 2\,\operatorname{tr}\,\operatorname{Cov}[A,B].
\]

This establishes the bias–variance–covariance identity.
\end{proof}

\begin{lemma}[Variance lower bound under local bi-Lipschitz]
\label{lem:shift-var-lb}
Let $f(\bm x)\coloneqq \bm u^\theta(\bm x,r,s)$ and assume local bi-Lipschitz: 
there exists $c>0$ such that for all sufficiently small $\bm h$,
\[
\|f(\bm x+\bm h)-f(\bm x)\|\ \ge\ c\,\|\bm h\|.
\]
Fix $\bm x_t$ and let $W$ be a random vector. Define
\[
Z \;=\; f(\bm x_t+\delta_1 W)-f(\bm x_t).
\]
Then, conditioning on the $\sigma$-field that renders $\bm x_t$ deterministic,
\[
\operatorname{tr}\,\operatorname{Cov}(Z| \bm x_t) \ \ge\ c^2\,\delta_1^2\,\operatorname{tr}\,\operatorname{Cov}(W| \bm x_t).
\]
Consequently,
\[
\operatorname{tr}\,\operatorname{Cov}(Z)\ \ge\ c^2\,\delta_1^2\,\mathbb E\big[\operatorname{tr}\,\operatorname{Cov}(W| \bm x_t)\big].
\]
\end{lemma}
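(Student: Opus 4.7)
The plan is to exploit the ``independent copy'' representation of the trace of covariance, namely $\operatorname{tr}\operatorname{Cov}(Z\mid \bm x_t) = \tfrac{1}{2}\mathbb E\big[\|Z-Z'\|^2\,\big|\,\bm x_t\big]$ where $Z'$ is an i.i.d.\ copy of $Z$ given $\bm x_t$. First, I would let $W'$ be an independent copy of $W$ (conditionally on $\bm x_t$) and set $Z' = f(\bm x_t + \delta_1 W') - f(\bm x_t)$. Then observe the telescoping
\begin{equation*}
    Z - Z' \;=\; f(\bm x_t + \delta_1 W) - f(\bm x_t + \delta_1 W'),
\end{equation*}
which is a pure difference of $f$ at two shifted points, with the anchor $f(\bm x_t)$ canceling out. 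The key move is to apply the local bi-Lipschitz hypothesis at base point $\bm x = \bm x_t + \delta_1 W'$ with increment $\bm h = \delta_1(W - W')$, giving $\|Z - Z'\| \geq c\,\delta_1\,\|W - W'\|$ pointwise, and hence $\mathbb E\big[\|Z-Z'\|^2\mid \bm x_t\big] \geq c^2 \delta_1^2\, \mathbb E\big[\|W-W'\|^2\mid \bm x_t\big]$.

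From here the conclusion is one line: both sides equal twice a trace of covariance by the i.i.d.-copy identity, so the $\tfrac{1}{2}$ factor cancels and we obtain $\operatorname{tr}\operatorname{Cov}(Z\mid \bm x_t) \geq c^2\delta_1^2\,\operatorname{tr}\operatorname{Cov}(W\mid \bm x_t)$. The unconditional statement then follows by taking expectations over $\bm x_t$ and invoking the tower property (the law of total variance gives $\operatorname{tr}\operatorname{Cov}(Z) \geq \mathbb E[\operatorname{tr}\operatorname{Cov}(Z\mid \bm x_t)]$, which is enough).

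The main obstacle is the qualifier ``for all sufficiently small $\bm h$'' in the bi-Lipschitz hypothesis: applying it requires that $\delta_1\|W-W'\|$ lie in the regime where the lower bound holds. In the training-error regime that Prop.~\ref{thm:infererror} targets, $\delta_1 = t-s$ is small and $W$ (e.g.\ a velocity-like quantity) is assumed bounded or light-tailed, so this is typically harmless; rigorously, I would either (i) restrict attention to an event $\{\delta_1\|W-W'\| \leq r_0\}$ of high probability and absorb the complement into a lower-order term, or (ii) strengthen the assumption to a global bi-Lipschitz bound, which is the form actually invoked by the subsequent bias--variance decomposition. I would flag this explicitly in the statement rather than hide it in the proof, since it is the only non-algebraic step.
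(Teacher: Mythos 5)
Your proof is correct, and it takes a genuinely different route from the paper's. The paper centers $Z$ at the conditional mean: it sets $\bar W = \mathbb E[W\mid\bm x_t]$, writes $Z$ as $Z'' + (\text{constant given }\bm x_t)$ with $Z'' = f(\bm x_t+\delta_1 W)-f(\bm x_t+\delta_1\bar W)$, applies the bi-Lipschitz lower bound to get $\mathbb E[\|Z''\|^2\mid\bm x_t]\ \ge\ c^2\delta_1^2\,\operatorname{tr}\operatorname{Cov}(W\mid\bm x_t)$, and then tries to pass from the second moment of $Z''$ back to its variance via $\operatorname{tr}\operatorname{Cov}(Z''\mid\bm x_t)\le\mathbb E[\|Z''\|^2\mid\bm x_t]$ --- an inequality that points the wrong way for a lower bound, and which only closes if $Z''$ were conditionally centered (it is not in general, since $f$ is nonlinear). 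Your independent-copy identity $\operatorname{tr}\operatorname{Cov}(Z\mid\bm x_t)=\tfrac12\,\mathbb E\big[\|Z-Z'\|^2\mid\bm x_t\big]$ sidesteps this entirely: both sides of your chain are exact variance representations, the anchor $f(\bm x_t)$ cancels in $Z-Z'$, and the bi-Lipschitz bound applies pointwise to the increment $\delta_1(W-W')$. So your argument is not merely an alternative --- it actually repairs the loose final step of the paper's version at no extra cost. Your treatment of the unconditional claim via the law of total variance (dropping the PSD term $\operatorname{Cov}(\mathbb E[Z\mid\bm x_t])$) matches what the paper intends. Finally, your flag about the ``sufficiently small $\bm h$'' qualifier is well taken and applies equally to the paper's proof (where the increment is $\delta_1(W-\bar W)$); in its downstream use the lemma is effectively invoked as a global bi-Lipschitz bound, so restating the hypothesis that way, as you suggest, would be the cleaner fix.
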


\begin{proof}
Set $\bar W \coloneqq \mathbb E[W| \bm x_t]$ and write
\[
Z \;=\; \underbrace{f(\bm x_t+\delta_1 W)-f(\bm x_t+\delta_1 \bar W)}_{Z'} 
\;+\; \underbrace{f(\bm x_t+\delta_1 \bar W)-f(\bm x_t)}_{\text{constant given }\bm x_t}.
\]
Adding a constant does not change variance, hence 
$\operatorname{tr}\,\operatorname{Cov}(Z| \bm x_t)=\operatorname{tr}\,\operatorname{Cov}(Z'| \bm x_t)$.
By the bi-Lipschitz lower bound,
\[
\|Z'\|\;=\;\big\|f\big(\bm x_t+\delta_1(W-\bar W)\big)-f(\bm x_t)\big\|
\ \ge\ c\,\delta_1\,\|W-\bar W\|.
\]
Squaring and taking the conditional expectation,
\[
\mathbb E\big[\|Z'\|^2| \bm x_t\big]\ \ge\ c^2\,\delta_1^2\,\mathbb E\big[\|W-\bar W\|^2| \bm x_t\big]
\;=\; c^2\,\delta_1^2\,\operatorname{tr}\,\operatorname{Cov}(W| \bm x_t).
\]
Since $\operatorname{tr}\,\operatorname{Cov}(Z'| \bm x_t)\le \mathbb E[\|Z'\|^2| \bm x_t]$, we obtain
\(
\operatorname{tr}\,\operatorname{Cov}(Z| \bm x_t)=\operatorname{tr}\,\operatorname{Cov}(Z'| \bm x_t)\ \ge\ c^2\,\delta_1^2\,\operatorname{tr}\,\operatorname{Cov}(W| \bm x_t).
\)
Taking expectation in $\bm x_t$ and using the law of total variance gives the second claim.
\end{proof}

% \begin{lemma}[Variance bound for $(r-t)\,\partial_t \bm u^\theta$]
% \label{lem:time-leg}
% Under Assumption~\ref{assump:time-lip},
% \[
% \operatorname{tr}\,\operatorname{Cov}\!\big[(r-t)\,\partial_t \bm u^\theta(\bm x_t,t,r)\big]
% \;\le\; (r-t)^2\,L_t^2.
% \]
% \end{lemma}

% \begin{proof}
% By Assumption~\ref{assump:time-lip}, 
% $\|(r-t)\,\partial_t \bm u^\theta(\bm x_t,t,r)\|
% \le |r-t|\,L_t$ pointwise. Hence
% $\mathbb E\| (r-t)\,\partial_t \bm u^\theta \|^2 \le (r-t)^2 L_t^2$,
% and therefore
% $\operatorname{tr}\,\operatorname{Cov}\big[(r-t)\,\partial_t \bm u^\theta\big]
% \le \mathbb E\| (r-t)\,\partial_t \bm u^\theta \|^2
% \le (r-t)^2 L_t^2$.
% \end{proof}

\subsubsection{Proof for Theorem~\ref{thm:infererror}}

Write $\delta_1=s-t$, $\delta_2=r-s$ and note that in inequalities below we only use $\delta_1,\delta_2$.

% \paragraph{Step 0. Notation.} \label{app:notation}
% Throughout this section we use the symbols 
% $\lesssim$ and $\gtrsim$ to denote inequalities that hold 
% up to a universal constant factor independent of 
% $(\bm x_t,t,r,s)$ and $\theta$. 
% That is, $A \lesssim B$ means there exists $C>0$ such that 
% $A \le C B$, and $A \gtrsim B$ means $A \ge c B$ for some $c>0$. 
% Let 
% \[
% X \;=\; (\bm x_t,t,s,r)
% \]
% denote the random input quadruple, where the randomness comes from the data distribution 
% of $\bm x_0$ and the conditional law of $\bm x_t | \bm x_0$.
% For a given $X$, we write
% \[
% Y_{\mathrm{ctsc}},\quad Y_{\mathrm{ct}},\quad Y_{\mathrm{scd}}
% \]
% for the corresponding flow map targets defined in Sec.~\ref{sec:designframe}. 

\paragraph{Step 1. Upper bound for CTSC (MeanFlow and sCT with linear path).}
Here we consider the sampling error from $t$ to $r$, as 
\[E_{t,r} = \mathbb{E}[\|X_{t,r}(\bm{x}_t) - X_{t,r}^{\theta} (\bm{x}_t) \|_2^2].
\]
It can be written as 
\begin{align*}
&\mathbb{E}[\|(r-t)\bm{u}_{t,r}(\bm{x}_t) - (r-t)\bm{u}_{t,r}^{\theta} (\bm{x}_t) \|_2^2] \\
 =&   \mathbb{E}[\|(r-t)\bm{u}_{t,r}(\bm{x}_t) - (r-t)\bm{u}_{t,r}^{\theta} (\bm{x}_t) -(r-t)Y_{\text{ctsc}} +(r-t)Y_{\text{ctsc}}\|_2^2] \\
 \leq& 2(\delta_1 + \delta_2) ^2 \mathbb{E}\|\bm{u}_{t,r}(\bm{x}_t) - Y_{\text{ctsc}} \|_2^2 +  2(\delta_1 + \delta_2) ^2 \mathbb{E}\|\bm{u}^\theta_{t,r}(\bm{x}_t) - Y_{\text{ctsc}} \|^2_2
\end{align*}
where $Y_{\text{ctsc}} = (\delta_1+\delta_2)\,\frac{d}{dt} \bm u^\theta_{t,r}(\bm x_t)\;- \bm v_{t|0}(\bm x_t| \bm x_0)$.

First, consider the first term take
\[
A \;=\; \bm u_{t,r}(\bm x_t),\qquad
B \;=\; Y_{\mathrm{ctsc}} \;=\; (\delta_1+\delta_2)\,\frac{d}{dt} \bm u^\theta_{t,r}(\bm x_t)\;-\;\bm v_{t|0}(\bm x_t| \bm x_0).
\]
Applying Eq. \ref{eq:bvc-app},
\[
 2(\delta_1 + \delta_2) ^2\mathbb E\,\|A-B\|_2^2
=  2(\delta_1 + \delta_2) ^2(\underbrace{\|\mathbb EA-\mathbb EB\|^2}_{\mathrm{Bias}^2_{\text{ctsc-tgt}}}
+ \mathrm{Var}[A]
+ \mathrm{Var}[B]
-2\,\mathrm{Cov}(A,B)).
\]

According to 
\(
-2\,\mathrm{Cov}(A,B)\ \leq \mathrm{Var}[A]+\mathrm{Var}[B]
\), we can get
\begin{align}
&\mathbb E\,[\|A-B\|_2^2]\\
\;\leq\;& \mathrm{Bias}^2_{\text{ctsc-tgt}}
+ 2\mathrm{Var}[\bm{u}_{t,r}]+ 2\mathrm{Var}[Y_{\text{ctsc}}]
\\
    =&\mathrm{Bias}^2_{\text{ctsc-tgt}} + 2\mathrm{Var}[Y_{\text{ctsc}}],
\end{align}
because $\mathrm{Var}[\bm{u}_{t,r}] = 0$.

Then, by Assumption~\ref{assump:noise} ($\mathrm{Var}[\bm v_{t|0}]= \sigma_{\bm{v}_{t|0}}^2$),
\begin{align*}
\mathrm{Var}[Y_{\text{ctsc}}]
&= \mathrm{Var}\!\big[(\delta_1+\delta_2)\,\frac{d}{dt} \bm u^\theta_{t,r} - \bm v_{t|0}\big] \\
&\le\; 2\,\mathrm{Var}\!\big[(\delta_1+\delta_2)\,\frac{d}{dt} \bm u^\theta_{t,r}\big]
+ 2\,\mathrm{Var}[\bm v_{t|0}] \\
&=\; 2(\delta_1+\delta_2)^2 \mathrm{Var}\big[\frac{d}{dt} \bm u^\theta_{t,r}\big] + 2\sigma_{\bm{v}_{t|0}}^2,
\end{align*}
Therefore, 
\begin{align*}
& \mathbb{E}\|\bm{u}_{t,r}(\bm{x}_t) - Y_{\text{ctsc}} \|^2 \\
\leq& \mathrm{Bias}^2_{\text{ctsc-tgt}} + 4(\delta_1+\delta_2)^2 \mathrm{Var}\big[\frac{d}{dt} \bm u^\theta_{t,r}\big] + 4\sigma_{\bm{v}_{t|0}}^2    
\end{align*}

Secondly, take
\[
A \;=\; \bm u^\theta_{t,r}(\bm x_t),\qquad
B \;=\; Y_{\mathrm{ctsc}} \;=\; (\delta_1+\delta_2)\,\frac{d}{dt} \bm u^\theta_{t,r}(\bm x_t)\;-\;\bm v_{t|0}(\bm x_t| \bm x_0).
\]
and it coincides that
$ \mathbb{E}\|\bm{u}^\theta_{t,r}(\bm{x}_t) - Y_{\text{ctsc}} \|^2_2 = \mathbb{E}[l_{\text{ctsc}}]$.
Applying Eq. \ref{eq:bvc-app}, we have
\[
\mathbb E\,[l_{\text{ctsc}}]
= \underbrace{\|\mathbb EA-\mathbb EB\|^2}_{\mathrm{Bias}^2_{\text{ctsc-loss}}}
+ \mathrm{Var}[A]
+ \mathrm{Var}[B]
-2\,\mathrm{Cov}(A,B).
\]
It can also be easily to obtain
\[
\mathbb E\,[l_{\mathrm{ctsc}}]
\;\leq\;\mathrm{Bias}^2_{\text{ctsc-loss}}
+ 2\mathrm{Var}\!\big[\bm u^\theta_{t,r}(\bm x_t)\big]
+ 4(\delta_1+\delta_2)^2 \mathrm{Var}\big[\frac{d}{dt} \bm u^\theta_{t,r}(\bm x_t)\big]
+ 4\sigma_{\bm{v}_{t|0}}^2,
\]

In summary, 
\begin{align*}
    &E_{t,r} \\
    \leq&  2(\delta_1 + \delta_2) ^2\left(\mathrm{Bias}^2_{\text{ctsc-tgt}} + \mathrm{Bias}^2_{\text{ctsc-loss}} +2\mathrm{Var}\!\big[\bm u^\theta_{t,r}(\bm x_t)\big] +8(\delta_1+\delta_2)^2 \mathrm{Var}\big[\frac{d}{dt} \bm u^\theta_{t,r}(\bm x_t)\big] 
+ 8\sigma_{\bm{v}_{t|0}}^2 \right)
\end{align*}
Specifically, when $t=1, r=0$, 
\begin{align*}
    &E_{1,0} 
    \leq  2\left(\mathrm{Bias}^2_{\text{ctsc-tgt}} + \mathrm{Bias}^2_{\text{ctsc-loss}} +2\mathrm{Var}\!\big[\bm u^\theta_{t,r}(\bm x_1)\big] +8 \mathrm{Var}\big[\frac{d}{dt} \bm u^\theta_{t,r}(\bm x_t)\big] 
+ 8\sigma_{\bm{v}_{t|0}}^2 \right) \Big|_{r=0,t=1}
\end{align*}

\paragraph{Step 2. Upper bound for CT.}
Then, let's still consider 
\[E_{t,r} = \mathbb{E}[\|X_{t,r}(\bm{x}_t) - X_{t,r}^{\theta} (\bm{x}_t) \|_2^2],
\]
by setting $Y_{\mathrm{ct}} \;=\frac{1}{\delta_1+\delta_2}\left(\; \delta_1\,\bm v_{t|0}(\bm x_t| \bm x_0)\;+\;\delta_2\,\bm u^\theta_{s,r}\!\big(\bm x_t+\delta_1\,\bm v_{t|0}(\bm x_t| \bm x_0)\big)\right)$, which equals to
\begin{align*}
&\mathbb{E}[\|(r-t)\bm{u}_{t,r}(\bm{x}_t) - (r-t)\bm{u}_{t,r}^{\theta} (\bm{x}_t) \|_2^2] \\
 =&   \mathbb{E}[\|(r-t)\bm{u}_{t,r}(\bm{x}_t) - (r-t)\bm{u}_{t,r}^{\theta} (\bm{x}_t) -(r-t)Y_{\text{ct}} +(r-t)Y_{\text{ct}}\|_2^2] \\
 \leq& 2(\delta_1 + \delta_2) ^2 \mathbb{E}\|\bm{u}_{t,r}(\bm{x}_t) - Y_{\text{ct}} \|_2^2 +  2(\delta_1 + \delta_2) ^2 \mathbb{E}\|\bm{u}^\theta_{t,r}(\bm{x}_t) - Y_{\text{ct}} \|^2_2
\end{align*}
Firstly, set
\[
A \;=\; \bm u_{t,r}(\bm x_t),\qquad
B \;=\; Y_{\mathrm{ct}} \;=\; \frac{1}{\delta_1 + \delta_2} \big( \delta_1\,\bm v_{t|0}(\bm x_t| \bm x_0)\;+\;\delta_2\,\bm u^\theta_{s,r}\!\big(\bm x_t+\delta_1\,\bm v_{t|0}(\bm x_t| \bm x_0)\big) \big).
\]
By Cauchy-Schwarz,
\(
-2\,\mathrm{Cov}(A,B)
\)
is again absorbed into the $\lesssim$ notation. So we have
\begin{align}
&\mathbb E\,[\|A-B\|_2^2]\\
\;\leq\;& \mathrm{Bias}^2_{\text{ct-tgt}}
+ 2\mathrm{Var}[\bm{u}_{t,r}]+ 2\mathrm{Var}[Y_{\text{ct}}]
\\
    =&\mathrm{Bias}^2_{\text{ct-tgt}} + 2\mathrm{Var}[Y_{\text{ct}}],
\end{align}

For $\mathrm{Var}[Y_{\text{ct}}]$, expand the second term as
\[
\bm u^\theta_{s,r}\!\big(\bm x_t+\delta_1\,\bm v_{t|0}\big)
= \bm u^\theta_{s,r}(\bm x_t)\;+\;\Big(\bm u^\theta_{s,r}(\bm x_t+\delta_1\,\bm v_{t|0})-\bm u^\theta_{s,r}(\bm x_t)\Big).
\]
Apply Lemma~\ref{lem:shift-var-lb} with 
\(f(\bm x)=\bm u^\theta_{s,r}(\bm x)\) and 
\(W=\bm v_{t|0}(\bm x_t| \bm x_0)\). 
Conditioning on $\bm x_t$ and using the Lipschitz upper constant \(\ell>0\), we obtain
\[
\mathrm{Var}\!\Big(\bm u^\theta_{s,r}(\bm x_t+\delta_1\,\bm v_{t|0})-\bm u^\theta_{s,r}(\bm x_t)\,\Big|\bm x_t\Big)
\ \leq \ \ell^2 \,\delta_1^2\,\mathrm{Var}\!\big(\bm v_{t|0}(\bm x_t| \bm x_0)\,\big|\,\bm x_t\big).
\]
Taking expectation in $\bm x_t$ yields
\[
\mathrm{Var}\!\Big[\bm u^\theta_{s,r}(\bm x_t+\delta_1\,\bm v_{t|0})-\bm u^\theta_{s,r}(\bm x_t)\Big]
\leq \ \ell^2\,\delta_1^2\,\sigma_{\bm{v}_{t|0}}^2.
\]

Therefore,
\[
\mathrm{Var}[Y_{\text{ct}}]
\ \leq \ \frac{1}{(\delta_1 + \delta_2)^2}
\Big( 2\delta_1^2\,\sigma_{\bm{v}_{t|0}}^2
\;+\;
2\delta_2^2\,\mathrm{Var}\!\big[\bm u^\theta_{s,r}(\bm x_t)\big]
\;+\;
2l^2\,\delta_1^2\,\delta_2^2\,\sigma_{\bm{v}_{t|0}}^2\Big).
\]
Hence, we get
\[
\mathbb{E}\|\bm{u}_{t,r}(\bm{x}_t) - Y_{\text{ct}} \|_2^2
\leq \,
\mathrm{Bias}^2_{\text{ct-tgt}}
+ \frac{4}{(\delta_1 + \delta_2)^2}
\Big( 
\delta_2^2\,\mathrm{Var}\!\big[\bm u^\theta_{s,r}(\bm x_t)\big]
\;+\;
(1+\ell^2\delta_2^2)\delta_1^2\,\sigma_{\bm{v}_{t|0}}^2\Big).
\]

Secondly, set
\[
A \;=\; \bm u^\theta_{t,r}(\bm x_t),\qquad
B \;=\; Y_{\mathrm{ct}} \;=\; \frac{1}{\delta_1 + \delta_2} \big( \delta_1\,\bm v_{t|0}(\bm x_t| \bm x_0)\;+\;\delta_2\,\bm u^\theta_{s,r}\!\big(\bm x_t+\delta_1\,\bm v_{t|0}(\bm x_t| \bm x_0)\big) \big).
\]
Then $ \mathbb{E}\|\bm{u}^\theta_{t,r}(\bm{x}_t) - Y_{\text{ct}} \|^2_2 = \mathbb{E}[l_{\text{ct}}]$.
Easily following the above derivation, we can obtain
\[
\mathbb E[l_{\text{ct-loss}}]
\leq \mathrm{Bias}^2_{\text{ct-loss}}
+ 2\mathrm{Var}\big[\bm u^\theta_{t,r}(\bm x_t)\big]
+ \frac{4}{(\delta_1 + \delta_2)^2}
\Big( 
\delta_2^2\,\mathrm{Var}\!\big[\bm u^\theta_{s,r}(\bm x_t)\big]
+(1+\ell^2\delta_2^2)\delta_1^2\,\sigma_{\bm{v}_{t|0}}^2\Big).
\]
To sum up, we have
\begin{align*}
&E_{t,r} \leq 2(\delta_1 + \delta_2)^2 \Big( \mathrm{Bias}^2_{\text{ct-tgt}} + \mathrm{Bias}^2_{\text{ct-loss}}
+ 2\mathrm{Var}\big[\bm u^\theta_{t,r}(\bm x_t)\big] \Big. \\
&\qquad\qquad\qquad\qquad \Big.
+ \frac{8}{(\delta_1 + \delta_2)^2}
\Big( 
\delta_2^2\,\mathrm{Var}\!\big[\bm u^\theta_{s,r}(\bm x_t)\big]
+(1+\ell^2\delta_2^2)\delta_1^2\,\sigma_{\bm{v}_{t|0}}^2\Big)\Big).
\end{align*}
When $t=0, r=1$, the inequality becomes
\begin{align*}
&E_{1,0} \leq 2\Big( \mathrm{Bias}^2_{\text{ct-tgt}} + \mathrm{Bias}^2_{\text{ct-loss}}
+ 2\mathrm{Var}\big[\bm u^\theta_{t,r}(\bm x_t)\big] \Big. \\
&\qquad\qquad \Big.
+ 8\delta_2^2\,\mathrm{Var}\big[\bm u^\theta_{s,r}(\bm x_t)\big]
+8(1+\ell^2\delta_2^2)\delta_1^2\,\sigma_{\bm{v}_{t|0}}^2\Big)\Big|_{r=0,t=1}.
\end{align*}
\paragraph{Step 3. Upper bound for SCD.}
In this case, consider 
\[E_{t,r} = \mathbb{E}[\|X_{t,r}(\bm{x}_t) - X_{t,r}^{\theta} (\bm{x}_t) \|_2^2],
\]
by setting $Y_{\mathrm{scd}} \;=\frac{1}{\delta_1+\delta_2}\left(\; \delta_1\,\bm{u}^\theta_{t,s}(\bm{x}_t)\;+\;\delta_2\,\bm u^\theta_{s,r}\!\big(\bm x_t+\delta_1\,\bm u^\theta_{t,s}(\bm x_t)\big)\right)$, which equals to
\begin{align*}
&\mathbb{E}[\|(r-t)\bm{u}_{t,r}(\bm{x}_t) - (r-t)\bm{u}_{t,r}^{\theta} (\bm{x}_t) \|_2^2] \\
 =&   \mathbb{E}[\|(r-t)\bm{u}_{t,r}(\bm{x}_t) - (r-t)\bm{u}_{t,r}^{\theta} (\bm{x}_t) -(r-t)Y_{\text{scd}} +(r-t)Y_{\text{scd}}\|_2^2] \\
 \leq& 2(\delta_1 + \delta_2) ^2 \mathbb{E}\|\bm{u}_{t,r}(\bm{x}_t) - Y_{\text{scd}} \|_2^2 +  2(\delta_1 + \delta_2) ^2 \mathbb{E}\|\bm{u}^\theta_{t,r}(\bm{x}_t) - Y_{\text{scd}} \|^2_2
\end{align*}

We can find that the only difference between SCD and CT is the second term in $Y_{\mathrm{scd}}$. So we only need to analyze this term:
\[
\bm u^\theta_{s,r}\!\big(\bm x_t+\delta_1\,\bm u^\theta_{t,s}(\bm x_t)\big)
= \bm u^\theta_{s,r}(\bm x_t)\;+\;
\Big(\bm u^\theta_{s,r}(\bm x_t+\delta_1\,\bm u^\theta_{t,s}(\bm x_t)) - \bm u^\theta_{s,r}(\bm x_t)\Big).
\]
% By Assumption~\ref{assump:variance}, 
% \(
% \mathrm{tr}\,\mathrm{Cov}\!\big[\bm u^\theta(\bm x_t,s,r)\big]
% \asymp 
% \mathrm{tr}\,\mathrm{Cov}\!\big[\bm u^\theta(\bm x_t,t,s)\big].
% \)
By Assumption~\ref{assump:lipschitz} (Lipschitz continuity), this term contributes a variance of order 
\(\ell^2\,\delta_1^2\,\mathrm{Var}[\bm u^\theta_{t,s}(\bm x_t)]\).
Hence,
\[
\mathrm{Var}[Y_{\mathrm{scd}}]
\;\leq\; \delta_1^2\,\mathrm{Var}\!\big[\bm u^\theta_{t,s}(\bm x_t)\big]
\;+\;\delta_2^2\,\mathrm{Var}\!\big[\bm u^\theta_{s,r}(\bm x_t)\big]
\;+\;\ell^2\,\delta_1^2\delta_2^2\,\mathrm{Var}\!\big[\bm u^\theta_{t,s}(\bm x_t)\big].
\]

Similar to the derivation of CT, we can get
\begin{align*}
&E_{t,r} \leq 2(\delta_1 + \delta_2)^2 \Big( \mathrm{Bias}^2_{\text{scd-tgt}} + \mathrm{Bias}^2_{\text{scd-loss}}
+ 2\mathrm{Var}\big[\bm u^\theta_{t,r}(\bm x_t)\big] \Big. \\
&\qquad\qquad\qquad\qquad \Big.
+ \frac{8}{(\delta_1 + \delta_2)^2}
\Big( 
\delta_2^2\,\mathrm{Var}\!\big[\bm u^\theta_{s,r}(\bm x_t)\big]
+(1+\ell^2\delta_2^2)\delta_1^2\mathrm{Var}\!\big[\bm u^\theta_{t,s}(\bm x_t)\big]\Big)\Big).
\end{align*}
When $t=0, r=1$, the inequality becomes
\begin{align*}
&E_{1,0} \leq 2\Big( \mathrm{Bias}^2_{\text{scd-tgt}} + \mathrm{Bias}^2_{\text{scd-loss}}
+ 2\mathrm{Var}\big[\bm u^\theta_{t,r}(\bm x_t)\big] \Big.\\
&\qquad\qquad \Big.
+ 8\delta_2^2\,\mathrm{Var}\big[\bm u^\theta_{s,r}(\bm x_t)\big]
+8(1+\ell^2\delta_2^2)\delta_1^2\mathrm{Var}\!\big[\bm u^\theta_{t,s}(\bm x_t)\big]\Big)\Big|_{r=0,t=1}.
\end{align*}

\paragraph{Step 4. Conclusion}
Since 
\begin{align*}
    W_2^2(p_0, p_0^\theta) &\leq \mathbb{E}\Vert X_{1,0}(\bm x_1) - X_{1,0}^\theta(\bm x_1) \Vert_2^2  = E_{1,0}
\end{align*} 
The proposition is proved.

\subsection{Proof of Ideal Velocity and its Bias-Variance Analysis (Prop.~\ref{thm:idealvel})}
\label{app:thmidealvel}

\subsubsection{The Form of Ideal Velocity}
\begin{proof}
By definition, 
    \begin{align*}
        \bm{v}_t = \int \bm{v}_t(\bm{x}_t | \bm{x}_0) \frac{p_t(\bm{x}_t | \bm{x}_0)}{p_t(\bm{x}_t)} p_0(\bm{x}_0) d\bm x_0
    \end{align*}
We aim to rewrite
\begin{equation}
\bm v_t(\bm x_t)
= \int \bm v_t(\bm x_t \mid \bm x_0) 
\frac{p_t(\bm x_t \mid \bm x_0)\,p_0(\bm x_0)}{p_t(\bm x_t)} \, d\bm x_0
= \mathbb{E}_{p(\bm x_0 \mid \bm x_t)}\!\left[ \bm v_t(\bm x_t \mid \bm x_0) \right].
\end{equation}

The forward (noising) process is linear Gaussian:
\begin{equation}
\bm x_t = \alpha_t \bm x_0 + \sigma_t \bm\varepsilon,
\qquad \bm\varepsilon \sim \mathcal N(\bm 0, \bm I),
\end{equation}
so that
\begin{equation}
p_t(\bm x_t \mid \bm x_0)
= \mathcal N\!\left(\bm x_t;\, \alpha_t \bm x_0,\, \sigma_t^2 \bm I\right).
\end{equation}

Assume the conditional velocity has the form
\begin{equation}
\bm v_t(\bm x_t \mid \bm x_0)
= \hat\alpha_t \bm x_0 + \hat\sigma_t \bm\varepsilon.
\end{equation}
Since
\(
\bm\varepsilon = (\bm x_t - \alpha_t \bm x_0)/\sigma_t,
\)
we can eliminate the noise and write
\begin{align}
\bm v_t(\bm x_t \mid \bm x_0)
&= \tfrac{\hat\sigma_t}{\sigma_t}\, \bm x_t
+ \Bigl(\hat\alpha_t - \tfrac{\hat\sigma_t \alpha_t}{\sigma_t}\Bigr)\, \bm x_0 \\
&\triangleq a_t \bm x_t + b_t \bm x_0.
\end{align}

Suppose the prior $p_0$ is empirical:
\begin{equation}
p_{{0}}(\bm{y}) = \frac{1}{N}\sum_{i=1}^{N} \mathds{1}_{\bm{y}_i}(\bm{y}).
\end{equation}
Then the marginal and posterior are finite mixtures:
\begin{align}
p_t(\bm x_t)
&= \tfrac{1}{N}\sum_{i=1}^N 
\mathcal N\!\left(\bm x_t;\, \alpha_t \bm y_i,\, \sigma_t^2 \bm I\right), \\
p(\bm x_0=\bm y_i \mid \bm x_t)
&= \frac{w_i(\bm x_t)}{\sum_{j=1}^N w_j(\bm x_t)},
\qquad
w_i(\bm x_t) \triangleq 
\mathcal N\!\left(\bm x_t;\, \alpha_t \bm y_i,\, \sigma_t^2 \bm I\right).
\end{align}

Taking the expectation of the linear form yields
\begin{equation}
\bm v_t(\bm x_t)
= a_t \bm x_t + b_t\,\mathbb{E}[\bm x_0 \mid \bm x_t].
\end{equation}
From Bayes’ rule,
\begin{equation}
\mathbb{E}[\bm x_0 \mid \bm x_t]
= \sum_{i=1}^N \pi_i(\bm x_t)\,\bm y_i,
\qquad
\pi_i(\bm x_t) = \frac{w_i(\bm x_t)}{\sum_{j=1}^N w_j(\bm x_t)}.
\end{equation}
In conclusion, under the empirical prior, 
$\bm v_t(\bm x_t)$ is obtained as a posterior-weighted average of the 
conditional velocities associated with each training sample $\bm y_i$:
\begin{equation}
\bm v^*_t(\bm x_t) 
= \sum_{i=1}^N p_0(\bm x_0=\bm y_i \mid \bm x_t)\,
\bm v_t(\bm x_t \mid \bm y_i)
.
\end{equation}

\end{proof}

\subsubsection{The Bias and Variance of Plug-in Velocity}
\label{app:bvanalysisplugin}

Under mild assumptions and with the Bias-Variance Decomposition, we can analyze $\mathbb{E}\|\bm v^*_t-\bm v_t\|^2$, which consists of the bias term and variance term. The proposition below shows that although there is an increase in bias of order $\mathcal{O}(1/N)$, the variance is significantly reduced by $\mathcal{O}(1-1/N)$.

\begin{proposition}[Bias-Variance Decomposition of Ideal Velocity]~\label{app:thmvelbv} Assume there are the empirical distribution $p_\text{emp}$ on any $\{\bm y^{(i)}\}_{i=1}^N$ and the data distribution $p_0$ has the finite normalization constant
\[
Z(p_\text{emp}\mid\{\bm y\}_{i=1}^N), Z(p_0) \ge z_0 > 0.
\]
Suppose $\exists M_1,M_2>0, s>\frac{d}{2}, s.t.\|\bm v_{t}(\bm x_t \mid \bm x_0)\|_{\mathcal{C}^s}\leq M_1$, and $\|\bm{v}_t(\bm{x}_t)\| \leq M_2$.
Then we have
\begin{align*}
\mathbb{E}\|\bm v^*_t-\bm v_t\|^2
\leq C\left(\frac{M_1^2+2M_2^2}{N} + \frac{4\operatorname{Var}\left[ \bm{v}_{t}(\bm{x}_t|\bm x_0) \right] }{N} \right).
\end{align*}
\end{proposition}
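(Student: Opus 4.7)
The approach is to derive an exact algebraic identity that recasts $\bm v^*_t-\bm v_t$ as a self-normalized sum of i.i.d.\ zero-mean random vectors, and then apply an elementary second-moment identity together with the uniform lower bound $\bar G_N := \tfrac{1}{N}\sum_{i} p_t(\bm x_t\mid\bm y^{(i)})\ge z_0$ supplied by the hypothesis $Z(p_\mathrm{emp}\mid\{\bm y^{(i)}\})\ge z_0$. The central identity is
\[
\bm v^*_t-\bm v_t \;=\; \frac{1}{N\,\bar G_N}\sum_{i=1}^{N} p_t(\bm x_t\mid\bm y^{(i)})\bigl(\bm v_t(\bm x_t\mid\bm y^{(i)})-\bm v_t(\bm x_t)\bigr),
\]
obtained by direct rearrangement of $\bar F_N/\bar G_N - F/G$; the summands are i.i.d.\ with mean zero because, by the definition of the marginal velocity, $\mathbb E_{p_0}[p_t(\bm x_t\mid\bm y)\bm v_t(\bm x_t\mid\bm y)]=p_t(\bm x_t)\bm v_t(\bm x_t)$.

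Using $\bar G_N\ge z_0$ to pull $1/\bar G_N^2$ out of the norm, and then invoking the i.i.d.\ second-moment identity $\mathbb E\|\sum_{i=1}^N X_i\|^2 = N\,\mathbb E\|X_1\|^2$ for zero-mean independent $X_i$, I reduce the mean squared error to
\[
\mathbb E\|\bm v^*_t-\bm v_t\|^2 \;\le\; \frac{1}{z_0^2\, N}\,\mathbb E\bigl[p_t(\bm x_t\mid\bm y)^2\,\|\bm v_t(\bm x_t\mid\bm y)-\bm v_t(\bm x_t)\|^2\bigr].
\]
The hypothesis $\|\bm v_t(\cdot\mid\bm x_0)\|_{\mathcal C^s}\le M_1$ with $s>d/2$ supplies, via the embedding into $L^\infty$, a uniform upper bound on the Gaussian factor $p_t(\bm x_t\mid\bm y)$, while the standard bias/variance decomposition $\mathbb E\|\bm v_t(\bm x_t\mid\bm y)-\bm v_t(\bm x_t)\|^2 = \operatorname{Var}[\bm v_t(\bm x_t\mid\bm y)] + \|\mathbb E\bm v_t(\bm x_t\mid\bm y)-\bm v_t(\bm x_t)\|^2$ splits the remaining expectation into a variance piece, which yields the $4\operatorname{Var}[\bm v_t(\bm x_t\mid\bm x_0)]/N$ term, and a squared-bias piece bounded by $(M_1+M_2)^2\le 2M_1^2+2M_2^2$ using $\|\bm v_t\|\le M_2$, producing the $(M_1^2+2M_2^2)/N$ part of the claim.

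The hard part will be carrying out the sup-norm control quantitatively: the $\mathcal C^s$ hypothesis is imposed on the conditional velocity $\bm v_t(\cdot\mid\bm x_0)$, whereas the quantity I need to bound uniformly is the density $p_t(\bm x_t\mid\bm y)$. The link between them is the explicit Gaussian form and the representation $\bm v_t(\bm x_t\mid\bm y)=\dot\alpha_t\bm y+\dot\sigma_t(\bm x_t-\alpha_t\bm y)/\sigma_t$, so I must verify that the embedding (which is where $s>d/2$ enters) is applied in the correct variable to extract the desired ceiling without degrading the $\mathcal O(1/N)$ rate. Once this is settled, the remaining work is elementary: extract the absolute constant $C$ (depending on $z_0$ and the embedding constant), collect the two summands, and take the outer expectation over $\bm x_t$.
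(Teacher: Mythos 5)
Your proposal is correct, and it takes a genuinely different route from the paper. The paper first applies the bias--variance--covariance decomposition to $\mathbb{E}\|\bm v_t-\bm v^*_t\|^2$, then bounds the bias term $\|\mathbb{E}\bm v^*_t-\mathbb{E}\bm v_t\|$ by $M_1\,\mathbb{E}\|p_{\mathrm{emp}}-p_0\|_{C_1^s}$ and invokes the empirical-measure convergence rate $C/\sqrt{N}$ of Kloeckner (this is the only place where $s>d/2$ and the $\mathcal C^s$ norm are genuinely used), and separately bounds $\operatorname{Var}[\bm v^*_t]$ by expanding the normalized sum term by term, which yields the $\tfrac{C'}{N}(\operatorname{Var}[\bm v_{t|0}]+M_2^2)$ contribution. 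Your exact identity $\bm v^*_t-\bm v_t=\tfrac{1}{N\bar G_N}\sum_i p_t(\bm x_t\mid\bm y^{(i)})(\bm v_t(\bm x_t\mid\bm y^{(i)})-\bm v_t(\bm x_t))$ is valid, the summands are indeed i.i.d.\ and mean-zero by the defining relation $p_t(\bm x_t)\bm v_t(\bm x_t)=\mathbb{E}_{p_0}[p_t(\bm x_t\mid\bm y)\bm v_t(\bm x_t\mid\bm y)]$, and the assumption $Z(p_{\mathrm{emp}}\mid\{\bm y^{(i)}\})\ge z_0$ for any realization lets you bound $1/\bar G_N^2\le 1/z_0^2$ almost surely before applying the second-moment identity; this collapses both the bias and the variance analysis into one step and delivers the $\mathcal O(1/N)$ rate directly. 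The step you flagged as the hard part is in fact a non-issue: the factor $p_t(\bm x_t\mid\bm y)$ is an explicit Gaussian density, uniformly bounded by $(2\pi\sigma_t^2)^{-d/2}$ with no embedding needed, and the $\mathcal C^s$ hypothesis is only used through the trivial inequality $\|\bm v_t(\cdot\mid\bm x_0)\|_\infty\le\|\bm v_t(\cdot\mid\bm x_0)\|_{\mathcal C^s}\le M_1$. Your approach therefore buys a more elementary and self-contained argument that dispenses with the external empirical-measure result and with the condition $s>d/2$ entirely, and it treats the random self-normalization $\bar G_N$ more carefully than the paper's variance expansion; the paper's approach, in exchange, isolates an explicit bias bound of order $1/\sqrt N$ in the dual H\"older norm, which is reused conceptually in Prop.~\ref{app:thmplugin}. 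Your constants ($2M_1^2+2M_2^2$ from $(M_1+M_2)^2$ versus the stated $M_1^2+2M_2^2$) differ slightly, but both are absorbed into the overall constant $C$.
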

\begin{proof}
According to Eq. \ref{eq:bvc-app}
\[
\mathbb{E}\|A-B\|^2
= \|\mu_A - \mu_B\|^2 
+ \operatorname{Var}[A]
+ \operatorname{Var}[B]
- 2\operatorname{Cov}[A,B],
\]
and the Cauchy-Schwarz inequality
\[
-2\,\mathrm{Cov}(A,B)\ \leq \mathrm{Var}[A]+\mathrm{Var}[B],
\]
we have
\[
\mathbb{E}\|\bm v_t - \bm v^*_t \|^2
\leq (\mathbb{E}\bm v_t  - \mathbb{E}\bm v^*_t )^2
+ 2\operatorname{Var}[v^*_t]
+ 2\operatorname{Var}[v_t]
\]
So we analyze the bias and variance of $\bm v^*_t$ below.

\textbf{Bias of plug-in velocity.}
\begin{align*}
|\mathbb{E}[\bm v^*] - \mathbb{E}[\bm v_t]|
&\leq \frac{1}{z_0}|\mathbb{E}[p_\text{emp}(\bm x_0)p(\bm x_t\mid\bm x_0)\bm v_{t}(\bm x_t\mid\bm x_0) 
- p_0 (\bm x_0)p(\bm x_t\mid\bm x_0)\bm v_{t}(\bm x_t\mid\bm x_0)]| \\
&\leq M_1\mathbb{E}[\|p_\text{emp} - p_0\|_{C_1^s}]
\end{align*}
where
\begin{align*}
\|\nu\|_{C_1^s}
:= \sup \Big\{ \int f\, d\nu : f \in C^s(\Omega),\; \|f\|_{C^s} \le 1 \Big\}.
\end{align*}
The previous work \citep{Kloeckner2018EmpiricalMR} has proven that
\begin{align*}
\mathbb{E}[\|p_\text{emp} - p_0\|_{C_1^s}] 
\leq \frac{C}{\sqrt{N}},
\end{align*}
so we finally get
\begin{align*}
|\mathbb{E}[\bm v^*] - \mathbb{E}[\bm v_t]|^2
\leq \frac{C M_1^2}{N}.
\end{align*}

\textbf{Variance of plug-in velocity.}
    Let \[
    Z(\bm x_t, \{\bm y^{(i)}\}_{i=1}^N, t)\coloneqq\int p(\bm{x}_t\mid\bm x_0)p_\text{emp}(\bm x_0)d\bm x=\frac{1}{N}\sum^N_{i=1}p(\bm{x}_t\mid\bm y^{(i)}).
    \] 
    Then, under the empirical distribution, we can write 
    \begin{align*}
    \bm{v}^*_t 
    &= \frac{1}{NZ(\bm x_t, \{\bm y^{(i)}\}_{i=1}^N, t)} \sum _{i=1}^{N} \bm{v}_{t}(\bm{x}_t|\bm y^{(i)}) p(\bm{x}_t|\bm y^{(i)}) \\
    \end{align*}
    which leads to the variance of $\bm{v}^*_t$ as 
    \begin{align*}
    \operatorname{Var}[\bm{v}^*_t]
    &\leq \frac{C}{N^2} \sum _{i=1}^{N} \operatorname{Var}\left[ \bm{v}_{t}(\bm{x}_t|\bm y^{(i)}) p(\bm{x}_t|\bm y^{(i)}) \right] \\
    &= \frac{C}{N^2} \sum _{i=1}^{N} \left(\mathbb{E}\left[ \bm{v}_{t}(\bm{x}_t|\bm y^{(i)})^2 p(\bm{x}_t|\bm y^{(i)})^2 \right]
    - \left(\mathbb{E}\left[ \bm{v}_{t}(\bm{x}_t|\bm y^{(i)}) p(\bm{x}_t|\bm y^{(i)}) \right]\right)^2 \right) \\
    &\leq \frac{C}{N^2} \sum _{i=1}^{N} \mathbb{E}\left[ \bm{v}_{t}(\bm{x}_t|\bm y^{(i)})^2 p(\bm{x}_t|\bm y^{(i)})^2 \right] \\
    &\leq \frac{C'}{N^2} \sum _{i=1}^{N} \mathbb{E}\left[ \bm{v}_{t}(\bm{x}_t|\bm y^{(i)})^2 \right] \\
    &\leq \frac{C'}{N^2} \sum _{i=1}^{N} \left(\operatorname{Var}\left[ \bm{v}_{t}(\bm{x}_t|\bm y^{(i)}) \right] 
    + \left(\mathbb{E}\left[ \bm{v}_{t}(\bm{x}_t|\bm y^{(i)}) \right]\right)^2\right) \\
    &= \frac{C'}{N^2} \sum _{i=1}^{N} \left(\operatorname{Var}\left[ \bm{v}_{t}(\bm{x}_t|\bm y^{(i)}) \right] 
    + \bm{v}_t(\bm{x}_t)^2 \right) \\
    &= \frac{C'}{N}\left(\operatorname{Var}\left[ \bm{v}_{t}(\bm{x}_t|\bm x_0) \right] 
    + M_2^2 \right).
    \end{align*}
\end{proof}
\subsection{The Convergence of CTSC Loss Employing Plug-in Velocity~(Sec.~\ref{sec:improvement})}
\label{app:proofplugin}

\begin{lemma} \label{lemma:weight}
Define the normalization constant as 
\[
Z(q)\coloneqq \int p(\bm{x}_t \mid \bm{y})q(\bm{y})d\bm{y}.
\]
For the weight function
\[
w_q(\bm{y}) := \frac{q(\bm{y})p(\bm{x}_t \mid \bm{y})}{Z(q)},
\]
if there are two distribution $q$ and $r$ with finite normalization constant
\[
Z(q), Z(r) \ge z_0 > 0,
\]
the following inequalities holds
\begin{align*}
&|w_q(\bm{y})-w_r(\bm{y})|
\le \Big(\frac{1}{(2\pi)^{d/2}\sigma_t^d z_0} + \frac{L}{(2\pi)^{d/2}\sigma_t^d z_0^2}\Big)W_1(q,r) \\
&|w_q(\bm{y})^2-w_r(\bm{y})^2| 
\le \Big(\frac{1}{(2\pi)^{d}\sigma_t^2d z_0^2} + \frac{L}{(2\pi)^{d}\sigma_t^2d z_0^3}\Big)W_1(q,r).
\end{align*}
\end{lemma}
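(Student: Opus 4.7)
The plan is to reduce both inequalities to a single algebraic identity and then invoke two ingredients: the uniform sup-bound on the Gaussian conditional $p(\bm{x}_t\mid\bm{y})$, and the Kantorovich--Rubinstein dual characterization of $W_1$. First I would write, by adding and subtracting $p(\bm{x}_t\mid\bm{y})\,r(\bm{y})/Z(q)$,
\begin{align*}
    w_q(\bm{y}) - w_r(\bm{y}) &= \frac{p(\bm{x}_t\mid\bm{y})\bigl(q(\bm{y})-r(\bm{y})\bigr)}{Z(q)} + \frac{r(\bm{y})\,p(\bm{x}_t\mid\bm{y})\bigl(Z(r)-Z(q)\bigr)}{Z(q)\,Z(r)},
\end{align*}
and take absolute values with the triangle inequality. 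The two resulting pieces carry, respectively, a density-difference factor and a normalizer-difference factor, each of which will absorb exactly one of the two terms in the claimed coefficient of $W_1(q,r)$.

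For the constant bookkeeping, the Gaussian sup-bound $p(\bm{x}_t\mid\bm{y}) \le (2\pi)^{-d/2}\sigma_t^{-d}$ controls the numerators and $Z(q),Z(r)\ge z_0$ controls the denominators, producing the base factors $(2\pi)^{-d/2}\sigma_t^{-d}z_0^{-1}$ and $(2\pi)^{-d/2}\sigma_t^{-d}z_0^{-2}$ on the two pieces. For the normalizer piece I would write $Z(q)-Z(r) = \int p(\bm{x}_t\mid\bm{y})\,(q-r)(d\bm{y})$ and, using the $L$-Lipschitz regularity of $p(\bm{x}_t\mid\cdot)$ in $\bm{y}$, apply Kantorovich--Rubinstein duality to conclude $|Z(q)-Z(r)|\le L\,W_1(q,r)$, which supplies the factor $L$ appearing on the second constant. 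The squared estimate then follows by factoring $|w_q(\bm{y})^2 - w_r(\bm{y})^2| = \bigl(w_q(\bm{y})+w_r(\bm{y})\bigr)\,|w_q(\bm{y})-w_r(\bm{y})|$ and multiplying the first inequality by the envelope $w_q(\bm{y})+w_r(\bm{y}) \le 2(2\pi)^{-d/2}\sigma_t^{-d}/z_0$, which contributes exactly the extra $(2\pi)^{-d/2}\sigma_t^{-d}/z_0$ factor needed to reach the declared exponents $\sigma_t^{2d}$ in the prefactor and $z_0^{2}, z_0^{3}$ in the denominators.

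The main obstacle is the first summand, which formally involves the pointwise density difference $q(\bm{y})-r(\bm{y})$, whereas $W_1(q,r)$ only controls integrals against Lipschitz test functions. The cleanest resolution, and the one I expect is intended here, is to read that term through the pairing of the signed measure $q-r$ against the smooth kernel $p(\bm{x}_t\mid\cdot)/Z(q)$ rather than literally pointwise; Kantorovich--Rubinstein duality applied to this kernel, whose Lipschitz seminorm and sup-norm are both bounded by the envelopes above, then yields exactly the first constant of the stated bound. I would make this interpretation explicit at the outset, after which the remainder is routine constant-tracking that slots directly into the second inequality via the $(w_q+w_r)$ envelope described above.
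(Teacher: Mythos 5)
Your decomposition, the Gaussian sup-bound, the Kantorovich--Rubinstein estimate $|Z(q)-Z(r)|\le L\,W_1(q,r)$, and the factoring $|w_q^2-w_r^2|=(w_q+w_r)|w_q-w_r|$ are exactly the steps of the paper's proof, so in structure you have reproduced the intended argument. The obstacle you flag in your last paragraph, however, is real and is not resolved by either you or the paper: the first summand genuinely contains the \emph{pointwise} density difference $q(\bm y)-r(\bm y)$, and the paper simply writes $|q(\bm y)-r(\bm y)|\le W_1(q,r)$ in passing, which is false in general ($W_1$ controls integrals against $1$-Lipschitz test functions, not sup-norms of densities; two mutually singular measures can have tiny $W_1$ distance and wildly different densities). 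Your proposed repair --- reading the term as the pairing of the signed measure $q-r$ against the kernel $p(\bm x_t\mid\cdot)/Z(q)$ --- is the natural way to make KR duality applicable, but it proves a different statement: an integrated bound rather than the pointwise bound $|w_q(\bm y)-w_r(\bm y)|\le C\,W_1(q,r)$ that the lemma asserts and that the downstream Proposition (convergence of the plug-in loss) actually invokes via $\sup_{\bm y}|w_q^2-w_r^2|$. So you have correctly diagnosed the gap, but the workaround does not close it for the lemma as stated; one would instead need extra regularity on $q$ and $r$ (e.g.\ both Lipschitz densities, so that $\|q-r\|_\infty$ is controlled by $W_1$ up to a dimension-dependent power) or a restructuring of the downstream argument to only ever use the integrated form.

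Two smaller bookkeeping points. Your envelope $w_q+w_r\le 2(2\pi)^{-d/2}\sigma_t^{-d}/z_0$ carries a factor of $2$ that the stated constant in the second inequality does not have (the paper drops it too); and bounding the second summand $K_t(\bm y)\,r(\bm y)\,|Z(r)-Z(q)|/(Z(q)Z(r))$ by $(2\pi)^{-d/2}\sigma_t^{-d}z_0^{-2}|Z(q)-Z(r)|$ tacitly assumes $r(\bm y)\le 1$, which holds for the empirical/discrete case the paper cares about but not for a general density. Neither affects the order of the bound in $W_1(q,r)$, but both should be stated if you want the constants to match the lemma literally.
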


\begin{proof}
First, since $p(\bm{x}_t\mid \bm{y}) = \mathcal{N}(\bm{x}_t; \bm{y}, \sigma_t^2 I)$
There exist constants $L>0$ such that
\[
0 < p(\bm{x}_t\mid \bm{y}) \le \frac{1}{(2\pi)^{d/2}\sigma_t^d}, \qquad |p(\bm{x}_t\mid \bm{y})-p(\bm{x}_t\mid \bm{y}')|\le L \|\bm{y}-\bm{y}'\|.
\]
Denote $p(\bm{x}_t\mid \bm{y})$ as $K_t(\bm{y})$. We decompose
\[
w_q(\bm{y})-w_r(\bm{y})
=K_t(\bm{y})\Big(\frac{q(\bm{y})}{Z(q)}-\frac{r(\bm{y})}{Z(r)}\Big)
=K_t(\bm{y})\Big(\frac{q(\bm{y})-r(\bm{y})}{Z(q)} + r(\bm{y})\frac{Z(r)-Z(q)}{Z(q)Z(r)}\Big).
\]
Then, by Kantorovich–Rubinstein duality,
\[
|Z(q)-Z(r)| = \Big|\int K_t\,d(q-r)\Big| \le LW_1(q,r).
\]
Also, using $Z(q),Z(r)\ge z_0$ and $K_t(\bm{y})\le \frac{1}{(2\pi)^{d/2}\sigma_t^d}$, we have
\begin{align*}
|w_q(\bm{y})-w_r(\bm{y})|
&\le \frac{1}{(2\pi)^{d/2}\sigma_t^d z_0}|q(\bm{y})-r(\bm{y})| + \frac{1}{(2\pi)^{d/2}\sigma_t^d z_0^2}|Z(q)-Z(r)| \\
&\le \Big(\frac{1}{(2\pi)^{d/2}\sigma_t^d z_0} + \frac{L}{(2\pi)^{d/2}\sigma_t^d z_0^2}\Big)W_1(q,r).
\end{align*}
Next, since $0 \le w_q(\bm{y})\le \frac{1}{\sqrt{2\pi}\sigma_t}/z_0$, we bound the squared difference:
\begin{align*}
|w_q(\bm{y})^2-w_r(\bm{y})^2| 
&= |w_q(\bm{y})-w_r(\bm{y})|(w_q(\bm{y})+w_r(\bm{y})) \\
&\le \frac{1}{(2\pi)^{d/2}\sigma_t^d z_0}\Big(\frac{1}{(2\pi)^{d/2}\sigma_t^d z_0} + \frac{L}{(2\pi)^{d/2}\sigma_t^d z_0^2}\Big)W_1(q,r) \\
&= \Big(\frac{1}{(2\pi)^{d}\sigma_t^2d z_0^2} + \frac{L}{(2\pi)^{d}\sigma_t^2d z_0^3}\Big)W_1(q,r).
\end{align*}

\end{proof}

\begin{proposition}~\label{app:thmplugin}
Denote the empirical distribution as $p_\text{emp}$, then the difference between the training loss employing plug-in velocity and marginal velocity can be bounded by the Wasserstein distance between $p_\text{emp}$ and $p_0$ as
\begin{align*}
\big|\mathcal L_{\text{plug-in}}(\theta)-\mathcal L_{\text{marginal}}(\theta)\big|
\le C W(p_\text{emp},p_0).
\end{align*}

\end{proposition}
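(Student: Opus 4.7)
The plan is to reduce $|\mathcal{L}_\text{plug-in}(\theta)-\mathcal{L}_\text{marginal}(\theta)|$ to integrals against the signed weight difference $w_{p_\text{emp}}-w_{p_0}$ and then apply Lemma~\ref{lemma:weight} to convert the pointwise bounds into a Wasserstein control. First I would write both losses in the common form
\[
\mathcal{L}_\bullet(\theta)=\mathbb{E}_{r,t,\bm{x}_t}\bigl[w(r,t)\,\|T^\theta_{t,r}(\bm{x}_t)-\bm{v}_\bullet(\bm{x}_t)\|^2\bigr],
\]
where $T^\theta_{t,r}(\bm{x}_t)\coloneqq \bm{u}^\theta_{t,r}(\bm{x}_t)+(t-r)\tfrac{d\bm{u}^\theta_{t,r}(\bm{x}_t)}{dt}$ collects the $\theta$-dependent terms of Eq.~\ref{eq:MeanFlowloss}, and $\bm{v}_\bullet$ is either the plug-in velocity $\bm{v}^*_t(\bm{x}_t\mid p_\text{emp})$ or the marginal velocity $\bm{v}_t(\bm{x}_t)=\bm{v}^*_t(\bm{x}_t\mid p_0)$. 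Expanding the square and subtracting produces a linear inner-product term $-2\langle T^\theta_{t,r},\bm{v}^*_t(\cdot\mid p_\text{emp})-\bm{v}_t\rangle$ plus a quadratic term $\|\bm{v}^*_t(\cdot\mid p_\text{emp})\|^2-\|\bm{v}_t\|^2$.

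Next I would substitute the weighted-average representation $\bm{v}^*_t(\bm{x}_t\mid q)=\int w_q(\bm{y})\,\bm{v}_t(\bm{x}_t\mid\bm{y})\,d\bm{y}$ from Prop.~\ref{thm:idealvel}, so each contribution becomes an integral of a bounded smooth function in $\bm{y}$ against either $w_{p_\text{emp}}-w_{p_0}$ (linear part) or $w_{p_\text{emp}}^2-w_{p_0}^2$ (quadratic part). Lemma~\ref{lemma:weight} already supplies pointwise Lipschitz bounds of both in terms of $W_1(p_\text{emp},p_0)$, with constants depending on $z_0$, $\sigma_t$, and the Lipschitz constant $L$ of the Gaussian kernel. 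The boundedness and $\mathcal{C}^s$ regularity assumed in Prop.~\ref{app:thmvelbv}, together with the boundedness of $w(r,t)$ and mild regularity of $\bm{u}^\theta$ inherited from the CTSC setup, keep all auxiliary integrands uniformly bounded, so taking expectation in $r,t,\bm{x}_t$ yields a linear bound in $W_1(p_\text{emp},p_0)\le W(p_\text{emp},p_0)$, and all constants collapse into the single $C$ stated in the proposition.

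The main obstacle will be the mismatch in the outer sampling measure for $\bm{x}_t$: in $\mathcal{L}_\text{plug-in}$ the state $\bm{x}_t$ is drawn via $p_t^\text{emp}$, whereas in $\mathcal{L}_\text{marginal}$ it is drawn via $p_t$. This second layer of perturbation is not captured by the weight difference alone. I would handle it by constructing an optimal $W_1$-coupling of $(p_\text{emp},p_0)$ and propagating it through the linear conditional kernel $\bm{x}_t=\alpha_t\bm{x}_0+\sigma_t\bm{\varepsilon}$; the Lipschitz regularity of the integrand in $\bm{x}_0$ (again a consequence of $\mathcal{C}^s$ with $s>d/2$) then contributes a second term of the same $W_1$-order. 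Aggregating the target-velocity and outer-measure perturbations gives the stated bound. A minor technical point is to check that $W_1\le W$ under whichever Wasserstein order the proposition tacitly uses; this follows from Jensen's inequality for any $W_p$ with $p\ge 1$, so the bound is robust to the precise choice of metric on the distributional level.
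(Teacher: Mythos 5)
Your proposal is correct in substance and rests on the same key ingredient as the paper's proof, namely Lemma~\ref{lemma:weight}: both arguments reduce the loss difference to the perturbation of the posterior weights $w_{p_\text{emp}}-w_{p_0}$ (and their squares) and then invoke the $W_1$-Lipschitz bounds of that lemma. Where you differ is in the algebraic decomposition. The paper pushes the sum over data atoms \emph{inside} the squared norm, writing $\|A-\sum_i w_i \bm v_i\|^2=\sum_i w_i^2\|A-\bm v_i\|^2$ and then comparing the two losses term-by-term via $|w_q^2-w_r^2|$; that interchange drops the cross terms $w_iw_j\langle\cdot,\cdot\rangle$ and is not an identity, so your exact expansion into a linear term $-2\langle T^\theta,\bm v^*\rangle$ plus a quadratic term is actually the cleaner route. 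One imprecision on your side: the quadratic term $\|\bm v^*_t(\cdot\mid p_\text{emp})\|^2-\|\bm v^*_t(\cdot\mid p_0)\|^2$ is a difference of \emph{bilinear} forms in $w$ (an integral against $w_q\otimes w_q$), not an integral against $w_q^2-w_r^2$; but since it factors as $\langle \bm v^*(q)-\bm v^*(r),\,\bm v^*(q)+\bm v^*(r)\rangle$, the first (linear) bound of Lemma~\ref{lemma:weight} together with the boundedness of $\bm v_{t}(\cdot\mid\bm y)$ still controls it at order $W_1$, so the plan survives. Your extra coupling step for the outer sampling measure of $\bm x_t$ addresses a mismatch the paper silently ignores; under the paper's implicit reading both losses are training losses evaluated on the same empirical outer measure, so that step is not needed there, but including it makes your bound robust to the alternative interpretation in which $\mathcal L_{\text{marginal}}$ integrates over the true $p_t$. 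Both proofs share the same remaining rigor gap: converting the pointwise bound $\sup_{\bm y}|w_q-w_r|\le CW_1$ into a bound on the integral (or $N$-term sum) over $\bm y$ requires compact support or an $L^1$/total-variation version of the lemma, and the constant $C$ absorbs this in either argument.
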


\begin{proof}
Substituting the form of plug-in velocity in Eq. \ref{eq:idealvel}, we have
\begin{align*}
\mathcal{L}_\text{plug-in} (\theta)
&= \mathbb{E} \big[ \Vert \bm u_{t,r}(\bm x_t) -(r-t)\,\frac{d}{dt} \bm u^\theta_{t,r}(\bm x_t)- \bm{v}^*_t(\bm{x}_t| \{\bm{y}^{(i)}\}_{i=1}^N) \Vert^2 \big] \\
&= \mathbb{E} \Big[ \Vert \bm u_{t,r}(\bm x_t) -(r-t)\,\frac{d}{dt} \bm u^\theta_{t,r}(\bm x_t) \Big. \\
&\qquad \Big. - \sum_{i}^{N}\frac{\mathcal{N}(\bm{x}_t;\alpha_t\bm{y}^{(i)},\sigma^2_t\bm{\mathrm{I}})}{\sum_{j}^{N}\mathcal{N}(\bm{x}_t;\alpha_t\bm{y}^{(j)},\sigma^2_t\bm{\mathrm{I}})}(\dot\alpha_t\bm{y}^{(i)} + \frac{\dot\sigma_t}{\sigma_t}(\bm{x}_t - \alpha_t \bm{y}^{(i)}) ) \Vert^2 \Big] \\
&= \mathbb{E} \Big[ \sum_{i}^{N} \Big( \Big( \frac{\mathcal{N}(\bm{x}_t;\alpha_t\bm{y}^{(i)},\sigma^2_t\bm{\mathrm{I}})}{\sum_{j}^{N}\mathcal{N}(\bm{x}_t;\alpha_t\bm{y}^{(j)},\sigma^2_t\bm{\mathrm{I}})} \Big)^2 \cdot \Big. \\
&\qquad \Big. \Vert \bm u_{t,r}(\bm x_t) -(r-t)\,\frac{d}{dt} \bm u^\theta_{t,r}(\bm x_t)- (\dot\alpha_t\bm{y}^{(i)} + \frac{\dot\sigma_t}{\sigma_t}(\bm{x}_t - \alpha_t \bm{y}^{(i)}) ) \Vert^2 \Big) \Big].
\end{align*}
We denote
\begin{align*}
&w(\bm x_t, \bm y, \{\bm{y}^{(i)}\}_{i=1}^N, t) \coloneqq \frac{\mathcal{N}(\bm{x}_t;\alpha_t\bm{y},\sigma^2_t\bm{\mathrm{I}})}{\sum_{j}^{N}\mathcal{N}(\bm{x}_t;\alpha_t\bm{y}^{(j)},\sigma^2_t\bm{\mathrm{I}})}, \\
&l(\bm x_t,\bm y, t, r, \theta) \coloneqq \Vert \bm u_{t,r}(\bm x_t) -(r-t)\,\frac{d}{dt} \bm u^\theta_{t,r}(\bm x_t)- (\dot\alpha_t\bm{y} + \frac{\dot\sigma_t}{\sigma_t}(\bm{x}_t - \alpha_t \bm{y}) ) \Vert^2,
\end{align*}
then
\begin{align*}
\mathcal{L}_\text{plug-in} (\theta)
&= \mathbb{E} \Big[ \sum_{i=1}^N \Big( w^2(\bm x_t, \bm y^{(i)}, \{\bm{y}^{(i)}\}_{i=1}^N, t) l(\bm x_t,\bm y^{(i)}, t, r, \theta) \Big) \Big].
% &= \mathbb{E}_{\bm y} \Big[ \mathbb{E} \Big[ \sum_{i=1}^N \Big( w(\bm x_t, \bm y^{(i)}, \{\bm{y}^{(i)}\}_{i=1}^N, t)^2 l(\bm x_t,\bm y^{(i)}, t, r, \theta) \Big) \Big] \Big]. 
\end{align*}
Recall that $p_\text{emp}(\bm{y}) = \frac{1}{N}\sum_{i=1}^{N} \mathds{1}_{\bm{y}_i}(\bm{y}) $, then we can obtain
\begin{align*}
w(\bm x_t, \bm y, \{\bm{y}^{(i)}\}_{i=1}^N, t) 
= w_{p_\text{emp}}(\bm x_t, \bm y).
\end{align*}
And the training loss with marginal velocity is
\begin{align*}
\mathcal{L}_\text{marginal} (\theta)
&= \mathbb{E} \Big[ \sum_{i=1}^N \Big( w^2_{p_0}(\bm x_t, \bm y) l(\bm x_t,\bm y^{(i)}, t, r, \theta) \Big) \Big].
\end{align*}

Finally, by the boundedness of $\ell$ and Lemma \ref{lemma:weight}, we get
\begin{align*}
\big|\mathcal L_{\text{plug-in}}(\theta)-\mathcal L_{\text{marginal}}(\theta)\big|
&=\mathbb E \Big[ \sum_{i=1}^N\big(w_q^2-w_r^2\big)\ell(x_t,y^{(i)},t,r,\theta)\Big] \\
&\le C \sup_y |w_q^2-w_r^2| \\
&\le C W_1(p_\text{emp},p_0).
\end{align*}

\end{proof}

\begin{remarkgray}
We point out that the Wasserstein-1 distance between the empirical distribution and the true distribution decreases as the number of data samples increases, which has been established in previous literature \citep{fournier2015rate}.
\end{remarkgray}

\section{Experimental Details}
\subsection{Details for Empirical Analysis of Fig.~\ref{fig:elucidsc}}
\label{app:elucidexpsetting}
We conduct experiments on unconditional generation on CIFAR-10, conditional generation on ImageNet-256$\times$256 with or without the classifier-free-guidance setting. We here give more details of each method's setting of implementation.
\paragraph{Uncond. CIFAR-10.} We use a unified setting with batch size as 512 and iteration number as 800k ($\sim$8000 epochs). For stability, we adopt exponential moving average~(EMA) to update the model parameters, 
with decay ratio set to either $0.99995$ or $0.9999$. We find that 0.9999 ema decay usually performs better under 800k iteration with batchsize 512.  We report results using the best-performing EMA setting.  For all the  experimental trials, we trained them with Nvidia-A100$\times8$.
The detailed hyperparameter configurations for each model are as follows:
\begin{itemize}
    \item \textbf{CT} and \textbf{CT-linear}. 
    Both variants adopt LPIPS as the loss metric, where the difference lies in the choice of time path: the former uses a cosine path while the latter employs a linear path. We set the learning rate in training to 2e-4. 
    Following the official JAX implementation,  we adopt a progressive time sampler such that the scale $K_{\min}$ is initialized at $2$ and gradually increased to a maximum of $K_{\max}=150$. 
    This implies that the interval $[\sigma_{\min}, \sigma_{\max}]$ is partitioned according to
    \[
        \Big\{ [ \sigma_{\max} + \tfrac{h}{K} \big( \sigma_{\min}^{1/\rho} - \sigma_{\max}^{1/\rho} \big) ]^{\rho} \Big\}_{ h=1}^K,
    \]
    with $\sigma_{\min}=0.002$, $\sigma_{\max}=80.0$. After the change-of-variable, a $\frac{2}{\pi}\arctan()$ is operated to scale the time from [0.002, 80] to [0,1] in cosine path, while in linear path, the sampled time is normalized by $\frac{t}{t+1} \in [0,1]$. 
    In addition, a curriculum learning strategy is introduced to regulate the evolution of $K$ with respect to the training iterations. 
    When updating the model inside $\mathrm{sg}(\cdot)$ via EMA, a decay rate $r_{\text{ema}}$ is employed to further stabilize training. In detail, at training step $j \in \{1,\dots,J\}$ with total steps $J=800$k, 
the progressive scale $K(j)$ and the corresponding EMA decay rate $r_{\text{ema}}(j)$ 
are computed as
\begin{align*}
    K(j) &= \left\lceil 
        \sqrt{\;\frac{j}{J}\big((K_{\max}+1)^2 - K_{\min}^2\big) + K_{\min}^2}
        \;-\;1
    \right\rceil + 1, \\
    r_{\text{ema}}(j) &= \exp\!\left(
        -\frac{-\log(r_{\text{ema-min}})\,K_{\min}}{K(J)}
    \right).
\end{align*}
Here $K(j)$ is lower bounded by $1$, and $r_{\text{ema}}(j)$ smoothly 
interpolates between $r_{\text{ema-min}} = 0.9$ and $1$ 
as training progresses.
    \item \textbf{SCD}. For SCD, since the official release does not include the configuration for training on CIFAR, we use the same hyperparameter settings as those used for ImageNet in the official release.  $K$ defined as the total number of steps that we divide the time interval into is set as $128$, and the $p_\text{teq} = 0.25$ as the probability of training the average velocity with instantaneous conditional velocity supervision as described in Sec.~\ref{sec:examples}. We set the learning rate in training as 1e-4.
    \item \textbf{IMM}. Unlike the summary in Table~\ref{tab:methodconclude}, IMM here employs a cosine path with an EDM preconditioner. $M$ as the group size is set as 4 and $\gamma=12$ for calculating the difference between $s$ and $t$, as its default configuration. For the grouped kernel function, it is implemented by the RBF kernel. We set the learning rate in training to 1e-4.
    \item \textbf{sCT} and \textbf{sCT-linear}. In time sampler, $P_\text{mean} = -1$ and $P_\text{std}=1.4$. Tangent warmup iteration for gradient ratio is set as 10000. We set the learning rate in training to 1e-4. Besides, the variational adaptive weighting techniques are not employed for better understanding the modularized contribution of each models, while the tangent normalization is employed in the sCT for stabler training, but not implemented in sCT-linear.
    \item \textbf{MeanFlow}. In time sampler, $P_\text{mean} = -2.0$ and $P_\text{std}=2.0$. The $p_\text{teq} = 0.25$ as the probability of training the average velocity with instantaneous conditional velocity supervision. We set the learning rate in training to 6e-4. The power for adaptive weighting is 0.75.
\end{itemize}
Moreover, for CIFAR-10, to enable a fairer comparison, we keep the models identical except for the time sampler. Specifically, we disable adaptive loss in MeanFlow, variational adaptive weighting in sCT, and tangent warmup, and instead use a squared $l_2$ loss with a learning rate of 2e-4. Under this setting, with $p_{\text{teq}}=0.25$, we obtain FID50k of 4.64 for MeanFlow and 4.81 for sCT-linear on CIFAR-10, which also validates our conclusion in the Sec.~\ref{sec:elucidating}.

\paragraph{Cond. ImageNet.}
In this setting, we include the class label as part of the network input for conditional training. Since CTs require LPIPS as their loss metric, replacing it with a squared $l_2$ loss on latents causes training to diverge. For all the  experimental trials, we trained them with Nvidia-A100$\times8$. Therefore, we do not report CTs results in the latent space. For the other models, the settings are as follows:
\begin{itemize}
    \item \textbf{SCD}. The configuration is identical to that used for CIFAR-10.
    \item \textbf{IMM}. It is implemented with a linear path in latent space. $M$ as the group size is set as 4 and $\gamma=12$.  We observed that \textbf{IMM fails to converge} (FID does not decrease to a reasonable range) on SiT-B/2 when the \(B\in\{512,1024\}\).
Convergence appears only when we increase batch size $B$ to \(2048\), at which point the model begins to generate valid images.
This phenomenon is consistent with IMM's grouped loss: with group size \(M=4\), each mini-batch provides only \(B/M\) \emph{independent group-level supervision signals} for backpropagation.
Consequently, \(B=2048\) yields \(2048/4=512\) effective signals, which seems to be a practical threshold for stable training in our setup.
Therefore, in Fig.~\ref{fig:elucidscimgcnd}, we report IMM with \(\text{bsz}=2048\); the corresponding training epochs are scaled by the grouping factor, i.e., \(4\times 240=960\) epochs, to match the effective number of parameter updates. Others are the same as the setting for CIFAR-10.
    \item \textbf{sCT} and \textbf{sCT-linear}. We use the same hyperparameter setting as for CIFAR-10, since the original paper uses a U-Net in the pixel space, we cannot use the provided official configuration.
    \item \textbf{MeanFlow}. In time sampler, $P_\text{mean} = -0.4$ and $P_\text{std}=1.0$. The $p_\text{teq} = 0.75$. We set the learning rate in training to 1e-4. The power for adaptive weighting is 1.0.
\end{itemize}
\paragraph{CFG. ImageNet.}
For one-step generation, our training setting with CFG follows MeanFlow, as it introduces a mixing scale $\kappa$ and defines the velocity under CFG as
\begin{equation}
    \bm{v}^{\text{cfg}}(\bm x_t,t \mid c)
= \omega \bm v_{t|0}(\bm{x}_t \mid c) +
		\kappa \bm{u}_{t,t}^{\text{cfg}}(\bm x_t \mid {c}) +
		(1-\omega-\kappa)\bm{u}_{t,t}^{\text{cfg}}(\bm x_t). \label{eq:uwithcfg}
\end{equation}

This satisfies the original CFG formulation with an effective guidance scale $\kappa$. 
As it is proposed to bridge the instantaneous velocity and average velocity under classifier-free guidance, applying this technique directly to sCT, which models the instantaneous velocity, is not entirely straightforward. However, for sCT-linear, since we have shown its near equivalence to MeanFlow, the CFG training technique can be directly adopted. In addition, for IMM, applying CFG requires two NFEs during inference to compute $\bm v^{\text{cfg}}$. As our focus is on one-step generation, \emph{i.e.}, 1-NFE, we therefore do not include IMM in the comparisons. 

In addition, we adopt the best hyperparameter configuration recommended in the official MeanFlow implementation with DiT-B/2 while our network is changed into SiT-B/2, \emph{i.e.}, $\omega=1.0$, $\kappa=0.5$, class-dropout$=0.1$ and CFG triggered if $t$ in $[0,1]$, while keeping all other settings identical to those used for Cond. ImageNet. As an improved variant of SiT over DiT, it leads the FID50k to 6.09, better than 6.17 as reported to the original paper.
\subsection{Details for Empirical Analysis of  Table~\ref{tab:imgnetablation}}
\label{app:imgnetablation}
Here, we adopt the exact same parameter setting as MeanFlow with SiT-B/2.

\textbf{sCM training techniques.} In addition, in ESC, we, following sCM~\citep{scm} and EDMv2~\citep{edm2}, introduce a variational weighting output head, where the output of the time embedder is passed through a linear layer to a one-dimensional scalar as the adaptive weighting function, which reads $w_\text{adpt}^\psi(t,r)$ and is then used to reweight the original loss in Eq.~\ref{eq:MeanFlowloss}. We keep the the SiT architecture blocks untouched, while architectural
improvements are orthogonal and possible. Moreover, a ratio $r_\text{grad} = \min\{\frac{\text{iter}}{K_\text{grad}}, 1\}$ for tangent warmup is implemented for mitigating some gradient spikes during training, where $K_\text{grad}$ is set as 10k, the same as sCT training. 
\begin{align*}
     l_{\mathrm{esc}}(\bm{x}_t, r, t-dt, t;\theta) =&\frac{e^{w_\text{adpt}^\psi(t,r)}}{D} \cdot w\cdot \left\|\bm{u}^\theta_{t,r}(\bm{x}_t)
-\mathrm{sg}\left(\bm{v}_{t|0} + r_\text{grad}\cdot (r-t) \frac{d\bm{u}^{\theta}_{t,r}(\bm{x}_t)}{dt}\right)  \right\|_2^2 \\ &- w_\text{adpt}^\psi(t,r), 
\end{align*}

In this way, we gives the full hyper-parameter setting for Table~\ref{tab:imgnetablation}, as conclude in left column of Table ~\ref{app:tabparamablation}. For all the experimental trials with network architecture SiT-B/2, we trained them with Nvidia-A100$\times8$. 
\begin{table}[h]
\centering
\caption{Configurations on ImageNet $256\times256$ for Table~\ref{tab:imgnetablation}, (w/-cc) means `with-class-consistent' and (w/o-cc) means  `without-class-consistent'.} \vspace{-1em}
\label{app:tabparamablation}
\resizebox{\linewidth}{!}{%
\begin{tabular}{lcccccccccc}
\toprule
\textbf{Experiment}& \multicolumn{8}{c}{Sec.~\ref{sec:improvement}} & \multicolumn{2}{c}{Sec.~\ref{sec:exp}}\\
\cmidrule(lr){2-9} \cmidrule(lr){10-11} 
\textbf{Configs} & MeanFlow        & A1 & A2 & B1 & B2 & C & D & ESC &ESC(w/-cc)&ESC(w/o-cc)\\
\midrule
\textbf{Architecture} & \multicolumn{8}{c}{{B/2}} & \multicolumn{2}{c}{{XL/2}} \\
params (M)      &   \multicolumn{8}{c}{131}   &\multicolumn{2}{c}{{676}}\\
FLOPs (G)        &  \multicolumn{8}{c}{23.1} &\multicolumn{2}{c}{119.0}\\
depth            & \multicolumn{8}{c}{12}  &\multicolumn{2}{c}{{28}}\\
hidden dim        & \multicolumn{8}{c}{768} &\multicolumn{2}{c}{{1152}}\\
heads             & \multicolumn{8}{c}{12}    &\multicolumn{2}{c}{{16}}\\
patch size        & \multicolumn{8}{c}{2$\times$2}&\multicolumn{2}{c}{{2$\times$2}}\\
\midrule
\multicolumn{9}{l}{\textbf{Training and optimization}}  &&\\
epochs            & \multicolumn{8}{c}{240} &\multicolumn{2}{c}{240}  \\
batch size         & \multicolumn{8}{c}{512} &\multicolumn{2}{c}{256}\\
dropout            & \multicolumn{8}{c}{0.0} &\multicolumn{2}{c}{0.0}\\
optimizer          & \multicolumn{8}{c}{Adam~\citep{kingma2017adammethodstochasticoptimization}} &\multicolumn{2}{c}{Adam}\\
lr schedule        & \multicolumn{8}{c}{constant} & \multicolumn{2}{c}{constant} \\
lr                 & \multicolumn{8}{c}{0.0001} &\multicolumn{2}{c}{0.0001}\\
Adam ($\beta_1,\beta_2$) & \multicolumn{8}{c}{(0.9, 0.95)}&\multicolumn{2}{c}{(0.9, 0.95)}  \\
weight decay       & \multicolumn{8}{c}{0.0} &\multicolumn{2}{c}{0.0}\\
ema decay          & \multicolumn{8}{c}{0.9999} &\multicolumn{2}{c}{0.9999} \\
\midrule
\multicolumn{9}{l}{\textbf{Time sampler}} &&\\
 $p_\text{teq}$          & \multicolumn{8}{c}{0.75}  &\multicolumn{2}{c}{0.75}\\
$(r,t)$ sampler              & \multicolumn{8}{c}{lognorm(-0.4, 1.0)} &\multicolumn{2}{c}{lognorm(-0.4, 1.0)}\\
power for adaptive weight $w$       & \multicolumn{8}{c}{1.0} &\multicolumn{2}{c}{1.0}\\
\midrule
\multicolumn{9}{l}{\textbf{CFG settings}} &&\\
  $\omega$ in Eq.~\ref{eq:uwithcfg}         &  \multicolumn{8}{c}{1.0} &\multicolumn{2}{c}{0.2}\\
  $\kappa$ in Eq.~\ref{eq:uwithcfg}          & \multicolumn{8}{c}{0.5} &\multicolumn{2}{c}{0.92}\\
  cls-cond drop  &  \multicolumn{8}{c}{0.1} &\multicolumn{2}{c}{0.1}\\
  triggered if $t$ is in     &\multicolumn{8}{c}{[0.0,1.0]} &\multicolumn{2}{c}{[0.0,0.75]}\\
\midrule
\multicolumn{9}{l}{\textbf{ESC improvments}} &&\\
  $p_\text{plug-in}$         &  0.0 &{1.0}&0.5&1.0&0.5&0.0&0.0 &0.5 &0.2& 0.2\\
  $K_\text{grad}$        & 1 &1&1&1&1&1&10k&10k &00k& 00k\\
  $K_\text{fix0}$  &   1 &1&1&1&1&20k&1&20k &20k&20k\\
  class-consistent batching &   No &No&No&Yes&Yes&No&No&Yes&No&Yes\\
  variational adaptive weighting &   No &No&No&No&No&No&Yes&Yes&Yes&Yes\\
\bottomrule
\end{tabular}%
} \vspace{-1em}
\end{table}

\subsection{Details for Scaling-up Evaluation in Sec.~\ref{sec:exp}}
\label{app:scaleup}

\paragraph{CIFAR-10.} 
We conduct class-unconditional generation experiments on CIFAR-10. 
Following the official MeanFlow setting, we adopt the Adam optimizer with a learning rate of $6\times 10^{-4}$, batch size $1024$, and momentum parameters $(\beta_1,\beta_2)=(0.9,0.999)$. 
We use a dropout rate of $0.2$, no weight decay, and an EMA decay factor of $0.99995$. 
Training is performed for 800k iterations, including a 10k warm-up phase. 
For time sampling, we draw $(r,t)\sim\mathrm{LogNorm}(-2.0,\,2.0)$, with probability $75\%$ that $r \neq t$. 
The adaptive weighting exponent is set to $p=0.75$. 
Data augmentation follows the protocol of~\cite{edm}, except that vertical flipping and rotation are disabled.  

Regarding our proposed improvements, we observed that variational adaptive weighting from EDM2 did not yield further gains and was therefore not adopted. 
Instead, we found that setting the plug-in probability $p_{\text{plug-in}} \in [0.2,0.5]$ improved training stability, although a performance gap remained. 
Moreover, we set $K_\text{fix0}=20$k and $K_\text{grad}=10$k. 
All CIFAR-10 experiments with U-Net architectures were conducted on 8 Nvidia A100 GPUs.

\paragraph{ImageNet-256$\times$256.} 
For large-scale evaluation, we adopt SiT-XL/2 as the backbone of our improved CTSC variant, denoted as ESC. 
The hyperparameters follow the default configuration recommended by MeanFlow under the CFG setting, with details provided in the right column of Table~\ref{app:tabparamablation}. 
In practice, we find that the tangent normalization technique does not further brings performance improvements in the continuous-time shortcut model with linear path in training SiT-XL/2. 
All ImageNet experiments with SiT-XL/2 were trained on 16 Nvidia A100 GPUs.

\subsection{Visualization Examples for ESC}
\label{app:vis}
We provide visualization results of ESC-generated images on ImageNet-256$\times$256 with different network architectures: SiT-B/2 (Figure~\ref{fig:imgescb}) and SiT-XL/2 (Figure~\ref{fig:imgescxl}). 
All samples are generated in a single step using the same noise initialization from the latent prior and identical class labels for classifier guidance. 
Additional CIFAR-10 examples generated by ESC at different training epochs are shown in Figure~\ref{fig:imagescifar2}.
\begin{figure*}[htb] 
\centering
\subfigure[MeanFlow-B/2]{
    \includegraphics[height=0.33\linewidth, trim=8 0 00 20, clip]{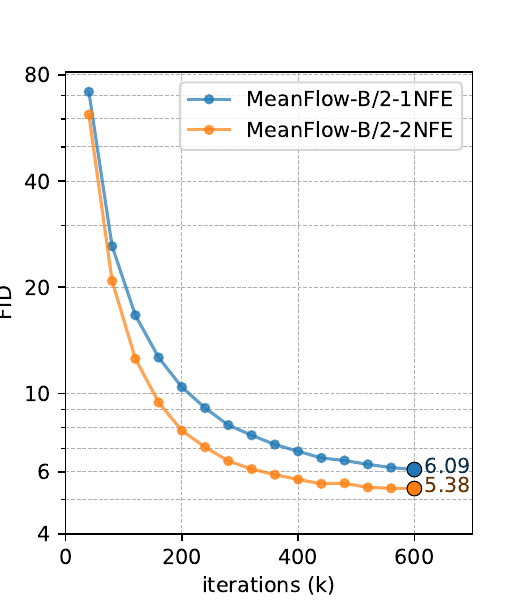}
    \label{fig:mfb2-nfe}
}
\subfigure[ESC-B/2]{
    \includegraphics[height=0.33\linewidth, trim=8 0 00 20, clip]{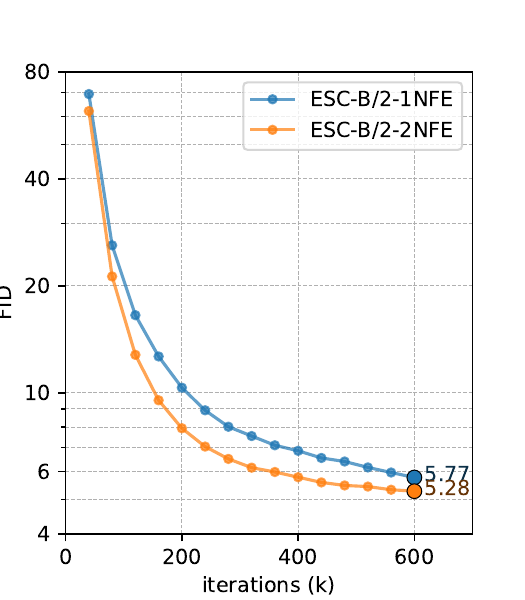}
    \label{fig:escb2-nfe}
}
\subfigure[ESC-XL/2]{
    \includegraphics[height=0.33\linewidth, trim=8 0 00 20, clip]{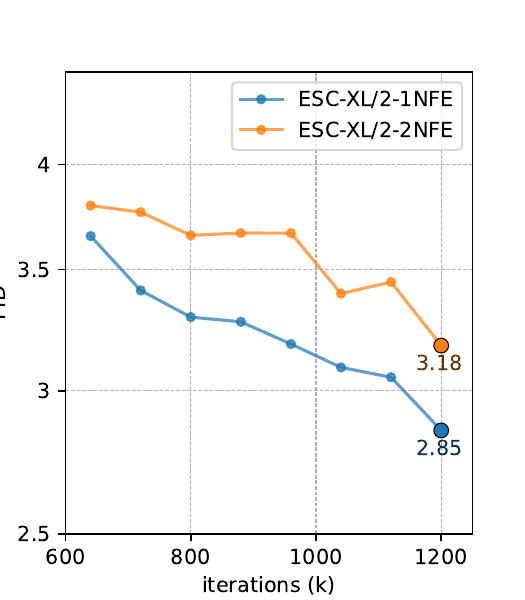}
    \label{fig:escxl2-nfe}
}
\vspace{-0.5em}
\caption{Comparison of FID50k under 1-NFE and under 2-NFE during training among different methods.}
\label{fig:2nfe}
\end{figure*}

\subsection{Algorithm for Plug-in Velocity Calculation.}~\label{app:velocity}
\begin{algorithm}[H]
\caption{Calculation of Plug-in Velocity}
\label{alg:plugin_velocity}
\begin{algorithmic}[1]
\Require Training batch $\bm{x} \in \mathbb{R}^{B \times D}$, sampled time $t$
\State Sample noise $\bm{e} \sim \mathcal{N}(0, I)$
\State Compute noised samples: $\bm{x}_t = (1 - t)\bm{x} + t\bm{e}$
\State For all sample pairs $(i,j)$ in the batch, compute
\[
\boldsymbol{\varepsilon}_{i,j} = \frac{\bm{x}_{t,j} - (1 - t)\bm{x}_i}{t}
\]
\State Evaluate log-probabilities:
\[
\log p_{i,j} = \sum_{d=1}^{D} \log \mathcal{N}({\varepsilon}_{i,j,d}; 0, 1)
\]
\State Compute normalized weights along index $i$:
\[
w_{i,j} = \frac{\exp(\log p_{i,j})}{\sum_{i'} \exp(\log p_{i',j})}
\]
\State Compute conditional velocity:
\[
\bm{v}_{\mathrm{cnd}, i,j} = \boldsymbol{\varepsilon}_{i,j} - \bm{x}_i
\]
\State Aggregate to obtain plug-in velocity:
\[
\bm{v}_{\text{plug-in}, j} = \sum_i w_{i,j} \, \bm{v}_{\mathrm{cnd}, i,j}
\]
\Ensure $\bm{v}_{\text{plug-in}} = \{\bm{v}_{\text{plug-in}, j}\}_{j=1}^B$
\end{algorithmic}
\end{algorithm}
\subsection{Full Comparison of ESC vs. other SOTA benchmarks}\label{app:fullcomp}
We further include comparisons with the current state-of-the-art diffusion and autoregressive models for completeness, as shown in Table~\ref{tab:fullcomp} for ImageNet 256$\times$256, and Table~\ref{tab:fullcompcifar} for CIFAR10.
\begin{table}[h]
\caption{Evaluation of ESC and other benchmarks under one/few-step generation on ImageNet-256$\times$256. Underline means overall the best, while bold means the best in shortcut models. } \label{tab:fullcomp} \centering
\resizebox{0.7\linewidth}{!}{
\begin{tabular}{cllrr}
\toprule
Family & Method & Param. & NFE & FID50k \\
\midrule

\multirow{3}{*}{\rotatebox{90}{\makebox[0pt][c]{GAN}}} 
& BigGAN~\citep{biggan} & 112M & 1 & 6.95 \\
& GigaGAN~\citep{gigagan} & 569M & 1 & 3.45 \\
& StyleGAN-XL~\citep{stylegan} & 166M & 1 & 2.30 \\

\midrule

\multirow{4}{*}{\rotatebox{90}{\makebox[0pt][c]{{AR/Mask}}}}
& AR w/ VQGAN~\citep{ar} & 227M & 1024 & 26.52 \\
& MaskGIT~\citep{maskgit} & 227M & 8 & 6.18 \\
& VAR-d30~\citep{var} & 2B & 10$\times$2 & 1.92 \\
& MAR-H~\citep{mar} & 943M & 256$\times$2 & 1.55 \\
\midrule

\multirow{6}{*}{\rotatebox{90}{\makebox[0pt][c]{{Diff/ Flow}}}}
& ADM~\citep{edm2} & 554M & 250$\times$2 & 10.94 \\
& LDM-4-G~\citep{stablediff} & 400M & 250$\times$2 & 3.60 \\
& SimDiff~\citep{simdiff} & 2B & 512$\times$2 & 2.77 \\
& DiT-XL/2~\citep{dit} & 675M & 250$\times$2 & 2.27 \\
& SiT-XL/2~\citep{sit} & 675M & 250$\times$2 & 2.06 \\
& SiT-XL/2+REPA~\citep{repa} & 675M & 250$\times$2 & \underline{1.42} \\
\midrule
\multirow{7}{*}{\rotatebox{90}{\makebox[0pt][c]{Shortcut}}}
& iCT~\citep{icm} & 675M & 1 & 34.24 \\
& SCD~\citep{scd} & 675M & 1 & 10.60 \\
& IMM~\citep{imm} & 675M & 1$\times$2 & 7.77 \\
& \multirow{2}{*}{MeanFlow~\citep{meanflow}} & \multirow{2}{*}{676M} & 1 & 3.43 \\
& & & 2 & 2.93 \\
& \textbf{ESC (w/o-class-consist.)} & 676M & 1 & 2.92 \\
& \textbf{ESC (w/-class-consist.)} & 676M & 1 & {2.85} \\
& \textbf{ESC+ (w/-class-consist.)} & 676M & 1 & \textbf{2.53} \\
\bottomrule
\end{tabular}
}
\vspace{-1em}
\end{table}
\begin{table}[]
\setlength{\tabcolsep}{4pt}
\centering
\small
\caption{Full comparison on unconditional generation on. CIFAR-10.
}~\label{tab:fullcompcifar}\vspace{-1.1em}
\begin{tabular}{clrr}
\toprule
Family & {method} & NFE & {FID} \\
\midrule
\multirow{3}{*}{\rotatebox{90}{\makebox[0pt][c]{{Distill}}}} 
&Diff-Instruct~\citep{diffinstruct} & 1 & 4.53 \\
&DMD~\citep{dmd} & 1 & 2.66 \\
&SID~\citep{sid} & 1 & \textbf{1.92} \\
\midrule
\multirow{6}{*}{\rotatebox{90}{\makebox[0pt][c]{{Shortcut}}}} 
&iCT~\citep{icm} &  1 & \underline{2.83} \\
&ECT~\citep{geng2025consistency} &  1 & 3.60 \\
&sCT~\citep{scm} &  1 & 2.97 \\
&IMM~\citep{imm} &  1 & 3.20 \\
&{MeanFlow}~\citep{meanflow} & 1 & 2.92 \\
&\textbf{ESC} & 1 & \underline{2.83} \\
\bottomrule
\end{tabular}
\end{table}

\subsection{Details of ESC-XL/2 convergence with and without class-consistent mini-batching}\label{app:convminibatch}
Here we give the convergence of FID with ESC-XL with or without class-consistent in the complete training process, as shown in Figure~\ref{fig:imagenetescxlfidconvclasscons}.
\begin{figure*}
    \centering
        \subfigure{
            \includegraphics[width=0.60\linewidth, trim=10 00 30 00, clip]{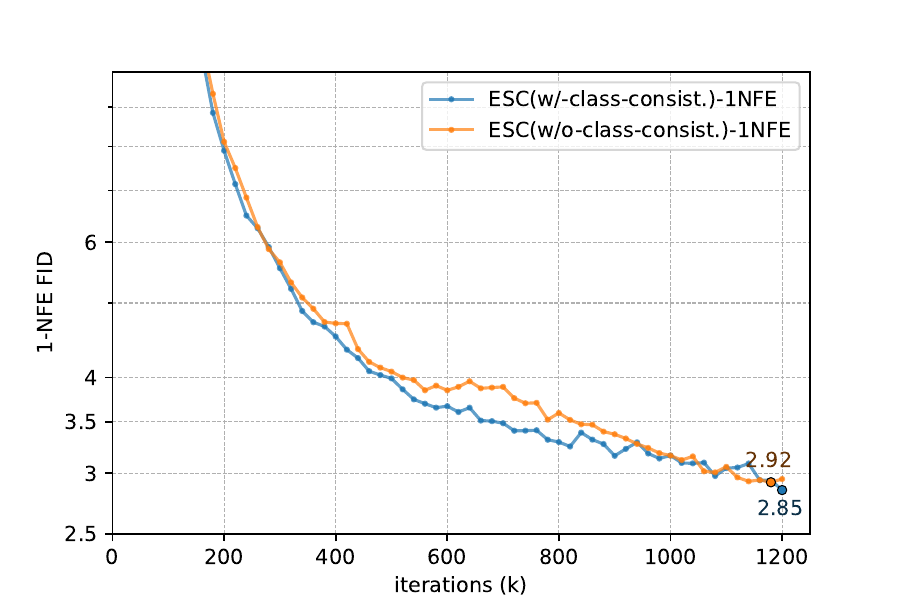}
            \label{fig:}
            }\hspace{-0.5em}
        \caption{Convergence of FID with ESC-XL with or without class-consistent.}
        \label{fig:imagenetescxlfidconvclasscons}
\end{figure*}

\section{Further Analysis}
    
\subsection{Plug-in Velocity Stabilizes the Training}\label{app:stabletrain}
To figure out whether the plug-in velocity helps to stabilize the training of shortcut models, here we give the training loss vs. iteration steps for MeanFlow and MeanFlow with Plug-in Velocity. We show the comparison of the first 200k iteration in Figure~\ref{fig:cifartrainingcomp}, where all the training setting are the same in our paper with batch size set as 512.  It further illustrates that incorporating the plug-in velocity significantly stabilizes the training of MeanFlow.
\begin{figure*}[]
    \centering
        \subfigure[MeanFlow without Plug-in Velocity (MeanFlow)]{
            \includegraphics[width=0.485\linewidth, trim=140 40 130 120, clip]{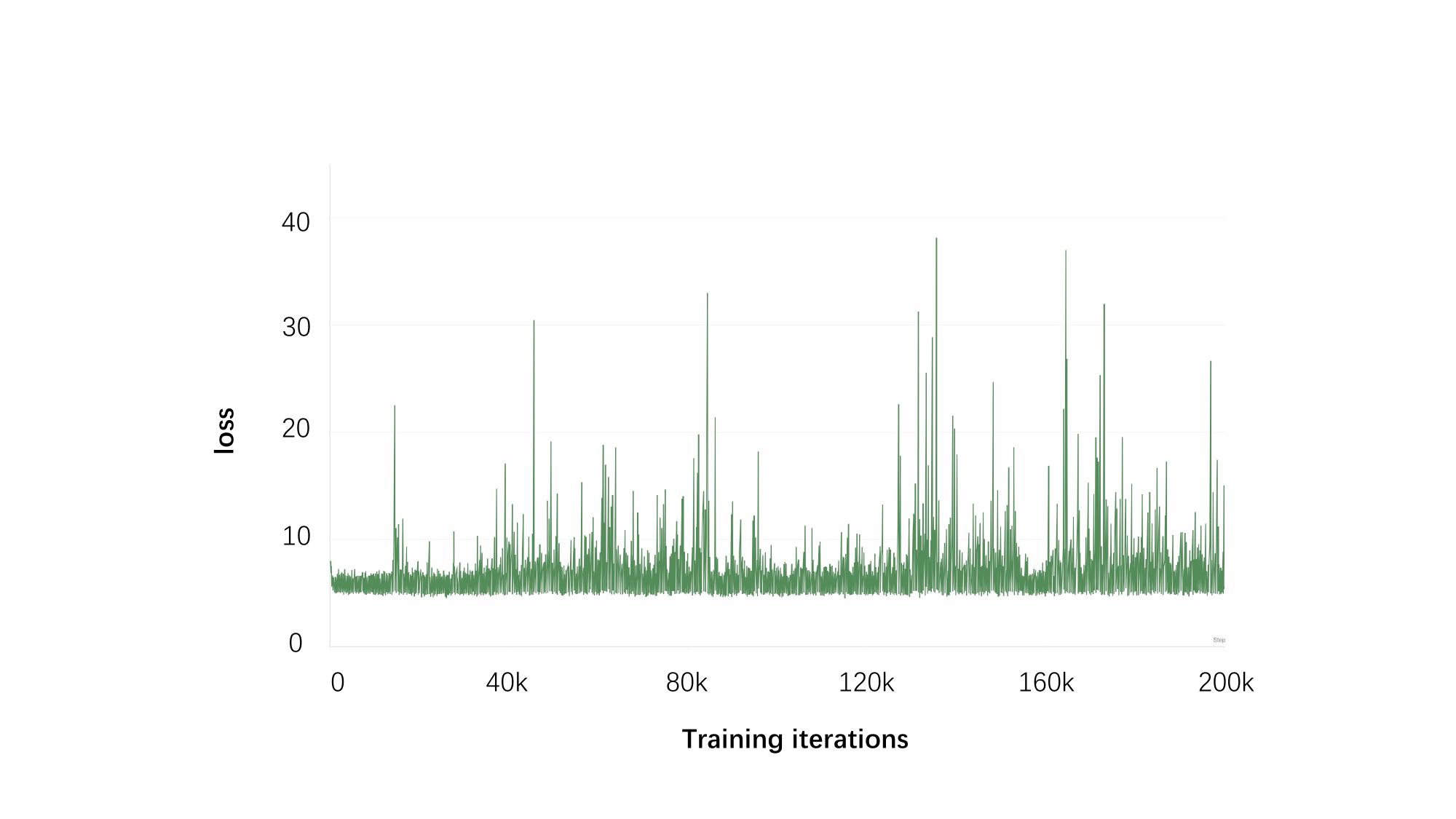}
            \label{fig:}
            }\hspace{-0.5em}
        \subfigure[MeanFlow with Plug-in Velocity (ESC)]{
            \includegraphics[width=0.485\linewidth, trim=140 40 130 120, clip]{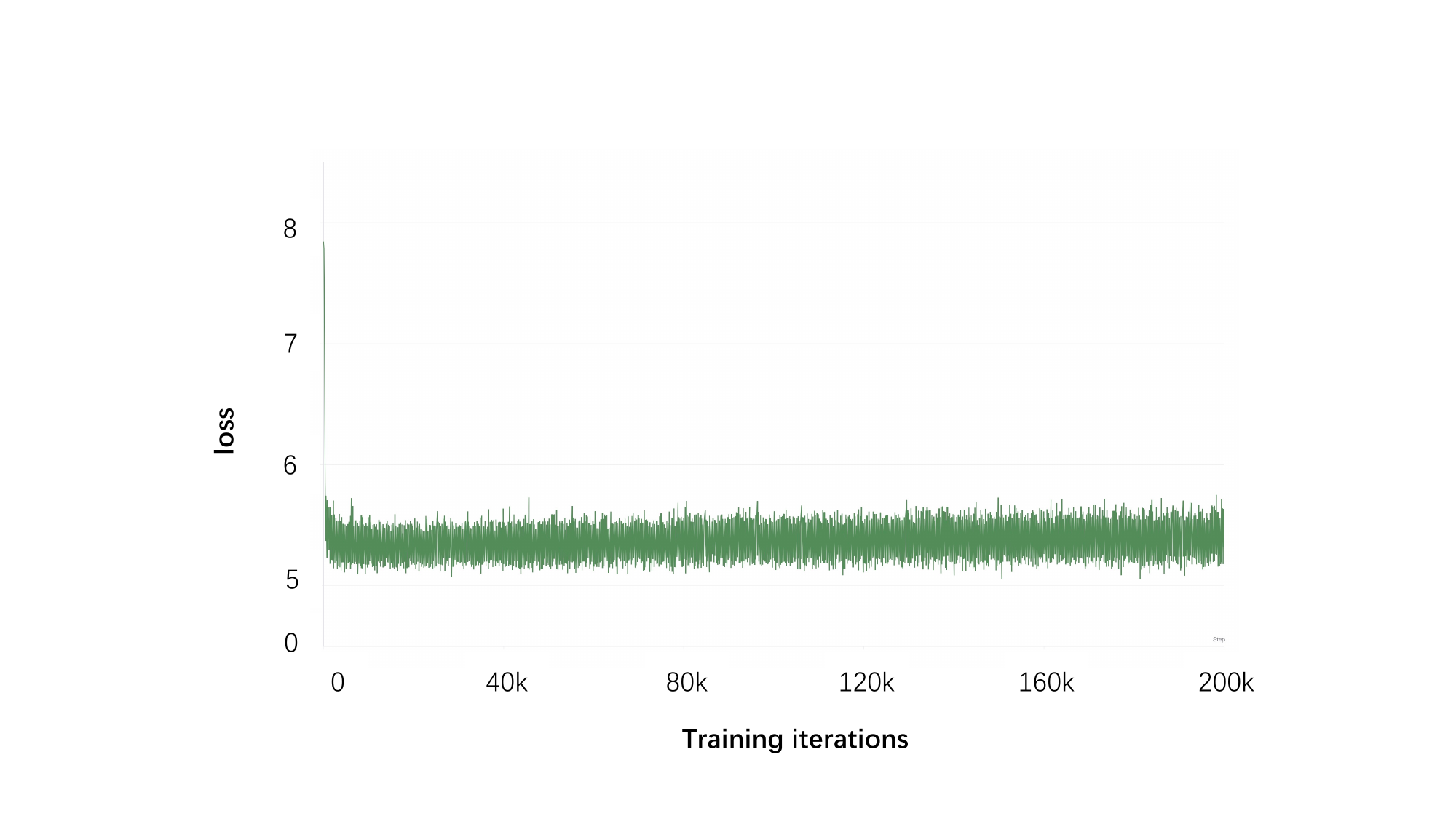}
            \label{fig:}
            }
        \caption{Stable loss fluctuation with plug-in velocity.}
        \label{fig:cifartrainingcomp}
\end{figure*}

\subsection{Large Models Gain More Performance from Low Variance Training} \label{app:xlbetterb}
As shown, performance improvement for SiT-XL/2 over the MeanFlow is 16.9\%, while it is 5.3\% for SiT-B/2 architecture. We attribute the performance gap to two key factors: 
\begin{itemize}
    \item  \textbf{Optimization Dynamics.} In larger networks (e.g., XL/2), the representational capacity increases substantially, amplifying the impact of optimization stability. As shown in Figure~\ref{fig:cifartrainingcomp}, MeanFlow exhibits higher variance and less stable loss behavior during training, whereas ESC maintains stable optimization and is therefore more likely to converge to a better solution. In smaller models (e.g., B/2), the representational capacity is nearly saturated, leaving limited room for further improvement. In contrast, for larger models, ESC’s improved stability enables it to better exploit the additional capacity, resulting in more noticeable performance gains.
    \item  \textbf{Statistical Generalization.} As the parameter space dimensionality increases, gradient noise also grows, making variance-reduction mechanisms—such as EMA, momentum, gradient clipping, or the proposed plug-in velocity—more beneficial. This observation aligns with the theoretical intuition in \cite{kaplan2020scalinglawsneurallanguage}, where the generalization gap (or overfitting) is linked to the variance term scaling. Within the scaling law framework, bias dominates in smaller models, while variance becomes the main factor as the model scales up. To illustrate this, we compare the FID convergence curves of ESC-B/2 vs. MeanFlow-B/2 (trained for 600k iterations) and ESC-XL/2 vs. MeanFlow-XL/2 (trained for 1.2M iterations), as shown in Figure~\ref{fig:imagenetmfescfidconv}. Empirically, in the smaller B/2 setting, both methods converge rapidly to similar FID values. However, in the larger XL/2 model, MeanFlow’s FID curve plateaus in the later training stages, while ESC continues to improve and reaches 2.85. This suggests that in large-scale models, variance dominates generalization behavior, and the variance reduction introduced by plug-in velocity significantly enhances final performance.
\end{itemize}
\begin{figure*}
    \centering
        \subfigure[MeanFlow vs. ESC with SiT-B/2]{
            \includegraphics[width=0.485\linewidth, trim=10 00 30 00, clip]{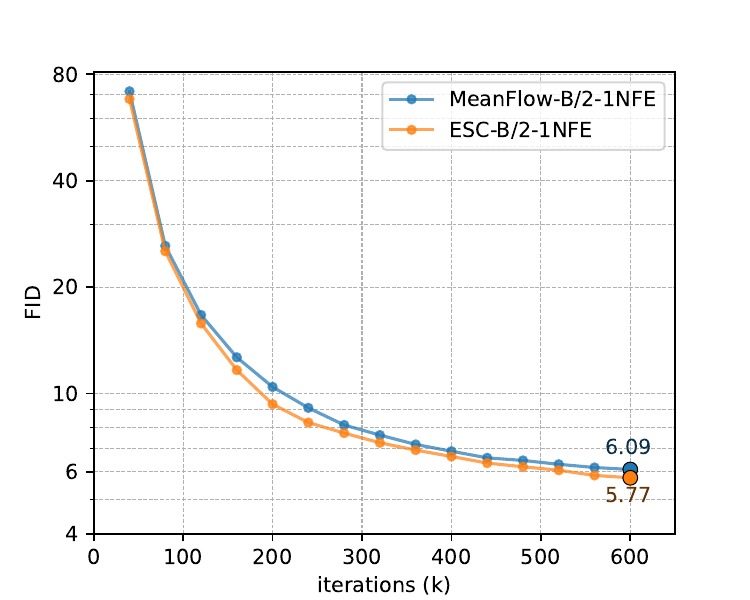}
            \label{fig:}
            }\hspace{-0.5em}
        \subfigure[MeanFlow vs. ESC with SiT-XL/2]{
            \includegraphics[width=0.485\linewidth, trim=10 00 30 00, clip]{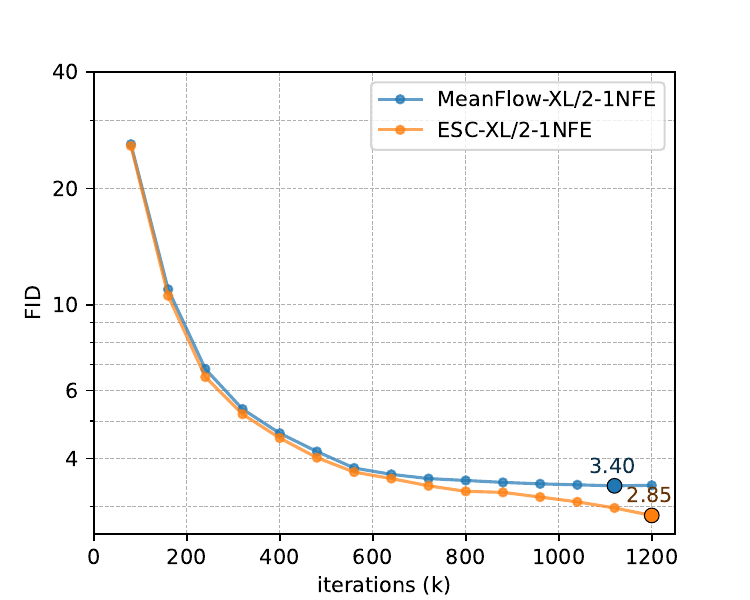}
            \label{fig:}
            }\hspace{-0.5em}
        \caption{Convergence of FID with different model architectures.}
        \label{fig:imagenetmfescfidconv}
\end{figure*}

\section{Limitations and Future Works}\label{app:limitation}
\begin{itemize}
    \item \textbf{Slow convergence in few-step generation.} An interesting phenomenon we observed is that, under the proposed improvements of ESC, employing two-step generation, i.e., 
$\bm{x}_0 = X^\theta_{0.5,0} \circ X^\theta_{1,0.5}(\bm{x}_1)$, led to slower FID convergence compared to one-step generation. This effect is particularly evident under the SiT-XL/2 architecture, whereas for B/2, the two-step scheme still achieves better performance, as shown in Fig.~\ref{fig:2nfe}. Although MeanFlow also exhibits relatively slow convergence with two-step generation, it still outperforms one-step (2.93 vs.\ 3.43). One possible explanation is that, in variational adaptive weighting, predictions from $0$ to $1$ are inherently more difficult. With the stronger expressivity of the XL/2 architecture, the training naturally allocates higher weights to $\bm u_{0,1}^\theta$, while the simpler sub-task $\bm u_{0.5,0}^\theta$ receives less weight. In contrast, for the more capacity-limited B/2 architecture, fitting the easier task like $\bm u_{0.5,0}^\theta$ proves beneficial for the overall convergence. We leave a deeper investigation of this phenomenon as future work.
    \item \textbf{Inflexibility in training with CFG.} We observe that introducing CFG leads to a relative improvement of $(33.05-6.09)/33.05 =  81.5\%$, indicating that training with CFG is essential. However, the current approach follows Eq.~\ref{eq:uwithcfg}, which inevitably introduces two additional hyperparameters, $\omega$ and $\kappa$. As shown in Table~\ref{app:tabparamablation} and in Table~4 of the original work, the optimal values of these parameters, as well as the triggered intervals, vary significantly across architectures. This greatly complicates hyperparameter tuning, and for large models such as XL/2, results in substantial computational overhead. Therefore, we argue that alternative approaches, such as \emph{representation alignment}~\citep{repa}, \emph{representation entanglement}~\citep{reen}, or RL-guided generation~\citep{zheng2025diffusionnftonlinediffusionreinforcement}, may offer promising replacements by injecting class-related semantic information into training or enabling CFG-free diffusion generation. We leave the exploration of these directions for future work. 
    \item \textbf{Approximation for fast JVP.} Since computing JVP is required, techniques such as FlashAttention cannot be directly applied in architectures like SiT. Although this does not incur a significant time overhead, it leads to substantial memory consumption. Moreover, the computation of JVP itself is relatively expensive and introduces additional memory usage. In future work, we plan to explore numerical approximations of JVP to reduce reliance on explicit differential operators. 
    \item \textbf{Generalization to downstream tasks and more models.} Our current work focuses purely on generative modeling. An important future direction is to extend the proposed framework to downstream tasks where generation is conditioned on additional modalities, such as text-to-image synthesis, image editing, or molecule design. Incorporating cross-modal alignment mechanisms and scalable conditioning strategies would allow the model to generalize beyond unconditional settings, making it applicable to a wider range of real-world scenarios. In particular, extending the framework to text-to-image generation represents a natural and promising step, enabling richer semantic control and practical applications.  Furthermore,  the proposed techniques like plug-in velocity, should be regarded as a general training technique. 
        Since our paper includes extensive modular decomposition and performance comparisons across a wide range of methods, it is difficult to perform with/without plug-in velocity evaluations for all models under limited computational resources. We will consider extending the proposed techniques for evaluation to a broader set of models as part of our future work.
\end{itemize}
\begin{figure*}[]
    \centering
        \subfigure[ 4000 epochs, FID50k=3.39 ]{
            \includegraphics[width=0.35\linewidth]{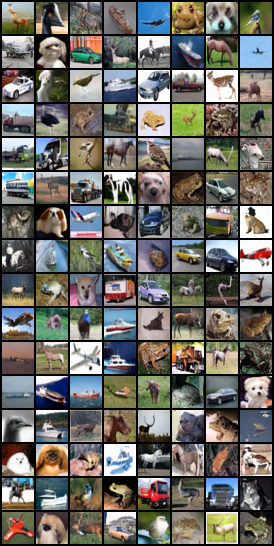}
            \label{fig:}
            }
        \subfigure[8000 epochs, FID50k=2.98]{
            \includegraphics[width=0.35\linewidth]{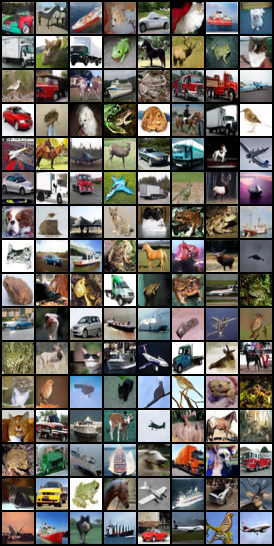}
            \label{fig:}
            }

    \centering
        \subfigure[12000 epochs, FID50k=3.93]{
            \includegraphics[width=0.35\linewidth]{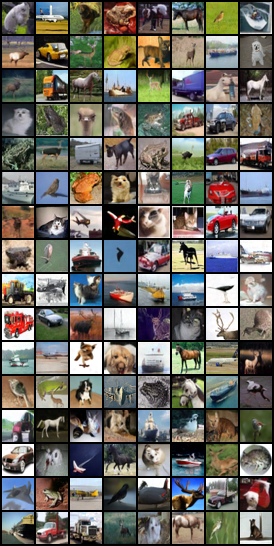}
            \label{fig:}
            }
        \subfigure[16000 epochs, FID50k=2.95]{
            \includegraphics[width=0.35\linewidth]{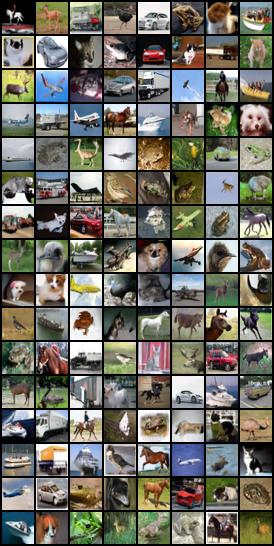}
            \label{fig:}
            }
        \caption{Images generated by ESC trained with CIFAR-10  of different epochs}
        \label{fig:imagescifar2}
\end{figure*}

\end{document}